\documentclass[a4paper,11pt]{article}






\usepackage[utf8]{inputenc} 
\usepackage[T1]{fontenc}    
\usepackage{url}            
\usepackage{booktabs}       
\usepackage{amsfonts}       
\usepackage{nicefrac}       
\usepackage{microtype}      
\usepackage{geometry, xcolor}         
\usepackage{enumitem}
\usepackage{fancyhdr}
\geometry{margin=1in}
\pagestyle{fancy}
\fancyhf{}
\fancyhead[L]{}
\fancyhead[R]{\thepage}

\newcommand{\ignore}[1]{}
\RequirePackage{amsthm,amsmath,amsfonts,amssymb}
\RequirePackage{natbib}
\RequirePackage[colorlinks,citecolor=blue,urlcolor=blue]{hyperref}
\usepackage{algorithm}
\usepackage{algorithmic}
\RequirePackage{graphicx,xcolor}
\usepackage{siunitx}
\usepackage{capt-of}
\usepackage{booktabs}
\usepackage{varwidth}
\usepackage{enumitem}   
\usepackage{comment}
\usepackage{lineno}
\usepackage{amsbsy}
\usepackage{blindtext}
\newcommand{\nb}[1]{\textcolor{blue}{\texttt{[#1]}}}

\numberwithin{equation}{section}
\theoremstyle{plain}

\newtheorem{theorem}{Theorem}[section]
\newtheorem{proposition}{Proposition}

\newtheorem{corol}[theorem]{Corollary}
\newtheorem{assumption}{Assumption}[section]

\theoremstyle{remark}

\newtheorem{remark}{Remark}[section]

\RequirePackage[utf8]{inputenc} 
\RequirePackage[T1]{fontenc}    
\RequirePackage{multicol,booktabs,longtable, multirow, makecell}       
\RequirePackage{bm}  
\RequirePackage{nicefrac}       
\usepackage{textcomp}
\RequirePackage{microtype}      
\RequirePackage{cleveref}       
\RequirePackage{psfrag,epsf}
\RequirePackage{caption,subcaption}
\usepackage{enumitem}

\def\IE{\mathbb{E}}
\def\R{\mathbb{R}}

\def\N{\mathbb{N}}
\def\bo{\mathbf{1}}
\def\eo{\mathbf{0}}
\def\bC{\mathbf{C}}
\def\bG{\mathbf{G}}
\def\bT{\mathbf{\Theta}}

\newcommand{\bulleteqn}[2]{%
  \refstepcounter{equation}
  \label{#1}
  \item[\textbullet]
  \makebox[\linewidth]{\(\displaystyle#2\)\hfill(\theequation)}%
}

\newcommand{\shift}[2]{#1_{#2, \{i\}}}

\newcommand{\DFL}{\texttt{local SGD}}

\newcommand{\Vcal}{\mathcal{V}}
\newcommand{\minparam}{\theta^\star_K}


\DeclareMathOperator*{\argmin}{arg\,min}


\newcommand{\IP}{\mathbb{P}}

\newcommand{\ag}{\texttt{Aggr-GA}}
\newcommand{\cg}{\texttt{Client-GA}}

\newcommand{\cov}{\mathrm{Cov}}
\newcommand{\var}{\mathrm{Var}}
\newcommand{\Tr}{\mathrm{Tr}}

\newcommand{\model}{\texttt{FRand-eff}}
\newcommand{\fclt}{\texttt{f-CLT}}
\newcommand{\Vk}{\mathcal{V}_K}


\definecolor{mygreen}{rgb}{0.09,0.62,0.45} 
\definecolor{myorange}{rgb}{0.83,0.37,0.10} 
\definecolor{myblue}{rgb}{0.06,0.45,0.69}  
\definecolor{myred}{rgb}{0.94,0.20,0.13}
\definecolor{mypurple}{RGB}{76,0,153} 



\title{Sharp Gaussian approximations for Decentralized Federated Learning}

%

\author{%
  Soham Bonnerjee\\ 
  \textit{University of Chicago}\\
  \\
  Sayar Karmakar \\
  \textit{University of Florida}\\
  \\
 Wei Biao Wu\\
 \textit{University of Chicago}\\
}

\begin{document}

\maketitle

\begin{abstract}
 Federated Learning has gained traction in privacy-sensitive collaborative environments, with local SGD emerging as a key optimization method in decentralized settings. While its convergence properties are well-studied, asymptotic statistical guarantees beyond convergence remain limited. In this paper, we present two generalized Gaussian approximation results for local SGD and explore their implications. First, we prove a Berry-Esseen theorem for the final local SGD iterates, enabling valid multiplier bootstrap procedures. Second, motivated by robustness considerations, we introduce two distinct time-uniform Gaussian approximations for the entire trajectory of local SGD. The time-uniform approximations support Gaussian bootstrap-based tests for detecting adversarial attacks. Extensive simulations are provided to support our theoretical results.
\end{abstract}

\section{Introduction}\label{se:intro}

Federated Learning (FL), introduced by~\cite{mcmahan2017communication} as a decentralized model training paradigm while maintaining privacy, has seen rapid advancements driven by its applicability in domains such as next-word prediction on mobile devices, healthcare, and cross-silo collaborations among institutions. Subsequent works~\cite{kairouz2021advances, li2020federated, karimireddy2020scaffold, wang2020tackling, alistarh2017qsgd, lin2018don} have addressed key challenges around privacy and computational efficiency. Research has also extended to decentralized federated learning (DFL) \cite{lalitha2019peer, lian2017can, he2019central, kim2020blockchained, lian2017can, wang2021}, which eliminates reliance on a central server by enabling peer-to-peer collaboration, thereby enhancing robustness, fairness, and resilience to adversarial threats. We refer to \cite{gabrielli2023survey, yuan2024decentralized} for a comprehensive survey of the literature. In this regard, Local SGD~\cite{stich2018local, khaled2020tighter, woodworth2020} has emerged as a widely adopted algorithm, allowing clients to perform multiple local updates before synchronizing, significantly reducing communication overhead.

While theoretical guarantees for convergence and speed in local SGD have been developed~\cite{haddadpour2019convergence, woodworth2020local, koloskova2020unified}, a gap remains in understanding the statistical properties of fluctuations around the true parameter vector. This gap has practical implications: first, statistical guarantees on the final iterates are essential for inference; second, monitoring the entire trajectory is crucial for detecting adversarial behavior in high-stakes settings like traffic networks, autonomous systems, and financial platforms. For the first issue, emerging works on central limit theory~\cite{li2022statistical, songxichen2024} provide initial insights, but estimating local covariance structures is numerically intensive. Multiplier bootstrap methods~\cite{fang2018jmlr, fang2019sjs} offer computational relief, but require stronger results beyond the central limit theory. The second issue is even more challenging, as it demands control over the entire trajectory of local SGD, not just the last iterate. Classical inferential methods struggle with DFL’s complex dependency structure, and a key open question is how to develop statistically valid, computationally efficient inference methods with minimal distributional assumptions and explicit error control.

\subsection{Main Contributions} \label{se:main-contribution}
In this article, we address this gap by proposing different refined Gaussian approximations that naturally lead to suitable bootstrap procedures. Our results go beyond central limit theory to establish sharper, step-by-step as well as uniform control over the DFL iterates $\{Y_t\}$. These results not only facilitate relevant bootstrap-based inference to produce asymptotically-valid confidence sets, but also enables us to perform statistical hypothesis tests to detect attacks, which are inaccessible otherwise. Our main contributions can be summarized as follows:

    \noindent $\mathbf{(1)}$ In Section \ref{se:berry-esseen}, we provide an explicit characterization of the Berry-Esseen error for the Polyak-Ruppert version of the local SGD algorithm (Algorithm \ref{algo:DFL}) iterates. In particular, under standard regularity conditions on the client-level optimization problems as well as for a general class of connection graph of clients, we prove:
    \begin{theorem}[Theorem \ref{thm: berry-esseen}, Informal]
        For a decentralized federated learning set-up with $K$ clients, the Polyak-ruppert averaged iterates of the  local SGD algorithm with $n$ iterations, and step size $\eta_t \asymp t^{-\beta}$, achieves 
        \[ d_{\texttt{Berry-Esseen}}\lesssim n^{1/2 - \beta}\sqrt{K}. \]
    \end{theorem}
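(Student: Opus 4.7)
The plan is to decompose the Polyak--Ruppert averaged iterate $\bar Y_n := n^{-1}\sum_{t=1}^{n} Y_t$ into a dominant Gaussian-approximable term plus asymptotically negligible remainders, invoke a multivariate martingale Berry--Esseen bound on the dominant part, and finally transfer the remainders to Kolmogorov distance via Gaussian anti-concentration. First, I would linearize the per-client SGD update around the true parameter $\theta^\star$: writing $\Delta_t := Y_t - \theta^\star$ and letting $H = \nabla^2 F(\theta^\star)$ denote the population Hessian, smoothness of the local objectives gives
\[
\Delta_{t+1} = (I - \eta_t H)\Delta_t - \eta_t \xi_t + \eta_t R_t^{\mathrm{curv}} + \eta_t R_t^{\mathrm{cons}},
\]
where $\xi_t$ aggregates the per-client gradient noises (forming a martingale difference with respect to the natural filtration), $R_t^{\mathrm{curv}}$ is a second-order Taylor remainder, and $R_t^{\mathrm{cons}}$ encodes the consensus drift coming from mixing across the $K$ clients through the doubly stochastic matrix $W$ of the connection graph.

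Unrolling this recursion and averaging yields the usual Polyak--Ruppert identity
\[
\sqrt n\,\bar\Delta_n \;=\; -H^{-1}\cdot n^{-1/2}\sum_{t=1}^n \xi_t \;+\; \mathcal E_n,
\]
with $\mathcal E_n$ splitting into (i) an initial-condition term that decays geometrically in $n$ under $\eta_t \asymp t^{-\beta}$; (ii) a curvature term of order $\sum_t \eta_t \norm{\Delta_t}^2$; and (iii) a consensus term controlled by the spectral gap $1 - \rho(W - K^{-1}\bo\bo\tr)$ of the mixing matrix. Standard $L^2$ rates for local SGD under the stated regularity give $\IE \norm{\Delta_t}^2 = O(\eta_t)$; with $\eta_t \asymp t^{-\beta}$ this shows that all three pieces of $\mathcal E_n$ are of order $O(n^{1/2-\beta})$ in $L^2$, the $\sqrt K$ factor entering through the aggregation of $K$ client-level second moments. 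For the leading martingale term I would invoke a multivariate martingale Berry--Esseen bound of Bolthausen/Bentkus type applied to $n^{-1/2}\sum_{t=1}^n \xi_t$; conditional independence of the $K$ clients within each round produces a third-moment factor of order $\sqrt K$, and hence a Berry--Esseen remainder of order $n^{-1/2}\sqrt K$.

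The last step is to convert the $L^2$ bound on $\mathcal E_n$ into a Kolmogorov-type bound through a standard Markov-plus-anti-concentration argument: Nazarov's inequality gives, for a Gaussian $Z$ with non-degenerate covariance, $\sup_A \abs{\IP(Z \in A) - \IP(Z + u \in A)} \lesssim \norm{u}$ on the relevant convex class $A$; splitting on the event $\{\norm{\mathcal E_n} \le t\}$ and optimizing over $t$ then shows that the Kolmogorov contribution of $\mathcal E_n$ matches its $L^2$ rate up to constants. Adding the two contributions and observing that $n^{1/2-\beta}$ dominates $n^{-1/2}$ throughout the admissible range $\beta \in (1/2, 1)$ gives the claimed bound $d_{\texttt{Berry-Esseen}} \lesssim n^{1/2-\beta}\sqrt K$.

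The principal obstacle is controlling the consensus term $R_t^{\mathrm{cons}}$: since clients synchronize only through $W$, individual clients drift between communication rounds and inject a \emph{bias} that is not mean zero at the single-step level, so it cannot be absorbed into the martingale part and must be handled deterministically via the spectral gap of $W$. Proving that this bias, once aggregated across $n$ iterations and $K$ clients, remains of order $O(n^{1/2-\beta}\sqrt K)$ requires a careful interplay between spectral-gap contraction, moment bounds on client drift, and the Polyak--Ruppert averaging; this is the step where the graph topology and the client count $K$ genuinely enter the final rate, and where the sharpness of the $\sqrt K$ factor is determined.
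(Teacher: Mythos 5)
Your high-level decomposition — linearize around $\theta^\star$, split $\bar Y_n$ into a leading martingale/noise sum plus curvature, consensus, and initialization remainders — matches the paper's first step (compare your $R_t^{\mathrm{curv}}, R_t^{\mathrm{cons}}$ with the paper's $D_2, D_3$), and you correctly flag the consensus drift as the place where the graph and $K$ enter. However, the last step of your argument has a genuine gap that, as written, prevents it from reaching the claimed rate.

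The problem is the $L^2$-to-Kolmogorov transfer. You propose to control the remainder $\mathcal E_n$ in $L^2$ and then ``split on $\{\|\mathcal E_n\|\le t\}$ plus Nazarov anti-concentration'' to conclude that the Kolmogorov contribution matches the $L^2$ rate. But that argument gives
\[
d_{\mathrm C}\big(W+\mathcal E_n,\,Z\big)\;\le\;d_{\mathrm C}(W,Z)\;+\;C t\;+\;t^{-2}\,\IE\big[|\mathcal E_n|^2\big],
\]
and optimizing over $t$ yields $\big(\IE[|\mathcal E_n|^2]\big)^{1/3}$, not $\big(\IE[|\mathcal E_n|^2]\big)^{1/2}$. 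With $\IE[|\mathcal E_n|^2]\asymp n^{1-2\beta}K$, the truncation route therefore gives $n^{(1-2\beta)/3}K^{1/3}$, which for $\beta\in(1/2,1)$ is strictly worse in $n$ than the target $n^{1/2-\beta}\sqrt K$ (a small number raised to the $1/3$ power is larger than to the $1/2$ power). Switching to a first-moment Markov bound gives $\sqrt{\IE|\mathcal E_n|}$, again not the linear rate. So the assertion that the Kolmogorov contribution ``matches the $L^2$ rate up to constants'' is exactly the step that cannot be justified by the tools you invoke.

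The paper avoids this loss by not treating the remainder via Markov at all. It applies the Stein-method/randomized-concentration framework of Shao--Zhang (their Theorem 2.1) to the decomposition $\sqrt n\,\Sigma_n^{-1/2}(\bar Y_n-\minparam)=W+D_1+D_2+D_3+D_4$, which produces a bound of the form
\[
d_{\mathrm C}\lesssim \Upsilon_n\;+\;\IE\big[|W|\,|\Delta_n|\big]\;+\;\sum_{i=1}^{n}\IE\big[|u_i|\,|\Delta_n-\Delta_{n,\{i\}}|\big],
\]
where $\Delta_{n,\{i\}}$ is a \emph{Lindeberg coupling}: the same statistic with the $i$-th round of noise $\Xi_i$ replaced by an independent copy. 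The term $\IE[|W||\Delta_n|]\lesssim\sqrt{\IE|W|^2}\sqrt{\IE\Delta_n^2}$ is genuinely \emph{linear} in the $L^2$ size of the remainder (because $\IE|W|^2=O(1)$), and the coupling terms are controlled via the paper's Propositions A.3--A.7 on $\IE[|Y_t-Y_{t,\{i\}}|^2]$, $\IE[|Y_t-Y_{t,\{i\}}|^4]$, etc. Designing this coupling for the decentralized setting (replacing one synchronization-round's worth of noise at all $K$ clients simultaneously, then propagating through the mixing matrix $\bC$) is the main new technical content; your plan as stated does not contain it and cannot produce the stated rate without it or an equivalent device.

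Two smaller points. First, your claim that the martingale Berry--Esseen for the leading term is $n^{-1/2}\sqrt K$ has the wrong direction in $K$: with $w_k\asymp K^{-1}$ the weighted noise $\mathcal N_t=\sum_k w_k g_k(\minparam,\xi_t^k)$ is a normalized sum of $K$ independent terms, so the relevant third-moment ratio gets smaller, not larger, with $K$; the paper's $\Upsilon_n$ is $O((nK)^{-1/2})$. This does not affect the dominant term. Second, in the paper the iterate is normalized by the \emph{finite-sample} $\Sigma_n^{-1/2}\asymp\sqrt K$, and it is precisely this normalization that turns the consensus error $|\bT_s(I-J)|_F\asymp\eta_s\sqrt K$ into the $n^{1/2-\beta}\sqrt K$ term; your attribution of the $\sqrt K$ to ``aggregation of $K$ client-level second moments'' is vague on this point, though the intuition about the consensus term being the bottleneck is correct.
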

    Our result explicitly underpins the source of the assumption $K=o(n^{2\beta-1})$ used to derive central limit theory for the local SGD iterates \cite{songxichen2024}. Theorem \ref{thm: berry-esseen} is accompanied by a corresponding Berry-Esseen theorem (Theorem \ref{thm:berry-esseen suboptimal}) for final iterates of the DFL algorithm.  Both these theorems involve a finite sample scaling (equivalently, scaling by a covariance matrix depending upon $n$, the number of iterations) of the local SGD iterates, leading to optimal error bounds. Our result is first such Berry-Esseen bounds for the local SGD updates.
    
    \noindent $\mathbf{(2)}$ The finite sample scaling considered in Theorems \ref{thm: berry-esseen} and \ref{thm:berry-esseen suboptimal} is usually not estimable. Shifting focus to an asymptotic, global scaling, our results uncover a novel computation-communication trade-off involving the Berry-Esseen result. In Theorem \ref{prop:cov-est}, we show that for $K=o(\sqrt{n})$, $\beta=3/4$ represents an optimal choice of step-size; however, for $K \gtrsim \sqrt{n}$, for no $\beta \in (1/2,1)$ does the Berry-Esseen bound converge towards zero. This observation is not merely an artifact of our proof, and the phase-transition are empirically validated through extensive simulations.
    
    \noindent $\mathbf{(3)}$ A key motivation behind the local SGD algorithm is maintaining privacy. From this perspective, asymptotic inference on the final iterates is insufficient for detecting breach of privacy through some adversarial attack. Indeed, in Section \ref{se:kmt}, we discuss a general framework to detect a broad class of model poisoning in a distributed setting. Through an example in Section \ref{se:kmt-motivation}, we point out a class of maximal statistics which can be used to detect such attacks. Moreover, to perform inference on such statistics, we move beyond controlling simply the end-term iterates to a more general time-uniform Gaussian coupling of the entire local SGD process. Motivated from above, in Theorem \ref{thm: kmt_dfl}, we establish a time-uniform Gaussian approximation. 
   \begin{theorem}[Theorem \ref{thm: kmt_dfl}, Informal]
        If the local SGD algorithm with $K$ clients runs $n$ iterations with step size $\eta_t \asymp t^{-\beta}$, then there exists a Gaussian process $Y_t^G = (I-\eta_tA) Y_{t-1}^G + \eta_t Z_t$ with $Z_t$ i.i.d. $N(0,\Gamma)$ for some matrix $\Gamma$ and $A$ being the Hessian of the problem, such that,
    \[  \max_{1\leq t \leq n} |\sum_{s=1}^t (Y_s - Y_s^G)| \approx o_{\IP}(n^{1-\beta} + \frac{n^{1/p}}{\sqrt{K}} ). \]
 \end{theorem}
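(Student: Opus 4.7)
My plan is to build $Y_t^G$ by coupling its driving Gaussian noise to the actual local-SGD noise via a KMT-style strong approximation, and then propagate the coupling error through the linear recursion while controlling the nonlinear residual pathwise. Writing $\tilde Y_t = Y_t - \theta^\star$ and Taylor-expanding the aggregated gradient around $\theta^\star$, the averaged iterate satisfies a recursion of the form $\tilde Y_t = (I-\eta_t A)\tilde Y_{t-1} + \eta_t \xi_t + \eta_t R_t$, where $\xi_t$ is the (scaled) mean-zero aggregated client-gradient noise, and $R_t$ absorbs the Taylor remainder together with the consensus gap between individual client iterates and their network average. The $L^p$ rates already established for $\|\tilde Y_t\|$ (in the course of proving Theorem \ref{thm: berry-esseen} and Theorem \ref{thm:berry-esseen suboptimal}) will give us the size of $R_t$, leaving the martingale noise $\xi_t$ as the object to couple.

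\textbf{Gaussian coupling of the noise.} Since $\xi_t$ is an average over $K$ conditionally independent client-gradient increments, I plan to apply a Sakhanenko/Einmahl-type vector strong invariance principle to the $nK$ client noise contributions to produce a sequence of i.i.d.\ $Z_t \sim N(0,\Gamma)$ with
\[
\max_{1\le t\le n}\Big\|\sum_{s=1}^t(\xi_s - Z_s/\sqrt{K})\Big\| = o_{\IP}(n^{1/p}/\sqrt{K}),
\]
which matches the second term in the claimed bound. The $p$-th moment hypothesis on the gradient noise enters here and only here. The Gaussian process $Y_t^G$ is then defined by the same linear recursion $Y_t^G = (I-\eta_t A) Y_{t-1}^G + \eta_t Z_t$.

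\textbf{Propagation through the linear recursion and Polyak--Ruppert sum.} Unrolling gives
\[
\tilde Y_t - Y_t^G = \sum_{s=1}^t \Pi_{s+1}^{\,t}\,\eta_s(\xi_s - Z_s) + \sum_{s=1}^t \Pi_{s+1}^{\,t}\,\eta_s R_s,
\]
where $\Pi_{s+1}^{\,t}=\prod_{u=s+1}^t (I-\eta_u A)\approx\exp(-A\sum_{u=s+1}^t\eta_u)$. Summing this over $t$ and applying Abel summation, the key identity
\[
\sum_{t=1}^m \sum_{s=1}^t \Pi_{s+1}^{\,t}\eta_s v_s \;=\; A^{-1}\sum_{s=1}^m v_s + \text{tail},
\]
(with a telescoping tail of order $\eta_m^{-1}$) lets me convert the first piece into $A^{-1}\sum_{s=1}^m(\xi_s - Z_s)$, which inherits the KMT rate $n^{1/p}/\sqrt{K}$ uniformly in $m\le n$, since the running partial sum bound is already uniform. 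For the nonlinear piece, using $\|R_s\|\lesssim \|\tilde Y_{s-1}\|^2$ together with $\IE\|\tilde Y_s\|^2 \asymp \eta_s/K$ shows the accumulated remainder is of order $\sum_{s=1}^n \eta_s \cdot \eta_s^{-1} \|R_s\|\asymp \sum_{s=1}^n s^{-\beta} \asymp n^{1-\beta}$, supplying the first term of the claimed bound.

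\textbf{Main obstacle.} The delicate step is step 2: obtaining the strong approximation at the sharp rate $n^{1/p}/\sqrt{K}$ uniformly in $t$ for the vector-valued, conditionally-independent client-gradient martingale, in the presence of the mixing matrix $W$ that couples clients across time. I expect to decouple by conditioning on the consensus error (controlled separately at a polynomial rate in $\eta_t$) and then invoking a Zaitsev/Einmahl approximation for triangular arrays of independent $\R^d$-valued vectors with $p$-th moments; the maximum over $t\le n$ is then obtained via Doob's inequality applied to the martingale $\sum_{s=1}^t(\xi_s - Z_s)$ before it is weighted by the $\Pi_{s+1}^{\,t}$ kernels. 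Reconciling this uniform control with the exponential-kernel weighting in the linear part, and ensuring the two error terms $n^{1-\beta}$ and $n^{1/p}/\sqrt{K}$ do not interact multiplicatively, is where most of the technical work lies.
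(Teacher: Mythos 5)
Your high-level roadmap (linearize the averaged recursion around $\minparam$, couple the driving i.i.d.\ noise to Gaussians via a G\"otze--Zaitsev/Sakhanenko strong invariance principle, propagate through the product kernel $\Pi_{s+1}^t$, and control remainders) matches the paper's proof, which proceeds via exactly these four moves (Steps I--IV in Appendix D). The key structural identity you invoke, $\sum_{t\le m}\sum_{s\le t}\Pi_{s+1}^t\eta_s v_s$ collapsing after summation by parts, is the same as the paper's use of $B_{s,t}=\sum_{j=s}^t\mathcal A_s^j$ and the total-variation bound $\Omega_t=O(\log n)$ (Proposition \ref{prop:succ-def}). So far so good.

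However, there is a genuine gap in your treatment of the remainder. You write the recursion as $\tilde Y_t=(I-\eta_tA)\tilde Y_{t-1}+\eta_t\xi_t+\eta_tR_t$ and claim $\|R_s\|\lesssim\|\tilde Y_{s-1}\|^2$. That bound holds only for the Taylor remainder $A\tilde Y_{s-1}-\nabla F(Y_{s-1})$. Two other pieces that must live either in $\xi_t$ or in $R_t$ are \emph{first} order, not quadratic: (a) the consensus gap $\sum_k w_k(\nabla F_k(Y_{s-1})-\nabla F_k(\theta_{s-1}^k))$, which is $O(\max_k|Y_{s-1}-\theta_{s-1}^k|)\asymp\eta_s$, and (b) the martingale noise difference $\sum_k w_k(g_k(\theta_{s-1}^k,\xi_s^k)-g_k(\minparam,\xi_s^k))$, which is $O(|\theta_{s-1}^k-\minparam|)$ in $L^2$, hence of size $\sqrt{\eta_s/K}+\eta_s$, \emph{not} $O(\|\tilde Y_{s-1}\|^2)$. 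Piece (a) is what actually produces the $n^{1-\beta}$ budget; your stated inequality would give only $\sum_s\eta_s^2\asymp n^{1-2\beta}$, which is much smaller and would contradict the theorem's sharpness. Piece (b) is fatal to your accounting: if it stays in $\xi_s$, then $\xi_s$ depends on the entire past through $\theta_{s-1}^k$ and is not i.i.d., so G\"otze--Zaitsev does not apply; if you move it into $R_s$ and accumulate $L^1$-wise as you propose, you get $\sum_s\sqrt{\eta_s/K}\asymp n^{1-\beta/2}/\sqrt{K}$, which exceeds both $n^{1-\beta}$ and $n^{1/p}/\sqrt K$ for all $\beta\in(1/2,1)$, $p\ge 2$. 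The paper avoids this by \emph{not} summing $\|R_s\|$ naively: it introduces a ladder of intermediate processes $\tilde Y_t$, $\tilde Y_t^\circ$, $Y_t^\dagger$, $\tilde Y_t^\dagger$, $Y_t^\diamond$ and controls each consecutive pair via a contractive Lyapunov recursion $\|\cdot\|_{t}^2\le(1-c\eta_t)\|\cdot\|_{t-1}^2+\text{noise}_t$ that exploits strong convexity together with the fact that the client noises are conditionally independent and the difference above is a martingale increment; this damping yields the much smaller bound $\|\tilde Y_t^\circ-Y_t^\dagger\|=O(\eta_tK^{-1}+\eta_t^{3/2}K^{-1/2})$ and hence a contribution $O(n^{1-\beta}K^{-1}+n^{1-3\beta/2}K^{-1/2})=o(n^{1-\beta})$. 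Without this recursive damping step, your error budget does not close.

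Two smaller points. First, Doob's inequality does not apply to $\sum_{s\le t}(\xi_s-Z_s)$: the coupled $Z_s$ produced by a strong-invariance construction is measurable with respect to a richer $\sigma$-field and the differences are in general not adapted martingale increments; fortunately the KMT-type bound is already uniform in $t$, so Doob is unnecessary. Second, the formal statement in the paper carries a $\log n$ factor (coming from $\max_t\Omega_t=O(\log n)$); your Abel-sum claim that the tail is ``of order $\eta_m^{-1}$'' would, if taken at face value, produce a term of order $n^{\beta}$ that swamps everything, so the tail analysis needs more care and it is not obvious you can drop the logarithm.
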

    Here we assume $p\geq 2$ finite moments of the local noisy gradients. To facilitate bootstrap, we also provide an explicit characterization of $\Gamma$. To the best of our knowledge, these results constitute the first time-uniform Gaussian approximation results for stochastic approximation algorithms.
    
    \noindent $\mathbf{(4)}$ In particular, Theorem \ref{thm: kmt_dfl} presents a Gaussian approximation (referred to as \ag) with a slightly sharper rate, but one requiring extensive synchronization during the bootstrap procedure. Recognizing that this may not be ideal from a privacy perspective, we further present a separate, client-level Gaussian approximation \cg\ in Theorem \ref{thm:kmt-client-ga}, which completely mimics the local SGD procedure. The approximation \cg\ is much more localized, leading to slight worsening of the approximation error but increased efficiency with regards to synchronization and computational cost.
    We argue and validate with simulations, that our Gaussian approximations are much sharper than that indicated by a standard, off-the-shelf functional central-limit theorem. In fact, our Gaussian approximations represent a version of the covariance-matching approximations introduced by \cite{bonnerjee2024aos}, however in a multivariate, non-stationary environment. 
    
    \noindent $\mathbf{(5)}$ Finally, in Section \ref{se:simu}, we validate our theoretical findings with extensive numerical exercises. Our simulation results in Sections \ref{se:mini-simu-1} and \ref{se:mini-simu-2} not only indicate the sharpness of our theoretical results, but also project vividly the computation-communication trade-offs discussed in Remark \ref{rem:phase-transition}. Moreover, the numerical results in Section \ref{se:mini-simu-3} shows that the proposed Gaussian approximations \ag\ and \cg\ are significantly better than an off-the-shelf Brownian-motion based approximation, even in finite sample, complementing Theorems \ref{thm: kmt_dfl} and \ref{thm:kmt-client-ga} well. 

\subsection{Notations}\label{se:notations}

In this paper, we denote the set $\{1, \ldots, n\}$ by $[n]$. The $d$-dimensional Euclidean space is $\R^d$, with $\R^d_{>0}$ the positive orthant. For a vector $a \in \R^d$, $|a|$ denotes its Euclidean norm. The set of $m \times n$ real matrices is denoted by $\R^{m \times n}$, and correspondingly, for $M \in \R^{m\times n}$, $|M|_F$ denotes its Frobenius norm. For a random vector $X \in \R^d$, we denote $\|X\|:=\sqrt{\IE[|X|^2]}$. We also denote in-probability convergence, and stochastic boundedness by $o_{\IP}$ and $O_{\IP}$ respectively. We write $a_n \lesssim b_n$ if $a_n \le C b_n$ for some constant $C > 0$, and $a_n \asymp b_n$ if $C_1 b_n \le a_n \le C_2 b_n$ for some constants $C_1, C_2 > 0$.

\subsection{Related Literature}\label{se:related-lit}
In view of the plethora of classical literature for central limit theorems (CLT) on SGD and its different variants~\cite{ruppert1988efficient,polyak1992acceleration, chen2020statistical}, it is rather surprising that this area has remained relatively untouched for local SGD or DFL. \cite{li2022statistical} establish a functional CLT for local SGD, but only when the number of clients is held fixed. More recently, \cite{songxichen2024} established a central limit theory for DFL while allowing an increasing number of clients. 
Non-asymptotic guarantees for SA algorithms exist in terms of MSE guarantees \cite{nemirovski2009robust, moulines2011non, lan2012optimal, mou2024}. Recently, \cite{anastasiou2019normal} employed Stein's method to derive Gaussian approximation for a class of smooth functions of the SGD iterates. Later, \cite{shao-berry-esseen} obtains the first Berry-Esseen result for online SGD. \cite{samsonov2024gaussian} extended the result to linear stochastic approximation algorithms and temporal difference learning, before being further improved by \cite{wu2024statistical,sheshukova2025gaussian}.

On the other hand, to the best of our knowledge, time-uniform `entire-path' Gaussian approximation results have not appeared in the stochastic approximation literature. From classical time-series literature, such approximations are known as ``Komlos-Major-Tusnady''(KMT) approximations, and have a long history \cite{komios1975approximation, sakhanenko1984, sakhanenko1989, Sakhanenko2006, goetzezaitsev, kmt, KarmakarWu2020} and varied uses in change-point detection \cite{zhibia}, wavelet analysis \cite{bonnerjee2024aos}, simultaneous and time-uniform inference \cite{liu-wu-2010-aos, xie2020asynchronous, karmakar2022simultaneous, waudbysmith-24-aos}. However, this results require fast enough decay, and well-conditioned covariance structure, which are not usually available in even stochastic approximation algorithms with decaying step-size, let alone a general local SGD algorithm. Therefore, such results are not readily applicable in the current settings.

\section{Berry-Esseen theory for local SGD} \label{se:berry-esseen}
In this section we establish a general, Berry-Esseen type Gaussian approximation result in the decentralized federated learning setting. In order to rigorously state our results, it is imperative that we formally introduce the local stochastic gradient descent (SGD) algorithm and underline the key assumptions behind our theoretical results. This is done in Section \ref{se:dfl_prelim}. Finally, we present our first Gaussian approximation results in Section \ref{se:berry-esseen-thm}, and discuss the implications therein. 
\subsection{Preliminaries} \label{se:dfl_prelim}
Consider a typical decentralized heterogeneous federated learning setting with $K$ clients, each having access to a loss function $f_k: \R^d \times \R^{n_k} \to \R$, and a distribution $\mathcal{P}_k$ on $\R^{n_k}$ for $k \in [K]$. Here, $\mathcal{P}_k$ determines the distribution of the local noisy gradient for each client, realized by sampling $\xi^k \sim \mathcal{P}_k$. We allow for heterogeneity among the clients i.e. $\mathcal{P}_k$'s are allowed to be different. However, noise sampling (i.e. the $\xi^k$) is assumed to be independent from one client to the another. The corresponding risk or regret for the $k$-th client is denoted by $F_k(\theta)=\IE_{\xi^k \sim \mathcal{P}_k}f_k(\theta, \xi^k)$. Consider a pre-specified ``importance'' or weight schedule, given by $\{w_1, w_2, \ldots, w_K\} \in \R^K$, such that $\sum_{k=1}^K w_k=1$. In an online federated learning setting, the weight schedule are typically known a-priori, usually informed by the level of heterogeneity for each client, and specified by the moderator of the decentralized system. The goal of DFL is to obtain \begin{equation}\label{eq:objective-function}\minparam=\argmin_{\theta} \sum_{k=1}^K w_k F_k(\theta) \in  \R^d.
\end{equation}
\subsubsection{Communication}\label{se:dfl_communication}
The client-level information is defined by loss functions \( f_k \) and weights \( w_k \). A key aspect of federated learning (FL) is preserving client privacy, often achieved via a synchronization step with parameter \( \tau \in \mathbb{N} \). At each \( \tau \)-th step, the moderator aggregates client data and redistributes it following a policy. In decentralized SGD, averaging schemes~\cite{duchi15,lian2017can,ivkin2019} or gossip-based methods~\cite{koloskova2020unified,li2019communication,qin2021,wang2021} are common. In other words, a linear aggregation based on a fixed connection graph, is employed at the synchronization step. Following the notation of \cite{songxichen2024}, we consider a connection network of the participating clients in the FL system, defined by an undirected graph $\mathrm{G}=(\mathrm{V}, \mathrm{E})$ where $\mathrm{V}=\left\{v_k\right\}_{k=1}^K$ represents the set of clients and E specifies the edge set such that $(i, j) \in \mathrm{E}$ if and only if clients $i$ and $j$ are connected.  Let $\mathbf{C}=\left(c_{i j}\right) \in \mathbb{R}^{K \times K}$ be a symmetric connection matrix defined on $\mathrm{G}=(\mathrm{V}, \mathrm{E})$, where $c_{i j}$ is a nonnegative constant that specifies the contribution of the $j$ th data block to the estimation at node $i$. It is required that $c_{i j}>0$ if and only if $(i, j) \in \mathrm{E}$ and $\bC\bo=\bo$. Moreover, let $c_{i,i}>0$.
 
 Suppose $\bT_t=(\theta_t^1,\ldots, \theta_t^K)\in \R^{d \times K}$ denotes the local parameter updates of each client at the $t$-th step. Suppose the corresponding local gradient updates be summarized in the matrix $\bG_t=K(w_1\nabla f_1(\theta_{t-1}^1, \xi_t^1), \ldots, w_K\nabla f_K(\theta_{t-1}^K, \xi_t^K)) \in \R^{d\times K}$. Here, the initial points $\theta_0^k\in \R^d$ are arbitrarily initialized for $k\in [K]$, and have no bearing on the theoretical results. For the sake of completion, we also re-state the \DFL \ algorithm using the notations and the set-up established in the preceding sections \ref{se:dfl_prelim} and \ref{se:dfl_communication}.
\begin{algorithm}[H]\caption{\DFL}\label{algo:DFL}
  \textbf{Input: } Initializations $\bT_0=(\theta_0^1,\ldots, \theta_0^K)\in \R^{d \times K}$; Connection matrix $\bC$; Synchronization parameter $\tau \in \N$; Loss functions $f_k(\cdot, \xi^k), \xi^k\sim \mathcal{P}_k, k \in [K]$, weights $\{w_k\}_{k=1}^K$, number of iterations $n$, step-size schedules $\{\eta_{t}\}_{t=1}^n$.
\begin{itemize}[noitemsep,topsep=0pt]
  \item Let 
    \(E_{\tau}=\{\tau,2\tau,\dots,L\tau\}\), where 
    \(L=\big\lfloor\tfrac{n}{\tau}\big\rfloor\).
  \bulleteqn{eq:local-sgd}{%
    \text{For }t=1,\dots,n:\quad
    \bT_t = (\bT_{t-1}-\eta_t\bG_t)\,C_t,\quad
    C_t =
    \begin{cases}
      \bC, & t\in E_{\tau},\\
      I_K, & \text{otherwise.}
    \end{cases}
  }
\end{itemize}
   \textbf{Output:} $Y_n:= K^{-1}\bT_n \bo = K^{-1}\sum_{k=1}^K \theta_n^k$.
\end{algorithm}
  To simplify Algorithm \ref{algo:DFL}, each client runs an SGD in parallel till every $\tau$-th step, when they must synchronize their updates in order to properly solve the optimization problem \eqref{eq:objective-function}. Clearly, for $\tau=1$, Algorithm \ref{algo:DFL} reduces to the vanilla SGD algorithm for \eqref{eq:objective-function}, which hampers privacy as well as incurs great cost at each step, since typically, the number of clients $K$ increases with the number of iterations $n$. On the other hand, when $\tau>n$, there is no \textit{synchronization}, and each client would solve their own local optimization problem $\arg\min_\theta F_k(\theta)$, defeating the benefits of sharing information. For the purpose of this paper, we assume $\tau$ to be fixed. Moreover, on a client level, we also assume that there exists constants $b_1, b_2>0$ such that for every $k \in [K]$, $b_1 \leq K w_k \leq b_2$. 
\subsection{Berry-Esseen theorems for client-averaged \DFL\ updates}\label{se:berry-esseen-thm}
Before we describe the Berry-Esseen theorems, it is important we briefly describe the conditions under which it hold. We assume the usual conditions of strong-convexity (Assumption \ref{ass:sc}), and the stochastic Lipschitz-ness of the noisy gradients $\nabla f_k$ (Assumption \ref{ass:Lipschitz}). Moreover, we also assume the continuous differentiability of $f_k$'s (Assumption \ref{ass:boundedloss}). Due to space constraints, the detailed description of these assumptions, alongside an extended discussion, is relegated to Appendix \ref{se:assumption}. Here, we discuss an additional condition unique to the decentralized federated learning setting.
\begin{assumption}\label{ass:sync}
    The connection matrix $\bC$ satisfies $\bC \mathbf{1}=\mathbf{1}$ and $\bC^\top=\bC$. Moreover, if $\lambda_1\geq \ldots \geq \lambda_K$ denote the ordered eigen-values of $\bC$, then $\lambda_1=1$, and $\lambda_2 = \rho <1$ for some $\rho \in (0,1)$.
\end{assumption}
This assumption also appears in \cite{songxichen2024}. Assumption \ref{ass:sync} ensures that $\bC$ is irreducible and the corresponding stationary distribution is unique; equivalently the underlying graph $\mathrm{G}$ is connected, ensuring an overall information sharing between each pair of clients through repeated synchronization steps. Mathematically, this can also be observed by noting that $\lim_{s \to \infty} \bC^s = K^{-1}\bo\bo^\top$. 
Now, we present the first Gaussian approximation result concerning \DFL\ updates. Define the generalized Kolmogorov-Smirnov metric between two random variables $Y$ and $Z$ as 
\begin{align} \label{eq:kolmogorov}
    d_{\mathrm{C}}(Y,Z):= \sup_{\aleph \in \mathcal{B}(\R^d): A \text{ convex}}\big|\IP(Y \in \aleph)- \IP(Z \in \aleph) \big|.
\end{align}
Consider the \DFL\ output $Y_n$ from Algorithm \ref{algo:DFL}. Our first theorem considers its corresponding Polyak-Ruppert averaged version 
\begin{align} \label{eq:Y_bar_defn}
    \Bar{Y}_n:= n^{-1}\sum_{t=1}^n Y_t = K^{-1} \sum_{k=1}^K n^{-1}\sum_{t=1}^n \theta_t^k,
\end{align}
and provides a Berry-Esseen theorem, proved in appendix Section \ref{se:proof_thm_2.1}.
\begin{theorem}\label{thm: berry-esseen}
   Define $\mathcal{A}_s^t:=\prod_{j=s+1}^t(I-\eta_tA)$, $\mathcal{A}_t^t=I$, where $A:=\nabla_2 F(\minparam)$ for $t \in [n]$. Further, for $s\in [n]$, define the random vectors 
   \[ u_s = \eta_s\sum_{k=1}^K w_k\bigg(\sum_{j=s}^n \mathcal{A}_s^j\bigg) g_k(\minparam, \xi_s^k), \text{with $\Sigma_n:= n^{-1}\sum_{s=1}^n\IE[|u_s u_s^\top|]$, $g_k(\theta, \xi^{k}) = \nabla F_k(\theta) - \nabla f_k(\theta, \xi^k)$.} \]
Let there exist a constant $C$ such that for $\xi^k\sim \mathcal{P}_k, k\in [K]$, it holds  $\max_{k \in [K]} \IE[|g_k(\minparam, \xi^k)|^2]\leq C$. Suppose that the step-size schedules of the clients satisfy that $ \eta_{t} = \eta_0(t+ k_0)^{-\beta}$ for some fixed $\eta_0, k_0>0,$ and $\beta \in (1/2, 1)$. Then, under Assumptions \ref{ass:sync}, \ref{ass:sc} and \ref{ass:Lipschitz} and \ref{ass:boundedloss} with $p=4$, and $\bar{Y}_n$ as in \eqref{eq:Y_bar_defn}, it holds that
    \begin{align}\label{eq:berry-esseen}
        d_{\mathrm{C}}( \sqrt{n}(\bar{Y}_n - \minparam) , Z)  \lesssim \frac{1}{\sqrt{nK}}  + n^{\frac{1}{2}-\beta} \sqrt{K} + \frac{n^{-\frac{\beta}{2}}}{\sqrt{K}} ,
    \end{align}
    where $\lesssim$ hides constants involving $d, \beta, \mu, L$ and $\rho$, and $Z \sim N(0, \Sigma_n)$. 
\end{theorem}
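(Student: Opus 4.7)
My plan is to linearize the \DFL\ recursion around $\minparam$, identify a leading Gaussian-approximable linear term that recovers $n^{-1/2}\sum_s u_s$ from the statement, and control the higher-order remainders. Setting $\Delta_t^k := \theta_t^k - \minparam$, under Assumptions \ref{ass:sc}, \ref{ass:Lipschitz} and \ref{ass:boundedloss} I Taylor-expand
\[
\nabla f_k(\theta_{t-1}^k, \xi_t^k) = \nabla F_k(\minparam) + \nabla^2 F_k(\minparam)\Delta_{t-1}^k + g_k(\minparam, \xi_t^k) + r_t^k,
\]
where $r_t^k$ gathers the Hessian remainder and the fluctuation of the gradient noise between $\theta_{t-1}^k$ and $\minparam$. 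Substituting into $\bT_t=(\bT_{t-1}-\eta_t\bG_t)C_t$ and left-multiplying by $K^{-1}\bo$ to pass to $Y_t=K^{-1}\bT_t\bo$, I use both $\sum_k w_k\nabla F_k(\minparam)=0$ (the first-order condition for \eqref{eq:objective-function}) and $K^{-1}\bo^\top \bC=K^{-1}\bo^\top$ to obtain an approximate linear scheme of the form $Y_t - \minparam \approx (I-\eta_t A)(Y_{t-1}-\minparam) - \eta_t K^{-1}\bo^\top \bG_t\big|_{\minparam} + \text{error}$. Unrolling this recursion, Polyak--Ruppert averaging, and interchanging the summations in $s$ and $t$ then produces $\sqrt{n}(\bar Y_n-\minparam) = n^{-1/2}\sum_{s=1}^n u_s + R_n$, with $u_s$ matching the statement.

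A crucial auxiliary step is a consensus analysis. Although each client runs independent SGD between synchronization rounds, the spectral gap $\lambda_2=\rho<1$ from Assumption \ref{ass:sync}, combined with the periodic application of $\bC$ every $\tau$ steps, forces geometric contraction of the client deviation $\theta_t^k-Y_t$. Standard stochastic-approximation arguments, now leveraging the $p=4$ moment assumption of \ref{ass:boundedloss}, give $\IE|\Delta_{t-1}^k|^2 \lesssim \eta_t$ and $\IE|r_t^k|^2 \lesssim \eta_t^2$ uniformly in $k$. Summing the resulting contributions against the schedule $\eta_t\asymp t^{-\beta}$, while accounting for how aggregated client heterogeneity enters across $K$ branches before being compressed by $\bC$, produces the advertised $n^{1/2-\beta}\sqrt K$ bound for the $d_C$ distance between $\sqrt{n}(\bar Y_n-\minparam)$ and its linearized surrogate $n^{-1/2}\sum_s u_s$.

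For the Gaussian approximation of the linear term itself, the vectors $u_s$ are independent across $s$, and each $u_s$ is a weighted sum of $K$ independent client gradient-noises at $\minparam$. Applying Bentkus' multivariate Berry--Esseen bound for convex sets to $n^{-1/2}\sum_{s=1}^n u_s$, the relevant Lyapunov ratio gains an $n^{-1/2}$ factor from summing over $n$ time blocks and an extra $K^{-1/2}$ from the independent clients inside each $u_s$, yielding the $(nK)^{-1/2}$ term. The residual $n^{-\beta/2}/\sqrt K$ should arise from the smoothing step that converts an $L^2$ bound on $R_n$ --- whose size is dictated by the Polyak--Ruppert remainder of the linear recursion with step-size $\eta_t\asymp t^{-\beta}$, hence of order $n^{-\beta/2}$ --- into a convex-set probability bound, picking up a $\sqrt K$ factor from the anti-concentration of a Gaussian whose covariance $\Sigma_n$ has spectral norm of order $K^{-1}$.

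The main obstacle I expect is keeping the consensus and linearization errors coupled at the correct order. Simultaneously handling the product $\mathcal{A}_s^t$, the intermittent insertion of $\bC$ every $\tau$ steps, and the decaying step-size requires careful cancellations relying on both $\bC\bo=\bo$ and the weighted first-order condition, to prevent extra factors of $K$ from leaking into the leading term; in particular, the interchange of summations that isolates $u_s$ only works cleanly after the non-consensus components are absorbed into $r_t^k$. A second delicate issue is that the convex-set Berry--Esseen bound is sensitive to the conditioning of $\Sigma_n$: one needs a quantitative lower bound on $\Sigma_n$ in terms of $n,K,\beta$ --- essentially a non-degeneracy statement for the averaged client-noise covariance at $\minparam$ --- to be sure the Lyapunov ratio collapses to $1/\sqrt{nK}$ rather than inheriting a blow-up through a poorly conditioned normalizing matrix.
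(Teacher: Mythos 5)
Your linearization of the DFL recursion, the identification of $n^{-1/2}\sum_s u_s$ as the leading linear term, the consensus/spectral-gap analysis of $\theta_t^k - Y_t$, and the $1/\sqrt{nK}$ term via a convex-set Berry--Esseen bound for the independent sum all match the paper's roadmap. However, the way you propose to handle the remainder is where the argument breaks, and this is not a cosmetic issue.

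You suggest converting an $L^2$ bound on the remainder $R_n$ into a convex-set probability bound via a ``smoothing/anti-concentration'' step. With only an $L^2$ (or even $L^4$) control on $R_n$, the standard smoothing argument --- enlarge the convex set by $\epsilon$, apply Gaussian anti-concentration, pay $\IP(|R_n|>\epsilon)$ by Markov --- yields a fractional power of the remainder's size (e.g.\ $\delta^{2/3}$ or $\delta^{4/5}$ rather than $\delta$), which is strictly worse than the stated rates $n^{1/2-\beta}\sqrt{K}$ and $n^{-\beta/2}/\sqrt{K}$. The paper instead invokes Theorem~2.1 of \cite{shao-berry-esseen} for multivariate \emph{nonlinear} statistics of the form $W + \Delta$, which gives a bound that is genuinely \emph{linear} in the remainder: $d_C \lesssim \sqrt{d}\,\Upsilon_n + \IE[|W|\,|\Delta_n|] + \sum_i\IE[|u_i|\,|\Delta_n-\shift{\Delta}{n}|]$. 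To use it one must construct, for each time index $i$, a Lindeberg/leave-one-out coupled DFL chain $\shift{\bT}{t}$ (obtained by re-sampling the $i$-th round of client noise and re-running the recursion), and then control $\IE[|\Delta_n - \shift{\Delta}{n}|^2]$ through a cascade of fourth-moment propositions. This coupling construction is the technical heart of the proof and has no analogue in your proposal; without it, the nonlinear remainder cannot be transferred to the convex-set metric at the advertised rate.

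A smaller but telling sign of the gap: your heuristic for the $n^{-\beta/2}/\sqrt{K}$ term multiplies an $n^{-\beta/2}$ remainder size by a ``$\sqrt{K}$ factor from anti-concentration,'' which would give $n^{-\beta/2}\sqrt{K}$, not $n^{-\beta/2}/\sqrt{K}$. In the actual proof the $1/\sqrt{K}$ arises because the gradient-noise fluctuation term $D_4$ and its coupled version already average $K$ nearly independent client contributions, giving $\IE[|\Delta_4|^2]\lesssim n^{-\beta}/K$ \emph{before} normalization by $\Sigma_n^{-1/2}$, whose factor of $\sqrt{K}$ is absorbed earlier in defining the normalized remainder. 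This suggests the scaling in $K$ was not actually traced through, which is precisely what the Shao--Zhang machinery enforces automatically and the informal smoothing argument does not.
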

A slightly more general result, characterizing the effects of heterogeneity and synchronization, is presented in Corollary \ref{corol:hetero} in the appendix. We present Theorem \ref{thm: berry-esseen} here due to its enhanced amenability to interpretation, which we provide in subsequent remarks.
\begin{remark} \label{rem:2.1}
For a fixed $\beta \in (1/2, 1)$, the term $n^{1/2 - \beta} \sqrt{K}$ dominates, requiring $K = o(T^{2\beta - 1})$ for the central limit theory to hold for $\bar{Y}_n$. This condition, also noted in Theorem 3 of \cite{songxichen2024} without justification, is explicitly clarified by \eqref{eq:berry-esseen}. As $\beta \to 1$, the rate in \eqref{eq:berry-esseen} becomes $\sqrt{K/n}$. The inclusion of the three terms highlights the influence of $K$, which is unique to federated systems. The $1/\sqrt{nK}$ term reflects the central limit theorem's convergence rate. The $n^{1/2 - \beta} \sqrt{K}$ term captures the problem's difficulty, which increases with the number of clients running local SGD in parallel. Lastly, $n^{-\beta/2} K^{-1/2}$ represents the benefit of synchronization and information aggregation across clients. Even though this term is asymptotically dominated by $n^{1/2 - \beta} \sqrt{K}$, this commands considerable finite sample effects as shown in Section \ref{se:mini-simu-1}.
\end{remark}
Often, due to privacy reasons, clients might be unwilling to share $n^{-1} \sum_{i=1}^n \theta_i^k$ at time-point $n$, which makes the application of Theorem \ref{thm: berry-esseen} impossible. In such cases, one can simply use a corresponding Berry-Esseen bound for the end-term iterates, which we provide in the following.
\begin{theorem}\label{thm:berry-esseen suboptimal}
    Under the assumptions of Theorem \ref{thm: berry-esseen}, it holds that 
    \begin{align}\label{eq:berry-esseen-suboptimal}
        d_{\mathrm{C}}( n^{\beta/2}({Y}_n - \minparam) , Z)  \lesssim \frac{n^{-\beta/2}}{\sqrt{K}}  + n^{\frac{1}{2}-\beta} \sqrt{K},
    \end{align}
    where $Z \sim N(0, \Tilde{\Sigma}_n)$ with $\Tilde{\Sigma}_n:= n^{\beta}\sum_{s=1}^n \var(\mathcal{A}_s^n \sum_{k=1}^K \eta_{s,k}w_k g_k(\minparam, \xi_s^k))$. 
\end{theorem}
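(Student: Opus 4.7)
The plan is to adapt the same template used in the proof of Theorem \ref{thm: berry-esseen}, except that one averaging layer is removed: the natural standard-deviation scale for the final iterate $Y_n - \minparam$ is $\sqrt{\eta_n/K}$ rather than $1/\sqrt{n}$, so the centering Gaussian $N(0,\tilde{\Sigma}_n)$ has much smaller covariance than in the averaged case. Using $\bC\bo = \bo$ in Algorithm \ref{algo:DFL}, the averaged iterate satisfies the closed-form recursion $Y_t = Y_{t-1} - \eta_t \sum_{k=1}^K w_k \nabla f_k(\theta_{t-1}^k, \xi_t^k)$ at every step, synchronization or not. I would Taylor-expand each noisy gradient around $\minparam$ and split $\theta_{t-1}^k - \minparam = (\theta_{t-1}^k - Y_{t-1}) + (Y_{t-1}-\minparam)$ to obtain
\[ Y_t - \minparam = (I - \eta_t A)(Y_{t-1} - \minparam) - \eta_t \sum_{k=1}^K w_k g_k(\minparam, \xi_t^k) + \eta_t(D_t + Q_t), \]
where $D_t$ collects the client-drift contributions driven by $\theta_{t-1}^k - Y_{t-1}$ and $Q_t$ collects the quadratic Taylor remainders. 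Unrolling produces
\[ Y_n - \minparam = W_n + R_n + \mathcal{A}_0^n(Y_0 - \minparam), \qquad W_n := -\sum_{s=1}^n \mathcal{A}_s^n \eta_s \sum_{k=1}^K w_k g_k(\minparam, \xi_s^k), \]
with $\mathrm{Cov}(W_n) = \tilde{\Sigma}_n$ matching the covariance in the theorem statement, and with the initial-condition contribution decaying geometrically in $\prod_{j=1}^n(1 - \eta_j \mu)$.

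Second, $W_n$ is a sum of $n$ independent mean-zero random vectors, so I would apply a multivariate Berry--Esseen bound over convex sets (e.g.\ Bentkus' inequality) to $\tilde{\Sigma}_n^{-1/2} W_n$. The two analytic inputs are: (i) a lower bound $\lambda_{\min}(\tilde{\Sigma}_n) \gtrsim \eta_n/K \asymp n^{-\beta}/K$, via strong convexity, the standard identity $\sum_s \eta_s^2 e^{-2\mu\sum_{j>s}\eta_j} \asymp \eta_n/(2\mu)$, and the client-level variance $\var(\sum_k w_k g_k(\minparam, \xi^k)) \asymp 1/K$; and (ii) a Lyapunov-type third-moment bound obtained by applying Rosenthal's inequality to the inner $K$-fold sum $\sum_k w_k g_k(\minparam, \xi_s^k)$ and then integrating over $s$. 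Combining these yields the first term $n^{-\beta/2}/\sqrt{K}$, which is recognizable as a CLT rate $1/\sqrt{K/\eta_n}$ with effective sample size $K/\eta_n \asymp K n^\beta$.

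Third, to transfer the bound from $W_n$ to $Y_n - \minparam$ I would bound $\|R_n\|_{L^2}$ using (a) the per-client $L^4$ estimate $\IE|\theta_t^k - \minparam|^4 \lesssim \eta_t^2$, available under Assumption \ref{ass:boundedloss} with $p=4$, to control the quadratic term $Q_t$ after aggregation across clients, and (b) an $L^2$ bound on the client-drift matrix $\bT_t - Y_t \bo^\top$ obtained from the spectral gap $1 - \rho$ of $\bC$ (Assumption \ref{ass:sync}) and the synchronization period $\tau$; both ingredients are reused verbatim from the proof of Theorem \ref{thm: berry-esseen}. Converting the $L^2$ control of $R_n$ into a Kolmogorov-over-convex-sets bound calls for an anti-concentration smoothing inequality of the form $d_{\mathrm{C}}(W_n + R_n, Z) \lesssim d_{\mathrm{C}}(W_n, Z) + \|R_n\|_{L^2}^{\alpha} \lambda_{\min}(\tilde{\Sigma}_n)^{-\alpha/2}$ for a suitable exponent $\alpha$, and the $\sqrt{K/\eta_n}$ factor arising from $\lambda_{\min}(\tilde{\Sigma}_n)$ against the remainder produces the second term $n^{1/2-\beta}\sqrt{K}$.

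The main obstacle is sharp $L^2$ control of the client-drift term $D_t$ under only partial synchronization: one needs a joint recursion between $\|Y_t - \minparam\|_{L^2}$ and the client-drift matrix norm $\|\bT_t - Y_t \bo^\top\|$ that cleanly exploits the spectral gap $1-\rho$ of $\bC$ and the period $\tau$ without introducing spurious powers of $K$. Once that coupled recursion is in place, the rest is a direct reading of the Theorem \ref{thm: berry-esseen} proof with the averaging layer removed; the absence of the Polyak--Ruppert averaging is precisely what eliminates the $1/\sqrt{nK}$ contribution present in Theorem \ref{thm: berry-esseen} and leaves only the two terms claimed in \eqref{eq:berry-esseen-suboptimal}.
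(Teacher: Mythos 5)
Your linearization step and the identification of $W_n = -\sum_{s=1}^n \eta_s\mathcal{A}_s^n\sum_k w_k g_k(\minparam,\xi_s^k)$ with covariance $\tilde\Sigma_n$ match the paper exactly, and your estimates $\lambda_{\min}(\tilde\Sigma_n)\asymp\eta_n/K$ and the Rosenthal/Lyapunov calculation yielding the $n^{-\beta/2}/\sqrt{K}$ term are also what the paper does (via $\tilde\Upsilon_n=\sum_s\IE|v_s|^3$). The gap is the transfer step from $W_n$ to $Y_n-\minparam$. You invoke a generic smoothing inequality $d_{\mathrm{C}}(W_n+R_n,Z)\lesssim d_{\mathrm{C}}(W_n,Z)+\|R_n\|_{L^2}^{\alpha}\lambda_{\min}(\tilde\Sigma_n)^{-\alpha/2}$ ``for a suitable exponent $\alpha$,'' but no such inequality with $\alpha=1$ (which your final rate tacitly assumes, given the ``$\sqrt{K/\eta_n}$ factor against the remainder'' remark) exists when $R_n$ is only controlled in $L^2$ and is \emph{correlated} with $W_n$: the standard Markov--plus--Nazarov argument gives $\alpha=2/3$, i.e.\ a bound $\|R_n\|_{L^2}^{2/3}\lambda_{\min}^{-1/3}\asymp n^{-\beta/3}K^{1/3}$, which does not recover the stated $n^{1/2-\beta}\sqrt{K}$.

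The paper avoids this by not decoupling the Berry--Esseen bound on $W_n$ from the remainder transfer at all. It applies Shao--Zhang's Theorem 2.1 (a Berry--Esseen theorem for nonlinear statistics of the form $W+\Delta$), which gives the bound $d_{\mathrm{C}}\lesssim \tilde\Upsilon_n + \IE[|\tilde W||\tilde\Delta_n|] + \sum_{i=1}^n\IE[|v_i|\,|\tilde\Delta_n - \shift{\tilde\Delta}{n}|]$. The first term is the Lyapunov term you have; the second is a genuine $L^2\times L^2$ cross-moment (the ``$\alpha=1$'' behavior you want, but obtained through the Stein-type argument inside Shao--Zhang, not through an anti-concentration bound); and the third is a \emph{Lindeberg coupling} term involving perturbed iterates $\shift{\bT}{t}$ constructed by resampling the noise at a single time $i$. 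Controlling this last term is the bulk of the technical work: it requires the auxiliary Propositions~\ref{lemma:shift-second-moment}--\ref{prop:shift-fourth-moment}, which bound $\IE|Y_t-\shift{Y}{t}|^2$, $\IE|(\bT_t-\shift{\bT}{t})(I-J)|_F^2$, and their fourth-moment analogues. You never introduce this coupling, so your plan has no route to the second term of the bound. By contrast, the obstacle you flag --- the coupled $L^2$ recursion between $\|Y_t-\minparam\|$ and the client-drift $|\bT_t(I-J)|_F$ under the spectral gap $1-\rho$ --- is not the real bottleneck; the paper simply reuses Lemma~S16 and Theorem~2(ii) of \cite{songxichen2024}, which are already cited and give $\IE|\bT_t(I-J)|_F^2\lesssim\eta_t^2K$ directly.
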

Theorem \ref{thm:berry-esseen suboptimal} is proved in appendix Section \ref{proof_thm_2.2}. When $K\asymp 1$, the rate \eqref{eq:berry-esseen-suboptimal} is consistent with the well-established asymptotic theory of SGD \cite{chung1954,sacks58,fabian} for the end-term iterates. Hereafter, till the end of this section, we will continue to analyze $\bar{Y}_n$ further; exactly similar analysis also holds for $Y_n$, which we do not present separately to maintain continuity.
\subsubsection{Estimating \texorpdfstring{$\Sigma_n$}{Sigma\_n}} \label{se:sigma}
In Theorem \ref{thm: berry-esseen}, the \DFL\ updates are scaled by the matrix $\Sigma_n$, which is not usually known or estimable. This matrix originates as the covariance of the sum of independent vectors $u_s$, which acts as a linearized version of the updates $\bar{Y}_t=K^{-1}\sum_{k=1}^K \theta_t^k$. If  $S= K \var(\sum_{k=1}^K w_k g_k(\minparam, \xi^k))$, then it can be shown that $K\Sigma_n \to \Sigma$ for $\Sigma= A^{-1} S A^{-\top}$ as $n\to \infty$. In general, we show the following theorem, proved in appendix Section \ref{proof_thm_2.3}.
\begin{theorem} \label{prop:cov-est}
    Under the assumptions of Theorem \ref{thm: berry-esseen}, it holds that 
    \begin{align}\label{eq:cov-estimation}
        |\Sigma_n - K^{-1}\Sigma|_F \lesssim K^{-1/2}n^{\beta - 1},
    \end{align}
    and consequently, it holds that, with $Z' \sim N(0, K^{-1}\Sigma)$,
    \begin{align}\label{eq:final-berryesseen}
        d_{\mathrm{C}}(\sqrt{n}(\bar{Y}_n - \minparam), Z')\lesssim \sqrt{K}( n^{1/2 - \beta}+ n^{\beta-1}).
    \end{align}
\end{theorem}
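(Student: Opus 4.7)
The plan is to first establish the covariance approximation \eqref{eq:cov-estimation}, and then combine it with Theorem \ref{thm: berry-esseen} through a Gaussian-to-Gaussian comparison to obtain \eqref{eq:final-berryesseen}. I would begin by exploiting the cross-client independence of $\{\xi_s^k\}_{k \in [K]}$ to rewrite
\begin{equation*}
    \Sigma_n \;=\; n^{-1}\sum_{s=1}^n \eta_s^2\, B_s \Big(\sum_{k=1}^K w_k^2\, \IE\big[g_k(\minparam, \xi^k) g_k(\minparam, \xi^k)^\top\big]\Big) B_s^\top \;=\; K^{-1}\, n^{-1}\sum_{s=1}^n \eta_s^2\, B_s\, S\, B_s^\top,
\end{equation*}
where $B_s := \sum_{j=s}^n \mathcal{A}_s^j$. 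Since $\Sigma = A^{-1} S A^{-\top}$, the crux is to show that $\eta_s B_s$ approximates $A^{-1}$ to leading order.

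For this approximation, the telescoping identity $\eta_{j+1}\, A\, \mathcal{A}_s^j = \mathcal{A}_s^j - \mathcal{A}_s^{j+1}$, which follows from $\mathcal{A}_s^{j+1} = \mathcal{A}_s^j(I - \eta_{j+1}A)$, yields $A^{-1}(I - \mathcal{A}_s^{n+1}) = \sum_{j=s}^n \eta_{j+1}\,\mathcal{A}_s^j$, and hence
\begin{equation*}
    \eta_s B_s - A^{-1} \;=\; \sum_{j=s}^n (\eta_s - \eta_{j+1})\, \mathcal{A}_s^j \;-\; A^{-1}\, \mathcal{A}_s^{n+1}.
\end{equation*}
Strong convexity (Assumption \ref{ass:sc}) gives $|\mathcal{A}_s^j|_{op} \leq \exp(-\mu \sum_{k=s+1}^j \eta_k)$, so for $\eta_k \asymp k^{-\beta}$ this decays like $\exp(-c\,\eta_s\,(j-s))$ near $s$ while $\mathcal{A}_s^{n+1}$ is exponentially small in $n^{1-\beta}$ for all relevant $s$. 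Paired with the mean-value bound $|\eta_s - \eta_{j+1}| \lesssim \eta_s\,(j-s+1)/s$ and a geometric-sum calculation, this produces $|\eta_s B_s - A^{-1}|_F \lesssim s^{\beta-1}$. Expanding
\begin{equation*}
    K\Sigma_n - \Sigma \;=\; n^{-1}\sum_{s=1}^n \Big[(\eta_s B_s - A^{-1})\, S\, (\eta_s B_s)^\top + A^{-1} S (\eta_s B_s - A^{-1})^\top\Big],
\end{equation*}
using $|\eta_s B_s|_F \lesssim 1$, and summing $n^{-1}\sum_{s=1}^n s^{\beta-1} \asymp n^{\beta-1}$ then delivers \eqref{eq:cov-estimation}.

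For \eqref{eq:final-berryesseen} I would use the triangle inequality
\begin{equation*}
    d_{\mathrm{C}}\big(\sqrt{n}(\bar{Y}_n - \minparam), Z'\big) \;\leq\; d_{\mathrm{C}}\big(\sqrt{n}(\bar{Y}_n - \minparam), Z\big) + d_{\mathrm{C}}(Z, Z'),
\end{equation*}
with $Z \sim N(0, \Sigma_n)$, and control the first term by Theorem \ref{thm: berry-esseen}. For the second term, a standard Gaussian comparison of the form $d_{\mathrm{C}}(N(0, \Sigma_1), N(0, \Sigma_2)) \lesssim \|\Sigma_1^{-1}\|_{op}\, |\Sigma_1 - \Sigma_2|_F$, together with the spectral lower bound $\Sigma_n \succeq c K^{-1} I$ coming from the positive definiteness of $A^{-1} S A^{-\top}$ (so $\|\Sigma_n^{-1}\|_{op} \lesssim K$), converts \eqref{eq:cov-estimation} into $d_{\mathrm{C}}(Z, Z') \lesssim \sqrt{K}\, n^{\beta-1}$. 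Adding up the contributions and noting that for $\beta \in (1/2, 1)$ the dominant pieces are $\sqrt{K}\, n^{1/2-\beta}$ and $\sqrt{K}\, n^{\beta-1}$ yields the stated bound. The principal obstacle is the residual estimate $|\eta_s B_s - A^{-1}|_F \lesssim s^{\beta-1}$: the telescoping identity and the Abel-type comparison must be combined delicately with the exponential decay of $\mathcal{A}_s^j$ at rate $\mu \eta_s$ so that the slow drift of the step size is captured as the tight $s^{\beta-1}$ rate rather than a crude $O(1)$ estimate. The $\sqrt{K}$ inflation in the final bound is similarly sensitive and reflects the fact that $\Sigma_n$ has all eigenvalues of order $K^{-1}$, forcing $\|\Sigma_n^{-1}\|_{op}$ up to order $K$ in the Gaussian comparison step.
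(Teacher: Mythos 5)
Your proposal is correct and follows essentially the same route as the paper: decompose $\Sigma_n - K^{-1}\Sigma$ into terms involving $Q_s - A^{-1}$, bound these using the exponential decay of $\mathcal{A}_s^j$ combined with the smoothness of the step-size, and then combine Theorem~\ref{thm: berry-esseen} with a Gaussian comparison (Proposition~\ref{prop:ga-comp} in the paper, which is exactly the total-variation bound you invoke) via the triangle inequality. The one tactical difference is bookkeeping: you prove the pointwise bound $|\eta_s B_s - A^{-1}|_F \lesssim s^{\beta-1}$ directly from the telescoping identity and sum it, whereas the paper handles the cross terms by exchanging the order of summation (obtaining $\sum_{s=1}^n (Q_s - A^{-1}) = -A^{-1}\sum_{j=1}^n \mathcal{A}_0^j$, whose norm is $O(1)$) and then cites Lemma~A.5 of \cite{wu2024statistical} for the quadratic term $(Q_s-A^{-1})\Vk(Q_s-A^{-1})^\top$. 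Your direct pointwise estimate subsumes that cited lemma, so the two routes are mathematically equivalent; yours is self-contained while the paper's is marginally shorter by delegating to the literature. Your explanation of where the $\sqrt{K}$ inflation enters --- through $\|\Sigma_n^{-1}\|_{op}\lesssim K$ acting on the stated covariance-error rate --- is also the intended mechanism.
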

If $K =O(1)$, Theorem \ref{prop:cov-est} reduces to Lemma 1 of \cite{sheshukova2025gaussian}. 
\begin{remark}[Computation-communication trade-off]\label{rem:phase-transition}
  Theorems \ref{thm: berry-esseen} and \ref{prop:cov-est} reveal a phase transition between classical central limit theory and the Berry-Esseen rate, which isn't clear from the condition $K=o(n^{2\beta-1})$ alone. This transition arises from a computation-communication trade-off \cite{tsiatsis2005computation, le2013differentially, dieuleveut2019communication, ballotta2020computation}, which, in our context, reflects a trade-off between the step-size parameter $\beta$ and the number of clients $K$. Specifically, if $K=o(n^c)$ for some $0 \leq c \leq 1/2$, the optimal $\beta_0 \in (1/2, 1)$ minimizing \eqref{eq:final-berryesseen} is $\beta_0=3/4$. Conversely, if $K \gtrsim n^c$ for $c>1/2$, no $\beta \in (1/2, 1)$ ensures that $\sqrt{K}( n^{1/2 - \beta}+ n^{\beta-1})\to 0$. This implies that when $K \asymp n^c$ for some $c>1/2$, the Kolmogorov error remains significant, regardless of the step-size, even though central limit theory still holds for $\beta \in (1/2+c/2, 1)$. This phase transition highlights a new theoretical insight into the hardness of \DFL\ as $K$ increases.
\end{remark}

 \noindent
 An one pass estimation of $\Sigma$ is discussed in \cite{songxichen2024}. Additionally, in our appendix Section \ref{se:weighted-bootstrap}, we point towards a new direction of multiplier bootstrap, leveraging our Berry-Esseen result, that does not require covariance estimation. 

\section{A time-uniform Gaussian coupling for the DFL updates} \label{se:kmt}
Section \ref{se:berry-esseen} quantifies the Gaussian approximation of the final \DFL\ updates $\bar{Y}_n$, with an error of order $\sqrt{n}$ in terms of iterations. However, maintaining privacy in a federated setting requires one to draw sharp inferences not only on the final output but on the entire \DFL\ trajectory, particularly for detecting model poisoning or adversarial attacks. From a theoretical standpoint, when the Assumption \ref{ass:boundedloss} guarantees the existence of moments $p>4$ (for example, when the data may be close to Gaussian), we should be able to derive sharper bounds on approximation errors, beyond the $\sqrt{n}$ result in Section \ref{se:berry-esseen}. Since central-limit theory and Berry-Esseen estimates rely on fourth moments, we turn to classical strong approximation theory to exploit higher moments for precise bounds on the entire trajectory.
\subsection{Motivation and Applications}\label{se:kmt-motivation}
A time-uniform Gaussian coupling for the entire \DFL\ updates has strong practical motivations, particularly for anomaly detection in "Internet-of-Vehicles" (IoV) \cite{shalev2017formal, ghimire2024, fluk2024}. Assume that at some time point \( t_0 \in [n] \), a subset of clients \( \mathcal{K}_0 \subseteq [K] \) becomes malicious.
This model poisoning can be mathematically described by a change in their local risk functions $F_k, k \in \mathcal{K}_0$, which affects the distribution of the \DFL\ updates $Y_t$. This perspective extends to other attacks, such as \textit{LIE} (Little is Enough) or \textit{MITM} (Man in the middle) \cite{shen2016, blanchard2017, bartlett2018, lie2019}, where an adversary injects noise or perturbs communication at time $t_0$, disrupting the distribution and trajectory of $Y_t$ for $t \geq t_0$ (\cite{ding2024identifying}). Methods offering explicit theoretical guarantees on precisely detecting attack initiation are rare; most of the literature concentrates around robustness guarantees (error bounds, convergence rates) assuming a certain adversarial profile or detection of malicious clients \citep{blanchard2017, Wang2020AttackTails, qian24b_byMI}, rather than provably devising poisoning alarm. Relatedly, \cite{Mapakshi2025TemporalAnalysis} observed that attacks starting in later rounds can be more damaging compared to those present from the start.

Assume that at some time point \( t_0 \in [n] \), a subset of clients \( \mathcal{K}_0 \subseteq [K] \) becomes malicious.
This model poisoning can be mathematically described by a change in their local risk functions $F_k, k \in \mathcal{K}_0$, which affects the distribution of the \texttt{local SGD} updates $Y_t$. To identify the time-point $t_0$ sequentially, we examine a CUSUM-type statistic $R_{t}:=\max_{1\leq s \leq t}s|\bar{Y}_s -\bar{Y}_t|$, widely used in change-point analysis. We expect $R_{t}$ to be large for $t > t_0$ if an attack has altered the mean behavior of the \texttt{local SGD} updates at $t_0$. The null distribution (i.e. when no attack takes place) of $R_t$ is usually mathematically intractable, hence posing a hindrance to performing valid inference. This necessitates a bootstrap procedure. 

 To identify the time-point $t_0$ sequentially, we examine a CUSUM-type statistic $R_{t}:=\max_{1\leq s \leq t}s|\bar{Y}_s -\bar{Y}_t|$, widely used in change-point analysis. We expect $R_{t}$ to be large for $t > t_0$ if an attack has altered the mean behavior of the \DFL\ updates at $t_0$.

Suppose there exists a Gaussian process $\mathcal{G}_t$ such that a time-uniform approximation holds:  
\begin{align} \label{eq:ga-want}
    \max_{1\leq t \leq n} |t \bar{Y}_t - \mathcal{G}_t|= o_{\IP}(\sqrt{n}).
\end{align}
Let $R_t^G = \max_{1\leq s \leq t} |\mathcal{G}_s - \frac{s}{t}\mathcal{G}_t|$ Then it follows that,
\begin{eqnarray}
    n^{-1/2}\max_{1\leq t \leq n}|R_t - R_t^G| &\leq& n^{-1/2}\max_{1\leq t \leq n}\max_{1\leq s \leq t}|(s\bar{Y}_s - s \minparam - \mathcal{G}_s)- \frac{s}{t}(t\bar{Y}_t - t\minparam- \mathcal{G}_t)| \nonumber \\ &\leq&  2 n^{-1/2}\max_{1 \leq t \leq n}|t\bar{Y}_t - t\minparam- \mathcal{G}_t| \nonumber \\ &=& o_{\IP}(1). \label{eq:appl-ga}
\end{eqnarray}
Equation \eqref{eq:appl-ga} immediately suggests using Gaussian multiplier bootstrap leveraging $\mathcal{G}_t$ with precisely quantifiable approximation error. In particular, if $Q_{1-\alpha}(X)$ denotes the $(1-\alpha)$-th quantile of random variable $X$, then for a suitable positive sequence $\{a_n\}$,
\begin{align}\label{eq:algo}
\mathbb{P}(R_t > Q_{1-\alpha}(R_t^G) + a_n \text{ for some $t\in[n]$}) \leq \alpha + \mathbb{P}(\max_{1\leq t \leq n}|R_t - R_t^G| > a_n)    \to \alpha,
\end{align}
as long as $n^{-1/2}a_n \geq c$. 
We provide more details on these bootstrap algorithms in Appendix Section \ref{se:attack-algo}. The two major questions that remain, are
\begin{itemize}[noitemsep,topsep=0pt,parsep=0pt,partopsep=0pt]
    \item Does such a $\mathcal{G}_t$ exist? If yes, can we get a rate $\tau_{n,K}$ such that $\tau_{n,K} \ll \sqrt{n}$?
    \item Can we explicitly characterize its covariance structure, so as to enable bootstrap sampling?
\end{itemize}
The main results in Section \ref{se:kmt_thm} provide answers to both the questions above. 
\subsection{Optimal coupling for \DFL} \label{se:kmt_thm}
\vspace{-0.025 in}
The following theorem, proved in Section \ref{se:proof_thm_3.1}, establishes a Gaussian approximation echoing \eqref{eq:ga-want}.
\begin{theorem} \label{thm: kmt_dfl}
     For $W^k := g_k(\minparam, \xi^k)$, $\xi^k \sim \mathcal{P}_k$ independently for $k \in [K]$, let $\mathcal{V}_K=\var(\sum_{k=1}^K w_k W_k)$. Suppose Assumption \ref{ass:boundedloss} holds for a general $p\geq2$. Then, under Assumptions \ref{ass:sc}, \ref{ass:Lipschitz} and \ref{ass:sync}, (on a possibly richer probability space) there exists $Z_1, \ldots, Z_n \overset{i.i.d.}{\sim} N(0, K\mathcal{V}_K)$, such that with 
    \begin{align}\label{eq:aggr-ga-1}
    Y_{t,1}^{G} = (I- \eta_t A)Y_{t-1, 1}^{G}+ \eta_t Z_tK^{-1/2}, \ Y_{0,1}^G= \mathbf{0},
\end{align}
    it holds that,     \begin{align}\label{eq:aggregate-final}
        \max_{1\leq t\leq n}|\sum_{s=1}^t(Y_s-\minparam - Y_{s,1}^G)|= O_{\IP}(n^{1-\beta}) + o_{\IP}(n^{1/p}K^{-1/2}\log n).
    \end{align}
\end{theorem}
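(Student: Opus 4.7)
The plan is to mirror the structure the authors set up for Theorem \ref{thm: berry-esseen}: first linearize the DFL recursion around $\minparam$, then separately handle the linear (Gaussian-matchable) part and the nonlinear/consensus remainder, and finally invoke a strong approximation (KMT-type) result for the sum of i.i.d.\ noise vectors. Concretely, since $\bC\bo=\bo$, a straightforward computation shows that whether $t\in E_\tau$ or not,
\begin{equation*}
Y_t-\minparam = Y_{t-1}-\minparam - \eta_t \sum_{k=1}^K w_k \nabla f_k(\theta_{t-1}^k,\xi_t^k).
\end{equation*}
Adding and subtracting $\eta_t A(Y_{t-1}-\minparam)$ and using $\sum_k w_k \nabla F_k(\minparam)=0$, I would write $Y_t-\minparam = (I-\eta_t A)(Y_{t-1}-\minparam) + \eta_t \zeta_t + \eta_t \mathcal{E}_t$, where $\zeta_t := \sum_k w_k g_k(\minparam,\xi_t^k)$ is the i.i.d.\ noise with covariance $\mathcal{V}_K$, and $\mathcal{E}_t$ collects (i) the client-level Taylor remainder $\sum_k w_k[\nabla f_k(\theta_{t-1}^k,\xi_t^k)-\nabla f_k(\minparam,\xi_t^k)-\nabla^2 F_k(\minparam)(\theta_{t-1}^k-\minparam)]$ and (ii) the consensus-drift term $\sum_k w_k \nabla^2 F_k(\minparam)(\theta_{t-1}^k - Y_{t-1})$.

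Unrolling the above linearized recursion gives $Y_t-\minparam = \sum_{s=1}^{t}\eta_s \mathcal{A}_s^{t}\zeta_s + \sum_{s=1}^t \eta_s \mathcal{A}_s^t \mathcal{E}_s$, and the same unrolling for $Y_{t,1}^G$ gives $Y_{t,1}^G=\sum_{s=1}^t \eta_s \mathcal{A}_s^t Z_s K^{-1/2}$. Summing in $t$ and swapping order yields
\begin{equation*}
\sum_{s=1}^t(Y_s-\minparam-Y_{s,1}^G) = \sum_{r=1}^t \eta_r B_r^t (\zeta_r - Z_r K^{-1/2}) + \sum_{r=1}^t \eta_r B_r^t \mathcal{E}_r, \quad B_r^t:=\sum_{s=r}^t \mathcal{A}_r^s.
\end{equation*}
Using the telescoping identity $\eta_r \mathcal{A}_r^s A \approx \mathcal{A}_r^{s-1}-\mathcal{A}_r^s$, one has $\eta_r B_r^t A \approx I-\mathcal{A}_r^t$, so $\eta_r B_r^t \to A^{-1}$ up to an error whose operator norm is at most $|A^{-1}\mathcal{A}_r^t|$ plus terms arising from step-size slow variation $|\eta_r-\eta_{r+1}|$; these are standard and summable to an $O_{\IP}(n^{1-\beta})$ deterministic leftover. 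Thus the stochastic problem reduces, after an Abel summation, to bounding $\max_{t\leq n}|A^{-1}\sum_{r=1}^t(\zeta_r-Z_r K^{-1/2})|$ uniformly.

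For the remainder $\sum_r \eta_r B_r^t \mathcal{E}_r$, I would separately bound the two pieces of $\mathcal{E}_t$. The Taylor remainder is quadratic in $\theta_{t-1}^k-\minparam$, so its contribution is controlled by the standard MSE bound $\IE|Y_t-\minparam|^2 \lesssim \eta_t$ (which follows from Assumptions \ref{ass:sc}, \ref{ass:Lipschitz} and is already implicit in the Berry--Esseen proof), together with the client-level consensus bound $\IE|\theta_t^k - Y_t|^2 \lesssim \tau^2 \eta_t^2/(1-\rho)^2$ that uses Assumption \ref{ass:sync} (the spectral gap) to show that $\mathbf{\Theta}_t \bo/K$ tracks each $\theta_t^k$ up to an $O(\eta_t)$ drift between synchronization epochs. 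Once we have $\IE|\mathcal{E}_s|\lesssim \eta_s$, standard Abel summation with the bound $|\eta_r B_r^t|\lesssim 1$ gives $\max_t|\sum_{r=1}^t \eta_r B_r^t \mathcal{E}_r|=O_{\IP}(\sum_r \eta_r^2 + \sum_r \eta_r \cdot \eta_r)=O_{\IP}(n^{1-\beta})$, which matches the first term of \eqref{eq:aggregate-final}.

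The Gaussian coupling step is where the second term enters. Writing $\sqrt{K}\zeta_r$ as an i.i.d.\ sequence of $\R^d$-valued random vectors with covariance $K\mathcal{V}_K=O(1)$ and with $p$-th moment bounded (by Assumption \ref{ass:boundedloss}), one invokes a multivariate Sakhanenko/KMT-type strong approximation theorem (of the kind used in \cite{sakhanenko1984,Sakhanenko2006,KarmakarWu2020}) to construct $Z_r\overset{iid}{\sim}N(0,K\mathcal{V}_K)$ on an enriched probability space such that $\max_{t\leq n}|\sum_{r=1}^t(\sqrt{K}\zeta_r-Z_r)|=o_{\IP}(n^{1/p}\log n)$. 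Dividing by $\sqrt{K}$ and combining with the Abel reduction above gives the $o_{\IP}(n^{1/p}K^{-1/2}\log n)$ contribution in \eqref{eq:aggregate-final}. The main obstacle I anticipate is the tight control of the remainder $\sum_r \eta_r B_r^t \mathcal{E}_r$ uniformly in $t$: the quadratic Taylor term and the consensus gap both need to be shown to contribute only $O_{\IP}(n^{1-\beta})$ to the partial sums, which in the multivariate, non-stationary setting with periodic synchronization requires carefully exploiting the operator-norm decay of $\mathcal{A}_s^t$, the spectral gap $1-\rho$, and the slow variation of $\eta_t$ jointly. A secondary technical point is justifying the enrichment of the probability space needed to apply the multivariate KMT coupling simultaneously with the DFL noise variables; this is standard but must be stated explicitly to match the form of \eqref{eq:aggr-ga-1}.
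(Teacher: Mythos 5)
Your plan is structurally parallel to the paper's (linearize, control the consensus/nonlinear remainder, KMT-couple the i.i.d.\ linear part via an Abel-summation argument), but it is organized as a single-shot decomposition into $\eta_t\zeta_t + \eta_t\mathcal{E}_t$, whereas the paper proceeds through four successive approximations ($Y_t \to \tilde Y_t \to \tilde Y_t^\circ \to Y_t^\dagger \to Y_t^\diamond$, then couple $Y_t^\diamond$). That difference would be fine stylistically, except it hides a genuine gap in how you bound the remainder.

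The gap is in the claim $\IE|\mathcal{E}_s|\lesssim \eta_s$ and the ensuing $L^1$ Abel bound. Your item (i), written with $\nabla f_k$ rather than $\nabla F_k$, is \emph{not} purely quadratic in $\theta_{s-1}^k-\minparam$: it decomposes as $[\nabla F_k(\theta^k)-\nabla F_k(\minparam)-A_k(\theta^k-\minparam)] - [g_k(\theta^k,\xi)-g_k(\minparam,\xi)]$. The first bracket is indeed $O(|\theta^k-\minparam|^2)$; the second is only Lipschitz in $\theta^k-\minparam$ (Assumption \ref{ass:Lipschitz} gives nothing better), and after $w_k$-averaging its $L^1$ norm is of order $\sqrt{\eta_s/K^2+\eta_s^2/K}\asymp \eta_s^{1/2}/K + \eta_s/\sqrt K$. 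For $K=O(1)$, the dominating $\eta_s^{1/2}$ term is much larger than $\eta_s$, so $\IE|\mathcal{E}_s|\not\lesssim\eta_s$; a crude $L^1$ Abel bound then gives $\sum_s \eta_s^{1/2}/K \asymp n^{1-\beta/2}/K$, which is \emph{not} $O(n^{1-\beta})$ unless $K\gtrsim n^{\beta/2}$. To close this you must use the fact that $\sum_k w_k[g_k(\theta_{s-1}^k,\xi_s^k)-g_k(\minparam,\xi_s^k)]$ is a martingale difference adapted to $\mathcal{F}_s$, so the cross terms with the contraction recursion vanish and the noise contributions add in quadrature. That is precisely what the paper's Step II does: the dedicated recursion for $\|\tilde Y_t^\circ - Y_t^\dagger\|^2$ drops the cross term by orthogonality and yields $\|\tilde Y_t^\circ-Y_t^\dagger\|=O(\eta_t K^{-1}+\eta_t^{3/2}K^{-1/2})$, which sums to $O(n^{1-\beta})$ for every $K\geq 1$. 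Your proposal as written has no mechanism to deliver that gain. (A smaller point: the paper invokes the Götze--Zaitsev multivariate invariance principle to get $o_\IP(n^{1/p})$, then multiplies by $\Omega_t=O(\log n)$ from the Abel step, which yields exactly the stated $o_\IP(n^{1/p}K^{-1/2}\log n)$; your version absorbs a $\log n$ into the coupling itself and then pays an extra $\log n$ in the Abel step, giving $o_\IP(n^{1/p}K^{-1/2}(\log n)^2)$ --- harmless here but slightly lossy.)
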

 We call the Gaussian approximation iterates \eqref{eq:aggregate-final} ``Aggregated Gaussian approximation''(\ag). Note that \ag\ requires a complete sharing of the covariance structure to construct $\mathcal{V}_K$, which may affect privacy at inference-time. However, it turns out that one can further refine Theorem \ref{thm: kmt_dfl} to provide another Gaussian approximation results that exactly mimics the \DFL\ updates in their use of local structure along with periodic sharing. We call this latter approximation by \cg.
\begin{theorem}\label{thm:kmt-client-ga}
    Under the assumptions of Theorem \ref{thm: kmt_dfl}, on a possibly richer probability space, for each $k \in [K]$, there exist $Z_1^k, \ldots, Z_n^k \overset{i.i.d.}{\sim} N(0, \var(W^k))$, such that with
    \begin{align} \label{eq:theta_diamond_gaussian}
    \tilde{\bT}_t^{G} = \big((I -\eta_t A) \tilde{\bT}_{t-1}^{G} + \eta_t M_t\big)C_t, \tilde{\bT}_0^G=(\mathbf{0}, \ldots, \mathbf{0}), 
\end{align}
where $M_t:= K(w_1 Z_t^1, \ldots, w_K Z_t^K)\in \R^{d\times K}$, and $C_t$ as in \eqref{eq:local-sgd}, it holds that 
\begin{align} \label{eq:client-final}
    \max_{1\leq t\leq n}|\sum_{s=1}^t(Y_s- \minparam - Y_{s,2}^G)|= O_{\IP}(n^{1-\beta}+(n/K)^{\frac{1}{4}+ \frac{1}{2p}}(\log n)^{3/2}), \ Y_{t,2}^G=K^{-1}\tilde{\bT}_t^G \mathbf{1}.
\end{align}
\end{theorem}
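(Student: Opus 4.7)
The proof follows the same three-step architecture as Theorem \ref{thm: kmt_dfl}: (i) Taylor-linearize the client updates around $\minparam$ to replace $\bT_t$ by a linear proxy $\bT_t^\diamond$ driven by the score $W_t := K(w_1 g_1(\minparam, \xi_t^1), \dots, w_K g_K(\minparam, \xi_t^K))$; (ii) couple $W_t$ with a Gaussian surrogate $M_t$; (iii) unroll the resulting linear recursion and aggregate. The only change is in step (ii): here the Gaussian is constructed \emph{per client}, i.e.\ each $Z_s^k$ is coupled to $g_k(\minparam, \xi_s^k)$ separately, with the $K$ couplings drawn independently across $k$.

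\textbf{Recycled ingredients.} Step (i) is identical to Theorem \ref{thm: kmt_dfl}: a Taylor expansion of $\nabla f_k$ around $\minparam$, combined with $\sum_k w_k \nabla F_k(\minparam) = 0$ and the consensus bound $\max_k |\theta_t^k - \bar{\theta}_t| = O_{\IP}(\eta_t)$ (which follows from Assumption \ref{ass:sync} and frequent synchronization through $\bC$), yields the decomposition $\bT_t - \minparam \mathbf{1}^\top = \bT_t^\diamond + \mathcal{E}_t$ with $\bT_t^\diamond$ obeying the linear recursion driving \eqref{eq:theta_diamond_gaussian}. Standard MSE bounds for local SGD (Assumption \ref{ass:sc}) give $\IE|\theta_t^k - \minparam|^2 \lesssim \eta_t$, so the deterministic part of the Taylor remainder is $O(\eta_t)$ per step and accumulates to $O_{\IP}(n^{1-\beta})$ when summed over $s \le t$; this is the first term of \eqref{eq:client-final}. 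Step (iii) is also unchanged: setting $\Delta_t := \bT_t^\diamond - \tilde{\bT}_t^G$, one has $\Delta_t = ((I - \eta_t A)\Delta_{t-1} + \eta_t(W_t - M_t))C_t$, and unrolling plus Abel summation expresses $\sum_{s \le t} K^{-1}\Delta_s \mathbf{1}$ as a deterministic linear functional (with bounded coefficients) of the cumulative coupling discrepancies $\{\sum_{s \le u} \sum_k w_k (g_k(\minparam, \xi_s^k) - Z_s^k): u \le t\}$.

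\textbf{Per-client KMT.} For each $k$ independently, I apply a Sakhanenko-type multivariate strong approximation to the iid sequence $\{g_k(\minparam, \xi_s^k)\}_{s}$, which has finite $p$-th moment by Assumption \ref{ass:boundedloss}. This produces, on a possibly richer space, iid $Z_1^k, \dots, Z_n^k \sim N(0, \var(W^k))$ satisfying the single-client bound $\max_{1 \le t \le n}|\sum_{s=1}^t (g_k(\minparam, \xi_s^k) - Z_s^k)| = o_{\IP}(n^{1/p}(\log n)^{1/2})$, and the $K$ such couplings are taken to be independent across $k$.

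\textbf{Main obstacle.} The crux is controlling $\max_{1 \le t \le n}|\sum_{s=1}^t \sum_k w_k (g_k(\minparam, \xi_s^k) - Z_s^k)|$ at the target rate $(n/K)^{1/4 + 1/(2p)}(\log n)^{3/2}$. Using the per-client Sakhanenko bound followed by a triangle inequality over $k$ only gives $O_{\IP}(n^{1/p})$, losing all aggregation benefit; to partially recover the $\sqrt{K}$ improvement that Theorem \ref{thm: kmt_dfl} enjoys from coupling \emph{after} aggregation, I plan a blocking argument. Specifically, partition $[1,n]$ into blocks of length $b \asymp (n/K)^{1/2 + 1/p}$, and on each block apply a Rosenthal-type $L^p$ maximal inequality to the block-sums $\sum_{s \in \text{block}} \sum_k w_k (g_k(\minparam, \xi_s^k) - Z_s^k)$, exploiting independence across $k$ \emph{and} across blocks. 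Stitching the $n/b$ blocks together and taking the maximum over $t$ via a chaining step yields the target rate, with the $(\log n)^{3/2}$ factor absorbing the chaining cost plus one logarithm from the Markov conversion inside each block. This blocking step is both the main technical novelty and the source of the slightly looser rate in \eqref{eq:client-final} relative to \eqref{eq:aggregate-final}.
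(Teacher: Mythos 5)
Your Steps~(i) and~(iii) — Taylor linearization and unrolling the linear recursion through $\Omega_t$ — coincide with the paper's treatment (the paper simply carries over Steps~I--III from Theorem~\ref{thm: kmt_dfl} verbatim, then controls the final factor via Proposition~\ref{prop:succ-def}, which gives $\max_t\Omega_t=O(\log n)$). The genuine divergence, and the gap, is in Step~(ii). The paper does \emph{not} apply per-client Sakhanenko and then try to aggregate; instead it invokes Theorem~2.1 of Mies and Steland~\cite{mies2023sequential}, a high-dimensional sequential Gaussian approximation, applied \emph{once} to the concatenated $Kd$-dimensional i.i.d.\ sequence $\{(W_s^1,\dots,W_s^K)\}_s$. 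This yields directly
\[
\max_{1\leq t \leq n}\sum_{k=1}^K\Bigl|\sum_{s=1}^t (W_s^k- Z_s^k)\Bigr|= O_{\IP}\bigl(K^{3/4 - 1/(2p)}\,n^{1/(2p)+1/4}\sqrt{\log n}\bigr),
\]
which, after multiplying by $w_k\lesssim K^{-1}$ and by $\Omega_t=O(\log n)$, gives $(n/K)^{1/4+1/(2p)}(\log n)^{3/2}$.

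Your proposed blocking argument does not patch this gap, for two concrete reasons. First, a Sakhanenko coupling on the full window $[1,n]$ produces $Z_1^k,\dots,Z_n^k$ that are globally correlated with $\{\xi_s^k\}_{s\le n}$; the block sums $\sum_{s\in\text{block}}(W_s^k-Z_s^k)$ are therefore \emph{not} independent across blocks, which is exactly the independence your Rosenthal/chaining step requires. If instead you redo the Sakhanenko coupling block-by-block to manufacture that independence, each block of length $b$ contributes error of order $b^{1/p}$, and the optimization over $b$ drives you back to $b\asymp n$, i.e.\ to a single block — at which point only the cross-$k$ independence is left, and at best you recover a $K^{-1/2}$ factor via Hoeffding/Bernstein, not the paper's exponent. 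Second, and more fundamentally, the exponent $1/4+1/(2p)$ in $n$ is the signature of a high-dimensional/sequential Gaussian approximation (it interpolates between the CLT rate $n^{1/4}$ and the moment contribution $n^{1/(2p)}$) and does not arise from a KMT/Sakhanenko rate $n^{1/p}$ combined with maximal inequalities; the two families of rates are different for $p>2$, and no post-hoc blocking transforms one into the other. You need the Mies--Steland-type result as a black box, and your proposal is missing precisely that ingredient.
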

Note that for $p=2$, the rates of Theorems \ref{thm: kmt_dfl} and \ref{thm:kmt-client-ga} coincide. Theorem \ref{thm:berry-esseen suboptimal} is proved in Section \ref{proof:thm3.2}. In both the results, $n^{1-\beta}$ reflects the fundamental error of a generic uniform Gaussian approximation for the \DFL\ updates $Y_n$, and as such, does not depend on $K$. The second error decreases with the number of clients, as an increasing number of clients enables \DFL\ updates to track a larger horizon, and the corresponding client-averaged $Y_t$ becomes more concentrated in their trajectory towards $\minparam$, leading to sharper approximations. 
\begin{remark}[Computational differences between \ag\ and \cg]
   At each iteration $t$, \texttt{Aggr-GA} has a computational complexity of $O(d^2)$, since it involves generating one random sample followed by a matrix-vector multiplication. In contrast, \texttt{Client-GA} has a complexity of $O(Kd^2)$ per iteration. Importantly, the structure of \texttt{Client-GA} naturally allows for parallel computation between synchronization steps, significantly reducing the computational burden while preserving periodic peer-to-peer communication. 
\end{remark}

\begin{remark}[Difference with functional CLT]
\cite{li2022statistical} proved a functional CLT for \DFL\ when the number of clients $K$ is fixed. Although such a result can theoretically be extended to the general setting considered here, nevertheless our approximations \eqref{eq:aggregate-final} and \eqref{eq:client-final} are much sharper than a functional CLT approximation. As a toy example, consider the vanilla SGD setting, i.e. \DFL\ with $\tau=1$, and suppose $K=1$. Suppose $F(\theta)=(\theta-\mu)^2/2$, and $\nabla f(\theta, \xi):= \theta - \mu+\xi$. In this setting, both \ag\ and \cg\ collapse to the same Gaussian approximation 
\begin{align}\label{eq:Y_t^G}
    Y_t^G= (I-\eta_tA)Y_{t-1}^G + \eta_t Z_t, \ Z_t \sim N(\mathbf{0}, \var(\xi)), \ Y_0^G=\eo.
\end{align}
Here $A=\nabla_2F(\mu)=I$. On the other hand, the vanilla SGD iterates can also be seen as
 $ Y_t -\mu= (I-\eta_tA)(Y_{t-1} -\mu) + \eta_t \xi_t.$ 
Therefore, it can be seen that $Y_t - \mu$ and $Y_t^G$ have exactly the same covariance structure, i.e. $\cov(Y_s^G, Y_t^G)=\cov(Y_s, Y_t)$; on the other hand, even in such a simplified setting, an approximation by Brownian motion, such as that by functional CLT, captures the covariance structure of the iterates $\{Y_t- \mu\}_{t\geq 1}$ only in an asymptotic sense. The Gaussian approximation $Y_t^G$ in \eqref{eq:Y_t^G} is a particular example of covariance-matching approximations, introduced by \cite{bonnerjee2024aos}. By extension, same intuition holds for \ag\ and \cg\ as well. However, at this point, we note that the covariance-matching approximations in \cite{bonnerjee2024aos} were for short-range, univariate non-stationary process. On the other hand, in the \DFL\ setting, the polynomially decaying step-size introduces a non-stationarity that can possibly be long-range dependent. Moreover, our result allows for multivariate parameters in a direct generalization of these aforementioned, covariance-matching approximations. We empirically validate this in Section \ref{se:simu}.
\end{remark}
    Note that $n^{1-\beta}$ indicates the fixed error for the \DFL\ updates with step-sizes $\eta_t \asymp t^{-\beta}$, and in order to completely underpin the effect of the assumption of additional moments $p>2$, an optimal choice of step-size must be given so that $n^{1-\beta}$ becomes negligible compared to the second error term involving the moment $p$. This choice is indicated in the following proposition. 
\begin{proposition} \label{prop:rem-kmt}
    Grant the assumptions of Theorems \ref{thm: kmt_dfl} and \ref{thm:kmt-client-ga}, and consider the Gaussian approximations $Y_{s,1}^G$ and $Y_{s,2}^G$ defined therein. Suppose $K=o(n^c)$ for some $c\in (0,1)$. 

    (i) If $c<2/p$, then $\beta \geq 1-1/p+c/2$ ensures
        $  \max_{1\leq t\leq n}|\sum_{s=1}^t(Y_s-\minparam - Y_{s,1}^G)| = o_{\IP}(n^{\frac{1}{p}- \frac{c}{2}}\log n).$
    
    (ii) For a general $c \in (0,1)$, a choice of $\beta > 1 - (1-c)(\frac{1}{4}+ \frac{1}{2p})$ ensures that
        $ \max_{1\leq t\leq n}|\sum_{s=1}^t(Y_s-\minparam - Y_{s,2}^G)| = o_{\IP}(n^{(1-c)(\frac{1}{4}+ \frac{1}{2p})} (\log n)^{3/2}). $
\end{proposition}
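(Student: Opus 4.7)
The proposition is a direct consequence of Theorems \ref{thm: kmt_dfl} and \ref{thm:kmt-client-ga}: my plan is to substitute the assumed scaling of $K$ into the respective error bounds and then tune $\beta$ so that the deterministic-step-size error $n^{1-\beta}$ does not dominate the stochastic-coupling error. No new analytical input is required beyond bookkeeping of exponents; the heavy lifting has already been done in the two time-uniform Gaussian approximation theorems.

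For part (i), I would invoke Theorem \ref{thm: kmt_dfl}, whose bound reads $O_{\IP}(n^{1-\beta}) + o_{\IP}(n^{1/p} K^{-1/2}\log n)$. Using the assumption that $K$ scales like $n^c$, the second term becomes $o_{\IP}(n^{1/p - c/2}\log n)$. To absorb the first term into this rate, I require $n^{1-\beta} \lesssim n^{1/p - c/2}$, i.e.\ $\beta \geq 1 - 1/p + c/2$. The feasibility constraint $\beta \in (1/2, 1)$, inherited from the step-size assumption of Theorem \ref{thm: kmt_dfl}, forces $1 - 1/p + c/2 < 1$, equivalently $c < 2/p$---precisely the condition stated in part (i). The extra $\log n$ factor attached to the stochastic term ensures that the $O_{\IP}(n^{1-\beta})$ contribution is absorbed into the claimed $o_{\IP}(n^{1/p - c/2}\log n)$.

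For part (ii), the same strategy applies to Theorem \ref{thm:kmt-client-ga}, whose bound is $O_{\IP}\big(n^{1-\beta} + (n/K)^{1/4 + 1/(2p)} (\log n)^{3/2}\big)$. With $K$ of order $n^c$, the second term becomes $O_{\IP}\big(n^{(1-c)(1/4 + 1/(2p))} (\log n)^{3/2}\big)$. Taking $\beta$ strictly greater than $1 - (1-c)(1/4 + 1/(2p))$ then makes $n^{1-\beta}$ of strictly smaller polynomial order, yielding the stated rate. Admissibility in $(1/2, 1)$ is automatic here: for every $p \geq 2$ and $c \in (0,1)$, the quantity $(1-c)(1/4 + 1/(2p))$ lies in $(0, 1/2)$, so the resulting lower bound on $\beta$ already lies in $(1/2, 1)$, leaving room for a valid choice.

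The only mildly delicate point---and the closest thing to an obstacle---is keeping track of strict versus non-strict inequalities when combining $O_{\IP}$ and $o_{\IP}$ terms. In part (i) this is handled cleanly by the extra $\log n$ on the stochastic term, and in part (ii) by demanding the strict inequality $\beta > 1 - (1-c)(1/4 + 1/(2p))$ so that the first error is of strictly smaller order than the second before multiplying in the $(\log n)^{3/2}$ factor. Beyond this exponent arithmetic, the proposition is immediate.
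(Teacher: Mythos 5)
Your proof takes exactly the route the paper takes: the paper's entire proof of this proposition is the single sentence ``Proposition~\ref{prop:rem-kmt} follows trivially from Theorems~\ref{thm: kmt_dfl} and~\ref{thm:kmt-client-ga},'' and your write-up simply fills in the exponent bookkeeping (substitute the scaling of $K$, balance $n^{1-\beta}$ against the stochastic term, check feasibility of $\beta\in(1/2,1)$, which in part (i) forces $c<2/p$). That is the intended argument, and your arithmetic is right.

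One point both you and the paper pass over silently, and which is worth flagging: in parts (i) and (ii) the number of clients $K$ enters the error bounds through \emph{negative} powers, namely $K^{-1/2}$ in \eqref{eq:aggregate-final} and $(n/K)^{\frac14+\frac{1}{2p}}$ in \eqref{eq:client-final}. Under the literal hypothesis $K=o(n^c)$, $K$ is bounded \emph{above} by $n^c$, so $K^{-1/2}$ is bounded \emph{below} by a constant times $n^{-c/2}$, and $n^{1/p}K^{-1/2}$ could be much larger than $n^{1/p-c/2}$; the same applies to $(n/K)^{\frac14+\frac{1}{2p}}$ versus $n^{(1-c)(\frac14+\frac{1}{2p})}$. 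What the conclusion actually requires is $K\gtrsim n^c$ (or $K\asymp n^c$). You quietly replaced ``$K=o(n^c)$'' with ``$K$ scales like $n^c$,'' which is the reading under which the proposition holds; the paper does the same by declaring the result ``trivial.'' This is consistent with the paper's loose use of the $K=o(n^c)$ notation in Remark~\ref{rem:phase-transition}, where $K$ enters with a \emph{positive} exponent and the upper-bound reading is the correct one, but strictly speaking the hypothesis here should be a lower bound on $K$.
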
   
Cases (i) and (ii) reveal a trade-off between \ag\ and \cg. While \ag\ requires information sharing at each step, yielding better approximation, it demands a stricter choice of $\beta$, since $1-\frac{1}{p}+\frac{c}{2} > 1 - (1-c)(\frac{1}{4}+ \frac{1}{2p})$ for all $p>2, c>0$. In contrast, \cg's local operation supports $K=o(n)$ clients, aligning with~\cite{duchijmlr2013,guchen2023}. Both methods require known Hessians and local covariances, estimable efficiently via~\cite{songxichen2024}.
\section{Simulation results} \label{se:simu}
Here, we summarize the various empirical exercises to accompany our theory in Sections \ref{se:berry-esseen} and \ref{se:kmt}. In particular, in Section \ref{se:mini-simu-1}, we discuss the Berry-Esseen error $d_{\mathrm{C}}(\sqrt{n}(\bar{Y}_n - \minparam), Z)$ for $Z \sim N(0, \Sigma_n)$ with varying choices of the number of iterations $N$, number of clients $K$ and synchronization parameter $\tau$. In Section \ref{se:mini-simu-2}, we numerically explore the computation-communication trade-off discussed in Section \ref{rem:phase-transition}. Finally, in Section \ref{se:mini-simu-3}, we explore the approximation error of \ag\ and \cg\ via Q-Q plots. Detailed explanations, and additional experiments, along with the model specifications, can be found in Appendix Section \ref{se:app-simu}. All codes are available in \href{https://github.com/sohamb01/DFL-GA}{github}.   

\subsection{Effect of \texorpdfstring{$n$}{n} and \texorpdfstring{$K$}{K} on the Berry-Esseen rate}
\label{se:mini-simu-1}
As a proxy of $d_{\mathrm{C}}$, we consider $ \tilde{d}_c=\sup_{x \in [0,c]}\big|\IP(|\sqrt{n}\Sigma_n^{-1/2}(\bar{Y}_n - \minparam)|\leq x) - \IP(|Z|\leq x)\big|$ where $Z\sim N(0, I)$ for a large enough $c>0$. Figure \ref{fig:berry-esseen-0} shows how $\tilde{d}_c$ varies with varying $n,K,\tau$ when the step-size is kept fixed at $\eta_t = 0.3 t^{-0.75}$. In particular, $\tilde{d}_c$ decays with $N$ for fixed $K$, and increases with $K$ for fixed $n$. Additional simulations and further insights can be found in Appendix section \ref{se:simu-main-1}.
\begin{figure}[H]
  \centering
  \begin{minipage}[t]{0.45\textwidth}
    \centering    \includegraphics[width=0.8\textwidth,height=0.5\textwidth]{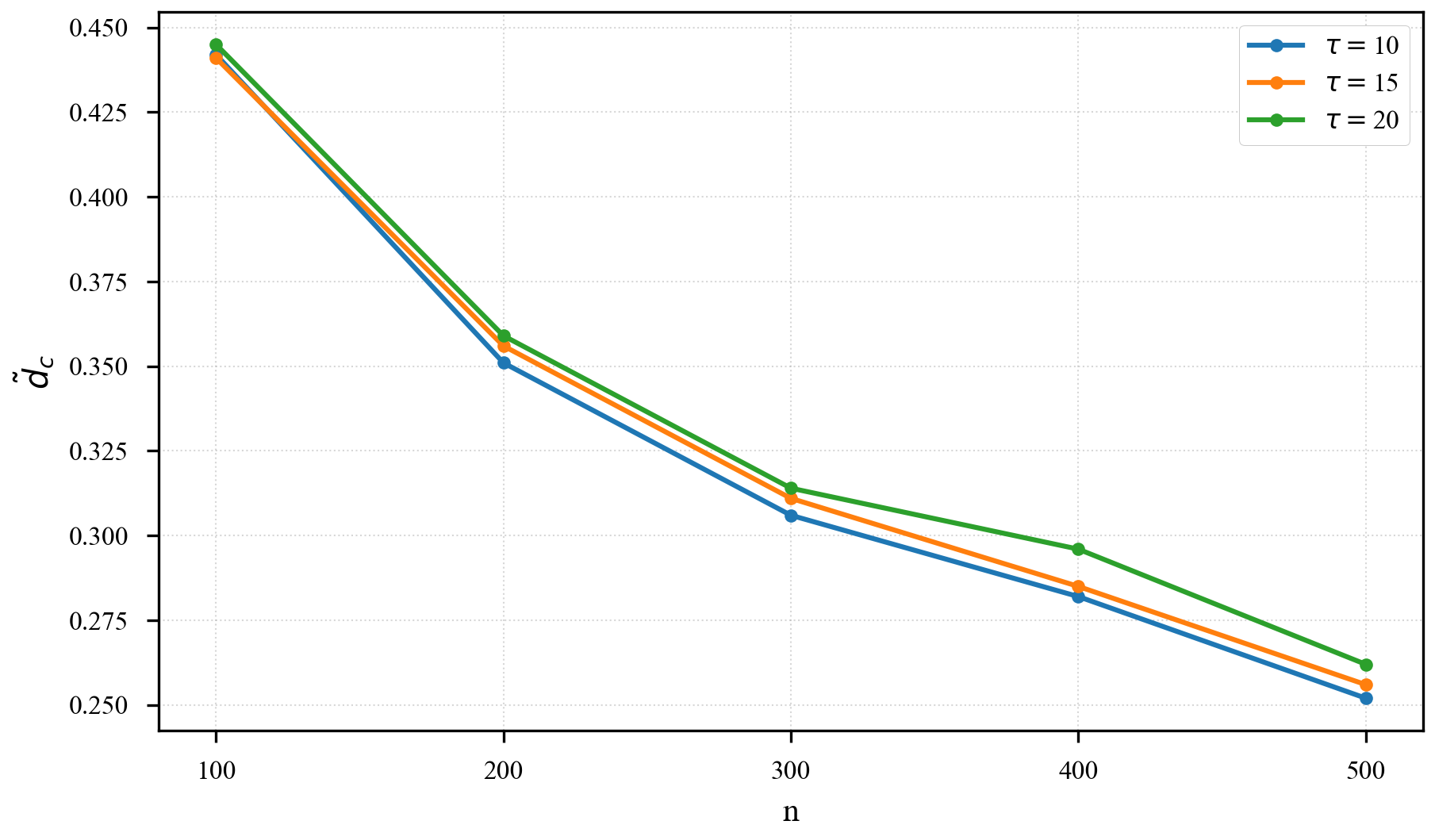}
  \end{minipage}
  \hfill
  \begin{minipage}[t]{0.45\textwidth}
    \centering
    \includegraphics[width=0.8\textwidth,height=0.5\textwidth]{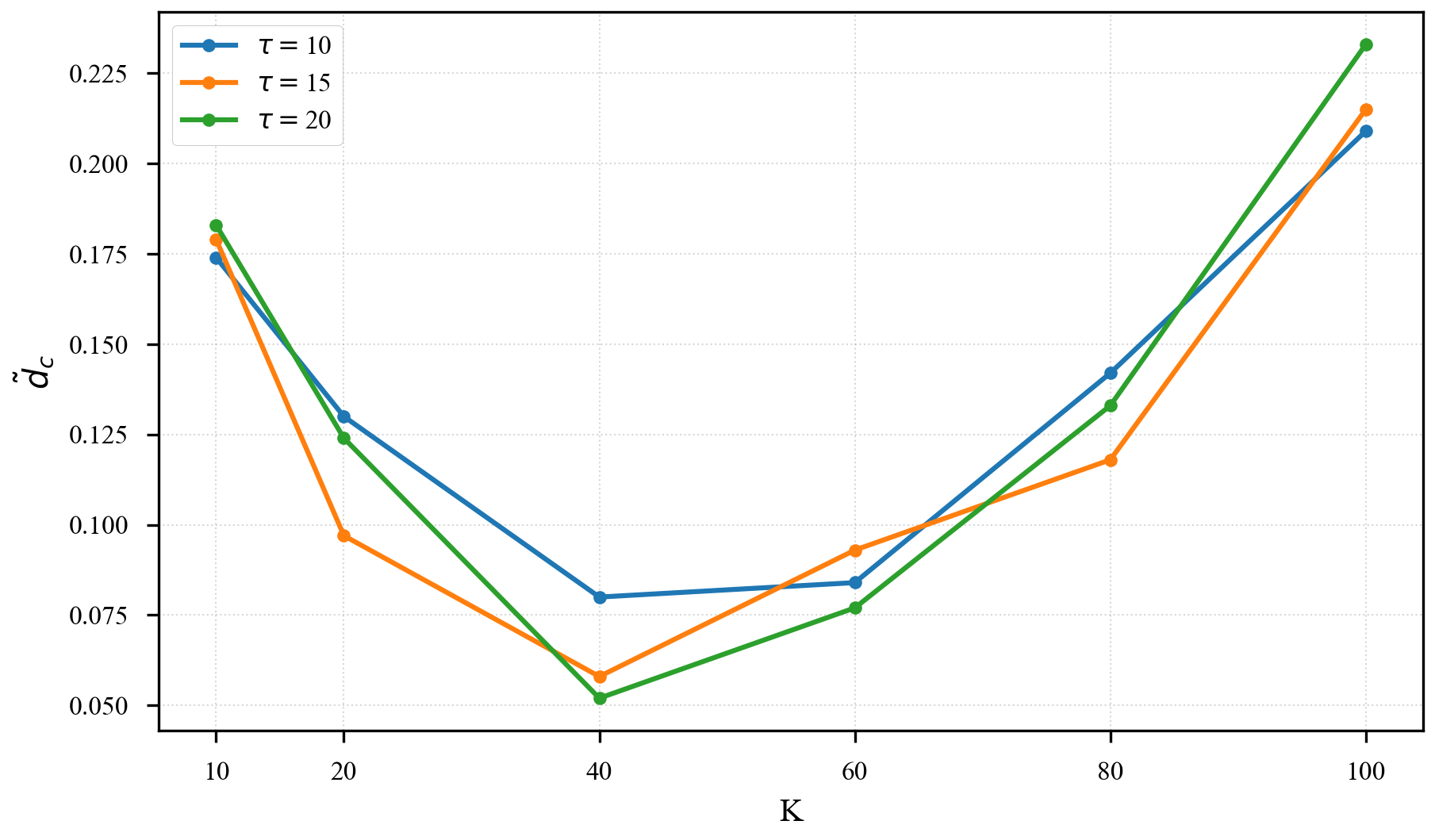}
  \end{minipage}
  \caption{Plot of $\tilde{d}_c$ against $n$ and $K$ for $\gamma=1$, and Settings 1(left), and 2(right).}
  \label{fig:berry-esseen-0}
\end{figure}
\subsection{Computation-communication trade-off}\label{se:mini-simu-2}
Here, we fix $n\in \{100,200,300,400, 500\}$, and $K=\lfloor n^r \rfloor$ for $r\in\{0.2, 0.6\}$ and numerically investigate the computation-communication trade-off hinted at in  Remark \ref{rem:phase-transition}. We run the \DFL\ algorithm with $\tau=5$, and $\eta_t=0.5 t^{-\beta}$, for $\beta \in \{0.85, 0.9, 0.95\}$. Clearly, $\tilde{d}_c$ decays with $n$ for $r=0.2$, and increases with $n$ for $r=0.6$, exemplifying our assertion about the computation-communication trade-off between $K$ and $\beta$. Appendix Section \ref{se:main-simu-2} contains additional details.
\begin{figure}[H]
  \centering
  \begin{minipage}[t]{0.49\textwidth}
    \centering
    \includegraphics[width=0.75\textwidth,height=0.6\textwidth]{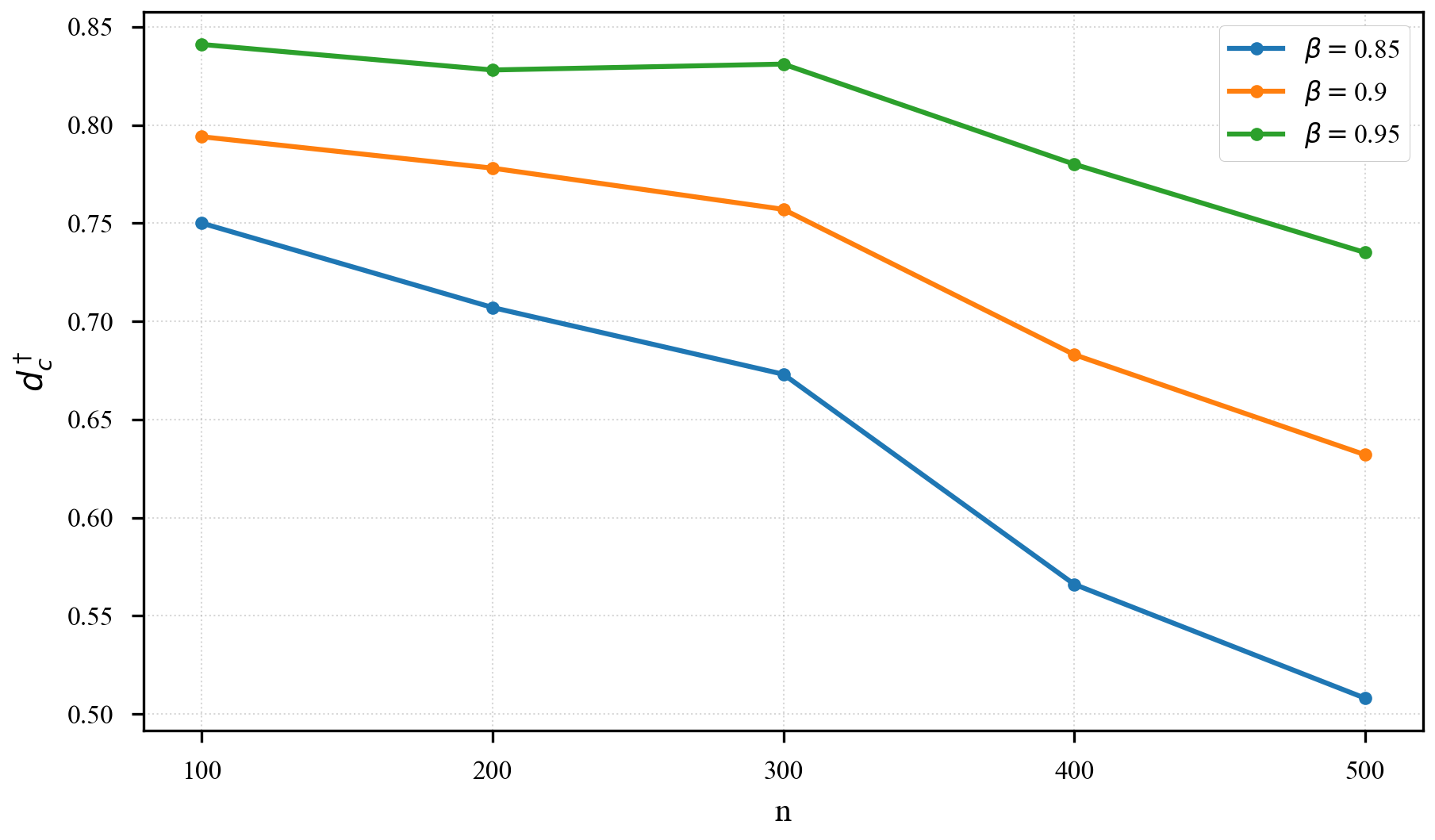}
  \end{minipage}
  \hfill
  \begin{minipage}[t]{0.49\textwidth}
    \centering
    \includegraphics[width=0.75\textwidth,height=0.6\textwidth]{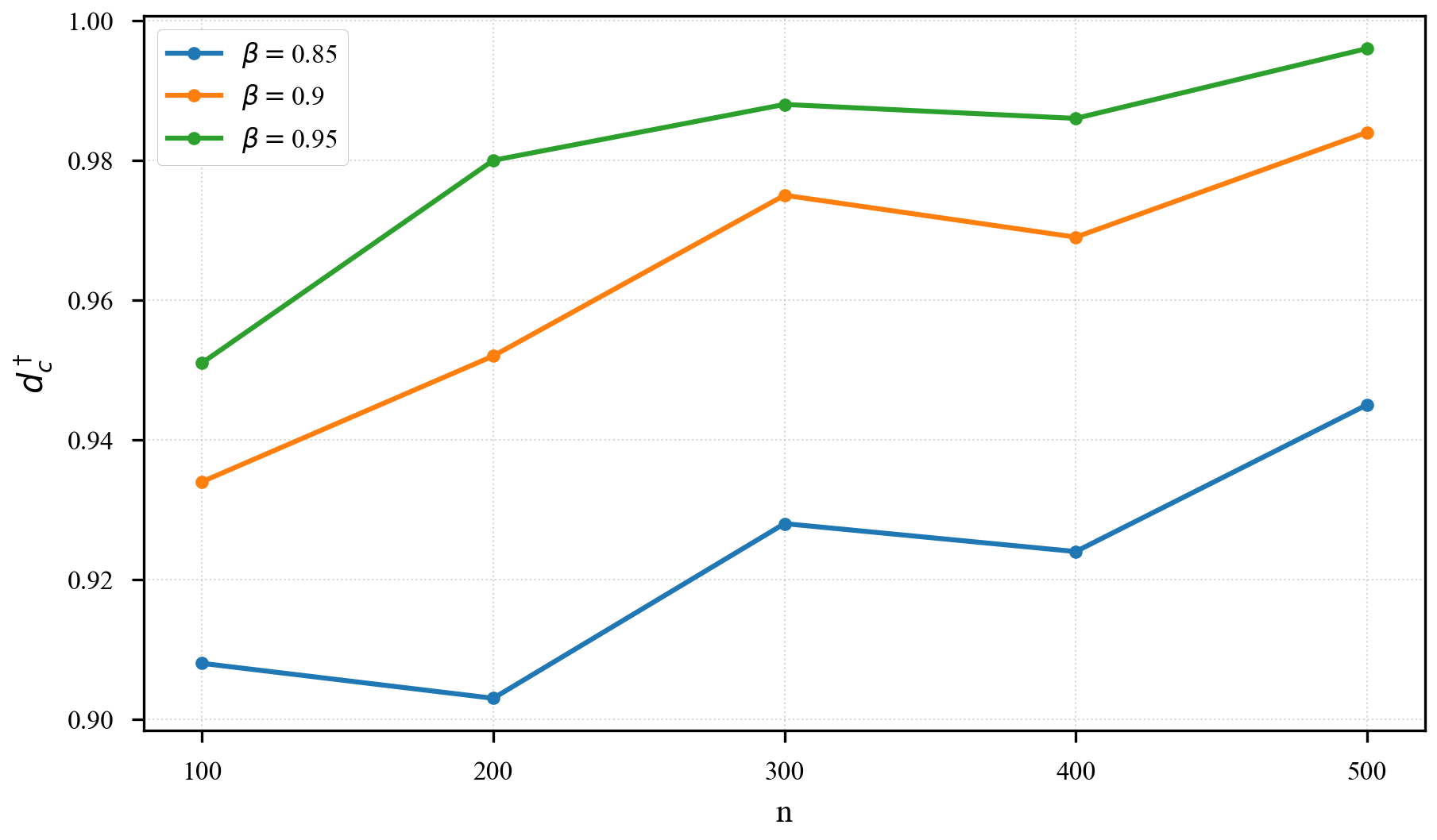}
  \end{minipage}
  \caption{Plot of ${d}_c^\dagger$ against $(n, \beta)$ for $r=0.2$ (left), and $r=0.6$ (right). Here $\gamma=0$.}
  \label{fig:phase-transition}
\end{figure}
\subsection{Performance of the time-uniform Gaussian approximations}\label{se:mini-simu-3}
In this section, we fix $N=500$, $\tau=20$, and let $K\in\{10,25,50\}$, and compare the quantiles of the maximum partial sums of \DFL\: $U_n$, \ag\: $U_n^{\ag}$, \cg\: $ U_n^{\cg}$ and approximation by Brownian motion: $U_n^{\fclt}$ . Clearly, \ag\ seems to be performing the best, as suggested by Theorems \ref{thm: kmt_dfl} and \ref{thm:kmt-client-ga}. Furthermore, $U_n^{\fclt}$ consistently has the worst approximation. Additional details can be found in appendix Section \ref{se:main-simu-3}. 
\begin{figure}[H]
  \centering
  \begin{minipage}[t]{0.32\textwidth}
    \centering
    \includegraphics[width=\textwidth,height=0.6\textwidth]{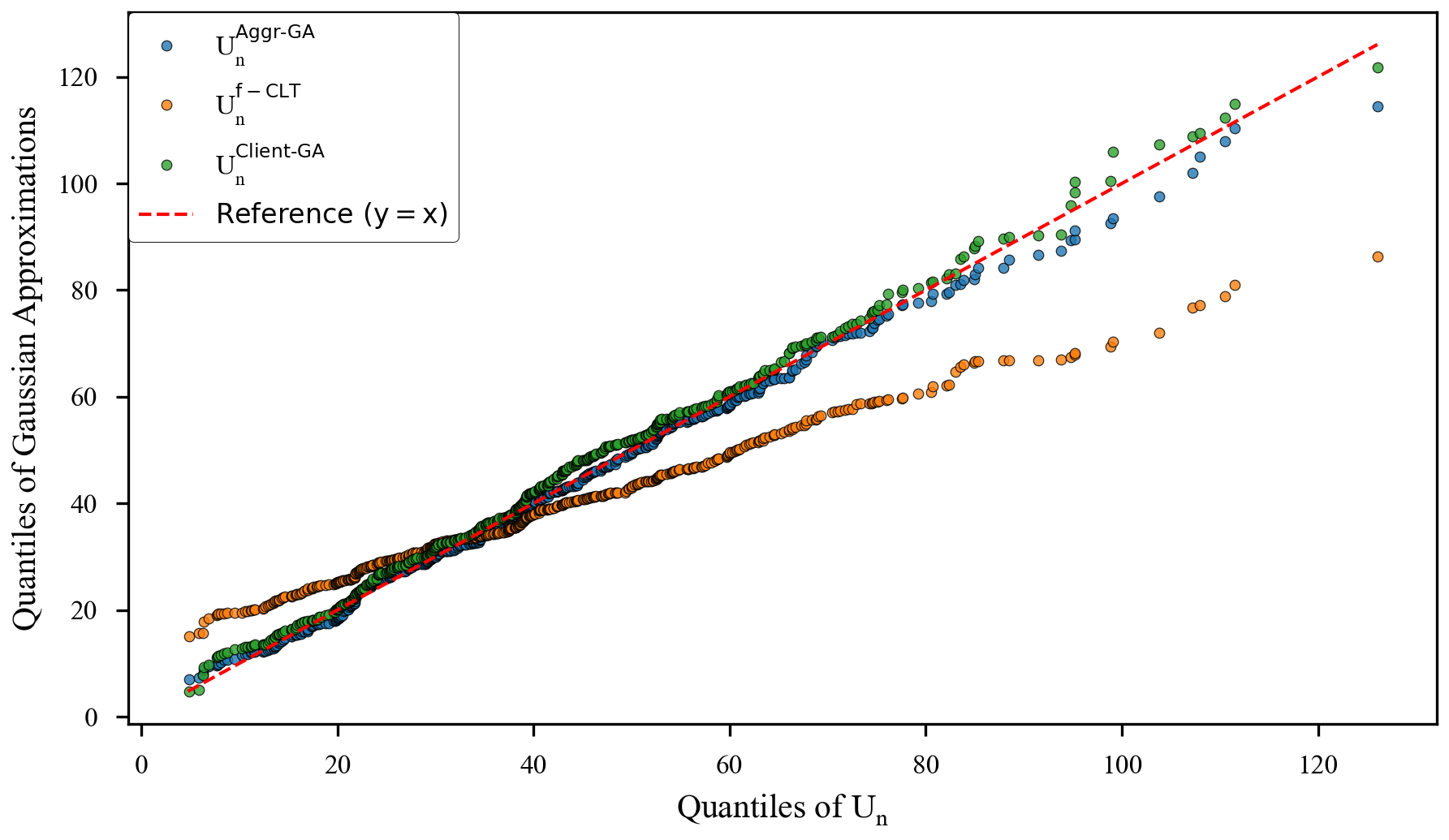}
  \end{minipage}
  \hfill
  \begin{minipage}[t]{0.32\textwidth}
    \centering
    \includegraphics[width=\textwidth,height=0.6\textwidth]{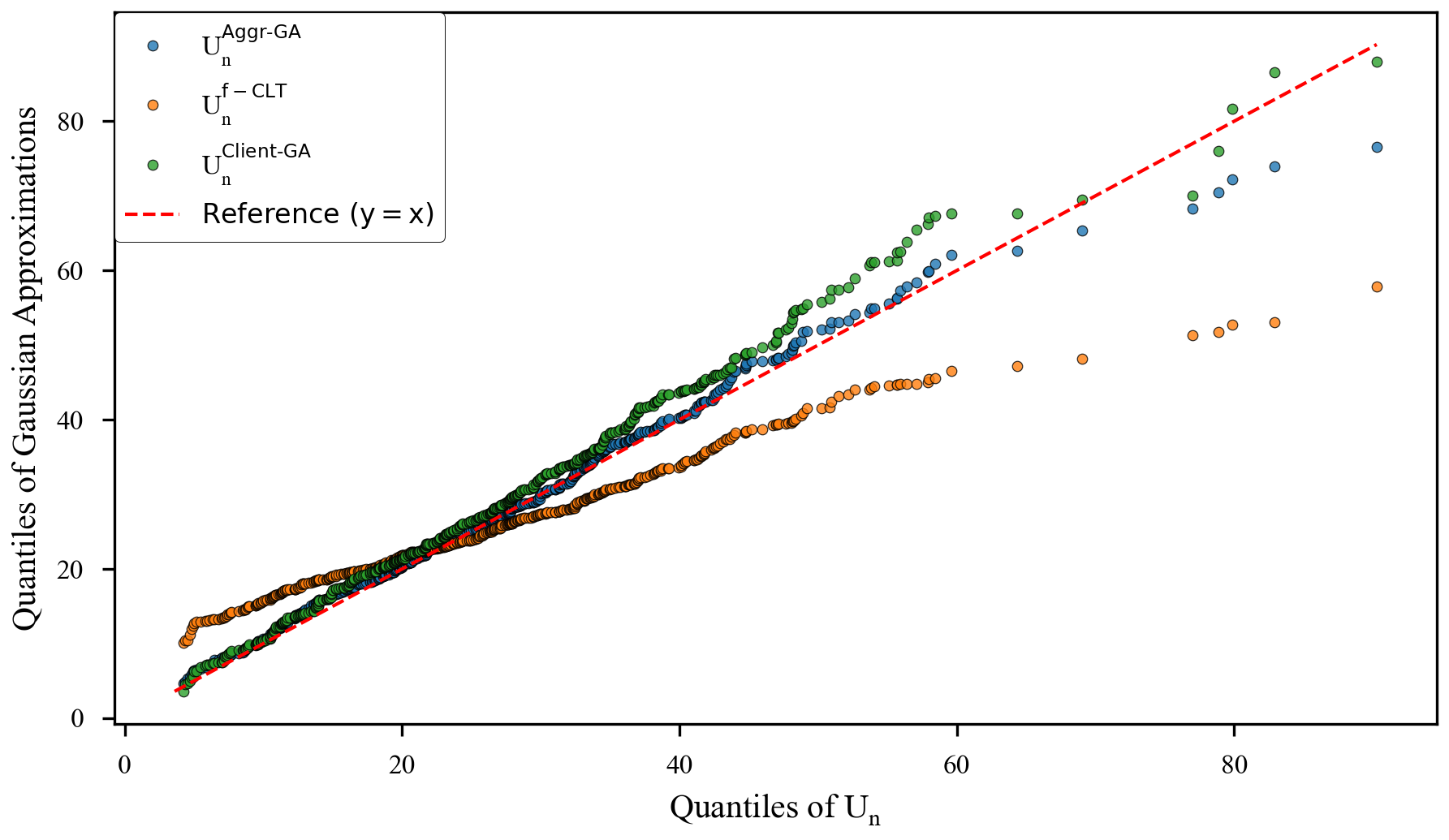}
  \end{minipage}
  \hfill
  \begin{minipage}[t]{0.32\textwidth}
    \centering
    \includegraphics[width=\textwidth,height=0.6\textwidth]{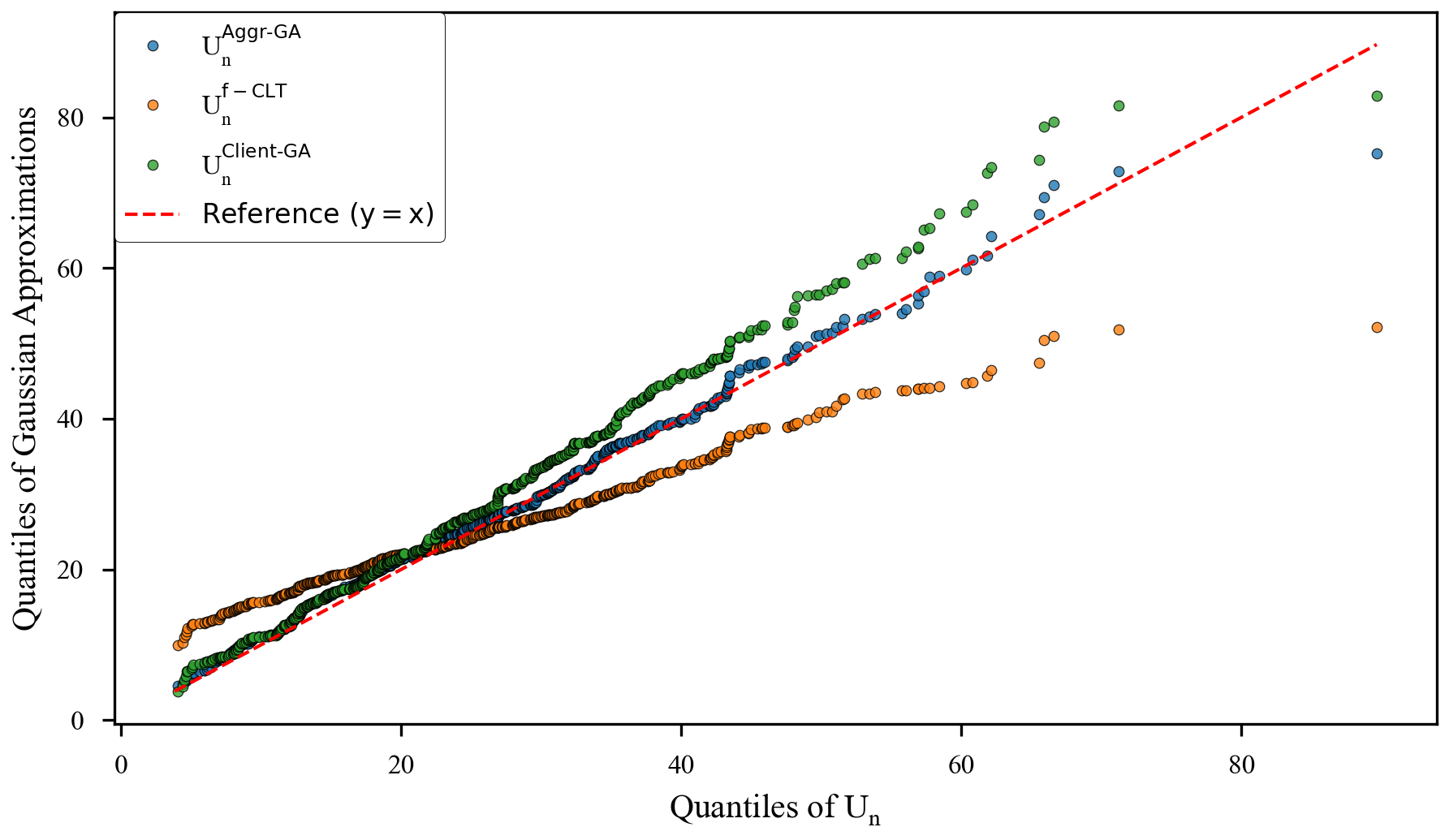}
  \end{minipage}
  \caption{QQ-plots of $U_n^\ag$ (blue), $U_n^\cg$ (green) and $U_n^\fclt$ (orange) against $U_n$ for $\gamma=1$, $N=500, \tau=20$. Here $K=10$(left), $K=25$(middle), $K=50$(right). Rest of the \model\ model specifications are as in Section \ref{se:randeffmodel}.}
  \label{fig:qq-0}
\end{figure}
\section{Conclusion}
Sharper theoretical results beyond the central limit theorem is extremely crucial to perform valid and powerful statistical inference, yet such results have not previously appeared in the literature for \DFL\ and in general, decentralized federated learning. In this context, to the best of our knowledge, this is the first work deriving Berry-Esseen bounds as well as sharp time-uniform Gaussian approximations over the \DFL\ trajectory. These results enable the development of valid and powerful statistical inference methods, including bootstrap procedures  \cite{fang2018jmlr, fang2019sjs, zhong2023online}, which can be adapted to decentralized settings. The technical framework developed herein offers a pathway to sharper results in many other related settings including multi-agent systems and transfer learning \cite{wangaos23, pan2023fedgkd, knightnips23, linicml24}. It is also crucial to make explicit the effect of synchronization in the derived rates, which can reflect more trade-offs and constitute a suitable future work.

\section{Acknowledgment}
 The authors would like to thank the reviewers, the area chair and the senior area chair for their constructive feedbacks that helped improve the paper significantly. SK and WBW thank NSF DMS grant 2124222 and NSF DMS grant 2311249 respectively for supporting their research.

\bibliographystyle{abbrvnat}
\bibliography{references}

\begin{thebibliography}{89}
\providecommand{\natexlab}[1]{#1}
\providecommand{\url}[1]{\texttt{#1}}
\expandafter\ifx\csname urlstyle\endcsname\relax
  \providecommand{\doi}[1]{doi: #1}\else
  \providecommand{\doi}{doi: \begingroup \urlstyle{rm}\Url}\fi

\bibitem[Alistarh et~al.(2017)Alistarh, Grubic, Li, Tomioka, and Vojnovic]{alistarh2017qsgd}
D.~Alistarh, D.~Grubic, J.~Li, R.~Tomioka, and M.~Vojnovic.
\newblock Qsgd: Communication-efficient sgd via gradient quantization and encoding.
\newblock In \emph{Advances in Neural Information Processing Systems (NeurIPS)}, volume~30, 2017.

\bibitem[Anastasiou et~al.(2019)Anastasiou, Balasubramanian, and Erdogdu]{anastasiou2019normal}
A.~Anastasiou, K.~Balasubramanian, and M.~A. Erdogdu.
\newblock Normal approximation for stochastic gradient descent via non-asymptotic rates of martingale clt.
\newblock In \emph{Conference on Learning Theory}, pages 115--137. PMLR, 2019.

\bibitem[Bach(2010)]{bachejs}
F.~Bach.
\newblock Self-concordant analysis for logistic regression.
\newblock \emph{Electron. J. Stat.}, 4:\penalty0 384--414, 2010.
\newblock ISSN 1935-7524.
\newblock \doi{10.1214/09-EJS521}.
\newblock URL \url{https://doi.org/10.1214/09-EJS521}.

\bibitem[Ballotta et~al.(2020)Ballotta, Schenato, and Carlone]{ballotta2020computation}
L.~Ballotta, L.~Schenato, and L.~Carlone.
\newblock Computation-communication trade-offs and sensor selection in real-time estimation for processing networks.
\newblock \emph{IEEE Transactions on Network Science and Engineering}, 7\penalty0 (4):\penalty0 2952--2965, 2020.

\bibitem[Baruch et~al.(2019)Baruch, Baruch, and Goldberg]{lie2019}
G.~Baruch, M.~Baruch, and Y.~Goldberg.
\newblock A little is enough: Circumventing defenses for distributed learning.
\newblock In H.~M. Wallach, H.~Larochelle, A.~Beygelzimer, F.~d'Alch{\'{e}}{-}Buc, E.~B. Fox, and R.~Garnett, editors, \emph{Advances in Neural Information Processing Systems 32: Annual Conference on Neural Information Processing Systems 2019, NeurIPS 2019, December 8-14, 2019, Vancouver, BC, Canada}, pages 8632--8642, 2019.
\newblock URL \url{https://proceedings.neurips.cc/paper/2019/hash/ec1c59141046cd1866bbbcdfb6ae31d4-Abstract.html}.

\bibitem[Berkes et~al.(2014)Berkes, Liu, and Wu]{kmt}
I.~Berkes, W.~Liu, and W.~B. Wu.
\newblock Koml\'os-{M}ajor-{T}usn\'ady approximation under dependence.
\newblock \emph{Ann. Probab.}, 42\penalty0 (2):\penalty0 794--817, 2014.
\newblock ISSN 0091-1798.
\newblock \doi{10.1214/13-AOP850}.
\newblock URL \url{http://dx.doi.org/10.1214/13-AOP850}.

\bibitem[Blanchard et~al.(2017)Blanchard, Mhamdi, Guerraoui, and Stainer]{blanchard2017}
P.~Blanchard, E.~M.~E. Mhamdi, R.~Guerraoui, and J.~Stainer.
\newblock Machine learning with adversaries: Byzantine tolerant gradient descent.
\newblock In I.~Guyon, U.~von Luxburg, S.~Bengio, H.~M. Wallach, R.~Fergus, S.~V.~N. Vishwanathan, and R.~Garnett, editors, \emph{Advances in Neural Information Processing Systems 30: Annual Conference on Neural Information Processing Systems 2017, December 4-9, 2017, Long Beach, CA, {USA}}, pages 119--129, 2017.
\newblock URL \url{https://proceedings.neurips.cc/paper/2017/hash/f4b9ec30ad9f68f89b29639786cb62ef-Abstract.html}.

\bibitem[Bonnerjee et~al.(2024)Bonnerjee, Karmakar, and Wu]{bonnerjee2024aos}
S.~Bonnerjee, S.~Karmakar, and W.~B. Wu.
\newblock Gaussian approximation for nonstationary time series with optimal rate and explicit construction.
\newblock \emph{Ann. Statist.}, 52\penalty0 (5):\penalty0 2293--2317, 2024.
\newblock ISSN 0090-5364,2168-8966.
\newblock \doi{10.1214/24-aos2436}.
\newblock URL \url{https://doi.org/10.1214/24-aos2436}.

\bibitem[Bottou et~al.(2018)Bottou, Curtis, and Nocedal]{bottou-nocedal}
L.~Bottou, F.~E. Curtis, and J.~Nocedal.
\newblock Optimization methods for large-scale machine learning.
\newblock \emph{SIAM Rev.}, 60\penalty0 (2):\penalty0 223--311, 2018.
\newblock ISSN 0036-1445,1095-7200.
\newblock \doi{10.1137/16M1080173}.
\newblock URL \url{https://doi.org/10.1137/16M1080173}.

\bibitem[Chaturapruek et~al.(2015)Chaturapruek, Duchi, and R{\'{e}}]{duchi15}
S.~Chaturapruek, J.~C. Duchi, and C.~R{\'{e}}.
\newblock Asynchronous stochastic convex optimization: the noise is in the noise and {SGD} don't care.
\newblock In C.~Cortes, N.~D. Lawrence, D.~D. Lee, M.~Sugiyama, and R.~Garnett, editors, \emph{Advances in Neural Information Processing Systems 28: Annual Conference on Neural Information Processing Systems 2015, December 7-12, 2015, Montreal, Quebec, Canada}, pages 1531--1539, 2015.
\newblock URL \url{https://proceedings.neurips.cc/paper/2015/hash/c8c41c4a18675a74e01c8a20e8a0f662-Abstract.html}.

\bibitem[Chen et~al.(2020)Chen, Lee, Tong, and Zhang]{chen2020statistical}
X.~Chen, J.~D. Lee, X.~T. Tong, and Y.~Zhang.
\newblock Statistical inference for model parameters in stochastic gradient descent.
\newblock \emph{Ann. Statist.}, 48\penalty0 (1):\penalty0 251--273, 2020.
\newblock ISSN 0090-5364,2168-8966.
\newblock \doi{10.1214/18-AOS1801}.
\newblock URL \url{https://doi.org/10.1214/18-AOS1801}.

\bibitem[Chernozhukov et~al.(2017)Chernozhukov, Chetverikov, and Kato]{chernozhukov2017detailed}
V.~Chernozhukov, D.~Chetverikov, and K.~Kato.
\newblock Detailed proof of nazarov's inequality.
\newblock \emph{arXiv preprint arXiv:1711.10696}, 2017.

\bibitem[Chung(1954)]{chung1954}
K.~L. Chung.
\newblock On a stochastic approximation method.
\newblock \emph{Ann. Math. Statistics}, 25:\penalty0 463--483, 1954.
\newblock ISSN 0003-4851.
\newblock \doi{10.1214/aoms/1177728716}.
\newblock URL \url{https://doi.org/10.1214/aoms/1177728716}.

\bibitem[Devroye et~al.(2018)Devroye, Mehrabian, and Reddad]{devroye2018total}
L.~Devroye, A.~Mehrabian, and T.~Reddad.
\newblock The total variation distance between high-dimensional gaussians with the same mean.
\newblock \emph{arXiv preprint arXiv:1810.08693}, 2018.

\bibitem[Dieuleveut and Patel(2019)]{dieuleveut2019communication}
A.~Dieuleveut and K.~K. Patel.
\newblock Communication trade-offs for local-sgd with large step size.
\newblock \emph{Advances in Neural Information Processing Systems}, 32, 2019.

\bibitem[Dieuleveut et~al.(2020)Dieuleveut, Durmus, and Bach]{dieuleveut20}
A.~Dieuleveut, A.~Durmus, and F.~Bach.
\newblock Bridging the gap between constant step size stochastic gradient descent and {M}arkov chains.
\newblock \emph{Ann. Statist.}, 48\penalty0 (3):\penalty0 1348--1382, 2020.
\newblock ISSN 0090-5364,2168-8966.
\newblock \doi{10.1214/19-AOS1850}.
\newblock URL \url{https://doi.org/10.1214/19-AOS1850}.

\bibitem[Ding et~al.(2024)Ding, Wang, Li, Wang, Jeon, Zhao, and Mu]{ding2024identifying}
Z.~Ding, W.~Wang, X.~Li, X.~Wang, G.~Jeon, J.~Zhao, and C.~Mu.
\newblock Identifying alternately poisoning attacks in federated learning online using trajectory anomaly detection method.
\newblock \emph{Scientific Reports}, 14\penalty0 (1):\penalty0 20269, 2024.

\bibitem[Duan and Wang(2023)]{wangaos23}
Y.~Duan and K.~Wang.
\newblock Adaptive and robust multi-task learning.
\newblock \emph{Ann. Statist.}, 51\penalty0 (5):\penalty0 2015--2039, 2023.
\newblock ISSN 0090-5364,2168-8966.
\newblock \doi{10.1214/23-aos2319}.
\newblock URL \url{https://doi.org/10.1214/23-aos2319}.

\bibitem[Fabian(1968)]{fabian}
V.~Fabian.
\newblock On asymptotic normality in stochastic approximation.
\newblock \emph{Ann. Math. Statist.}, 39:\penalty0 1327--1332, 1968.
\newblock ISSN 0003-4851.
\newblock \doi{10.1214/aoms/1177698258}.
\newblock URL \url{https://doi.org/10.1214/aoms/1177698258}.

\bibitem[Fang(2019)]{fang2019sjs}
Y.~Fang.
\newblock Scalable statistical inference for averaged implicit stochastic gradient descent.
\newblock \emph{Scand. J. Stat.}, 46\penalty0 (4):\penalty0 987--1002, 2019.
\newblock ISSN 0303-6898,1467-9469.
\newblock \doi{10.1111/sjos.12378}.
\newblock URL \url{https://doi.org/10.1111/sjos.12378}.

\bibitem[Fang et~al.(2018)Fang, Xu, and Yang]{fang2018jmlr}
Y.~Fang, J.~Xu, and L.~Yang.
\newblock Online bootstrap confidence intervals for the stochastic gradient descent estimator.
\newblock \emph{J. Mach. Learn. Res.}, 19:\penalty0 Paper No. 78, 21, 2018.
\newblock ISSN 1532-4435,1533-7928.

\bibitem[Gabrielli et~al.(2023)Gabrielli, Pica, and Tolomei]{gabrielli2023survey}
E.~Gabrielli, G.~Pica, and G.~Tolomei.
\newblock A survey on decentralized federated learning.
\newblock \emph{arXiv preprint arXiv:2308.04604}, 2023.

\bibitem[G\"{e}ttse and Za\u{\i}tsev(2009)]{gotzezaitsev}
F.~G\"{e}ttse and A.~Y. Za\u{\i}tsev.
\newblock The accuracy of approximation in the multidimensional invariance principle for sums of independent identically distributed random vectors with finite moments.
\newblock \emph{Zap. Nauchn. Sem. S.-Peterburg. Otdel. Mat. Inst. Steklov. (POMI)}, 368:\penalty0 110--121, 283--284, 2009.
\newblock ISSN 0373-2703.
\newblock \doi{10.1007/s10958-010-9935-8}.
\newblock URL \url{https://doi.org/10.1007/s10958-010-9935-8}.

\bibitem[Ghimire and Rawat(2024)]{ghimire2024}
B.~Ghimire and D.~B. Rawat.
\newblock A communication-efficient machine learning framework for the internet of vehicles.
\newblock \emph{IEEE Transactions on Cognitive Communications and Networking}, pages 1--1, 2024.
\newblock \doi{10.1109/TCCN.2024.3508776}.

\bibitem[Glasgow et~al.(2022)Glasgow, Yuan, and Ma]{glasgowdfl2022}
M.~R. Glasgow, H.~Yuan, and T.~Ma.
\newblock Sharp bounds for federated averaging (local {SGD)} and continuous perspective.
\newblock In G.~Camps{-}Valls, F.~J.~R. Ruiz, and I.~Valera, editors, \emph{International Conference on Artificial Intelligence and Statistics, {AISTATS} 2022, 28-30 March 2022, Virtual Event}, volume 151 of \emph{Proceedings of Machine Learning Research}, pages 9050--9090. {PMLR}, 2022.
\newblock URL \url{https://proceedings.mlr.press/v151/glasgow22a.html}.

\bibitem[G\"{o}tze and Zaitsev(2008)]{goetzezaitsev}
F.~G\"{o}tze and A.~Y. Zaitsev.
\newblock Bounds for the rate of strong approximation in the multidimensional invariance principle.
\newblock \emph{Teor. Veroyatn. Primen.}, 53\penalty0 (1):\penalty0 100--123, 2008.
\newblock ISSN 0040-361X,2305-3151.
\newblock \doi{10.1137/S0040585X9798350X}.
\newblock URL \url{https://doi.org/10.1137/S0040585X9798350X}.

\bibitem[Gu and Chen(2023)]{guchen2023}
J.~Gu and S.~X. Chen.
\newblock Distributed statistical inference under heterogeneity.
\newblock \emph{J. Mach. Learn. Res.}, 24:\penalty0 Paper No. 387, 56, 2023.
\newblock ISSN 1532-4435,1533-7928.

\bibitem[Gu and Chen(2024)]{songxichen2024}
J.~Gu and S.~X. Chen.
\newblock Statistical inference for decentralized federated learning.
\newblock \emph{Ann. Statist.}, 52\penalty0 (6):\penalty0 2931--2955, 2024.
\newblock ISSN 0090-5364,2168-8966.
\newblock \doi{10.1214/24-aos2452}.
\newblock URL \url{https://doi.org/10.1214/24-aos2452}.

\bibitem[Haddadpour et~al.(2019)Haddadpour, Kamani, and Mahdavi]{haddadpour2019convergence}
F.~Haddadpour, M.~Kamani, and M.~Mahdavi.
\newblock Local sgd with periodic averaging: Tighter analysis and adaptive synchronization.
\newblock In \emph{Advances in Neural Information Processing Systems (NeurIPS)}, pages 11082--11094, 2019.

\bibitem[He et~al.(2019)He, Tan, Tang, Qiu, and Liu]{he2019central}
C.~He, C.~Tan, H.~Tang, S.~Qiu, and J.~Liu.
\newblock Central server free federated learning over single-sided trust social networks.
\newblock \emph{arXiv preprint arXiv:1910.04956}, 2019.

\bibitem[Ivkin et~al.(2019)Ivkin, Rothchild, Ullah, Braverman, Stoica, and Arora]{ivkin2019}
N.~Ivkin, D.~Rothchild, E.~Ullah, V.~Braverman, I.~Stoica, and R.~Arora.
\newblock Communication-efficient distributed {SGD} with sketching.
\newblock In H.~M. Wallach, H.~Larochelle, A.~Beygelzimer, F.~d'Alch{\'{e}}{-}Buc, E.~B. Fox, and R.~Garnett, editors, \emph{Advances in Neural Information Processing Systems 32: Annual Conference on Neural Information Processing Systems 2019, NeurIPS 2019, December 8-14, 2019, Vancouver, BC, Canada}, pages 13144--13154, 2019.
\newblock URL \url{https://proceedings.neurips.cc/paper/2019/hash/75da5036f659fe64b53f3d9b39412967-Abstract.html}.

\bibitem[Kairouz et~al.(2021)Kairouz, McMahan, Avent, Bellet, Bennis, Bhagoji, Bonawitz, Charles, Cormode, Cummings, et~al.]{kairouz2021advances}
P.~Kairouz, H.~B. McMahan, B.~Avent, A.~Bellet, M.~Bennis, A.~N. Bhagoji, K.~Bonawitz, Z.~Charles, G.~Cormode, R.~Cummings, et~al.
\newblock Advances and open problems in federated learning.
\newblock \emph{Foundations and trends{\textregistered} in machine learning}, 14\penalty0 (1--2):\penalty0 1--210, 2021.

\bibitem[Karimireddy et~al.(2020)Karimireddy, Kale, Mohri, Reddi, Stich, and Suresh]{karimireddy2020scaffold}
S.~P. Karimireddy, S.~Kale, M.~Mohri, S.~J. Reddi, S.~U. Stich, and A.~T. Suresh.
\newblock Scaffold: Stochastic controlled averaging for federated learning.
\newblock In \emph{International Conference on Machine Learning (ICML)}, pages 5132--5143. PMLR, 2020.

\bibitem[Karmakar and Wu(2020)]{KarmakarWu2020}
S.~Karmakar and W.~B. Wu.
\newblock Optimal {G}aussian approximation for multiple time series.
\newblock \emph{Statist. Sinica}, 30\penalty0 (3):\penalty0 1399--1417, 2020.
\newblock ISSN 1017-0405,1996-8507.
\newblock \doi{10.5705/ss.202017.0303}.
\newblock URL \url{https://doi.org/10.5705/ss.202017.0303}.

\bibitem[Karmakar et~al.(2022)Karmakar, Richter, and Wu]{karmakar2022simultaneous}
S.~Karmakar, S.~Richter, and W.~B. Wu.
\newblock Simultaneous inference for time-varying models.
\newblock \emph{J. Econometrics}, 227\penalty0 (2):\penalty0 408--428, 2022.
\newblock ISSN 0304-4076,1872-6895.
\newblock \doi{10.1016/j.jeconom.2021.03.002}.
\newblock URL \url{https://doi.org/10.1016/j.jeconom.2021.03.002}.

\bibitem[Khaled et~al.(2020)Khaled, Mishchenko, and Richt{\'a}rik]{khaled2020tighter}
A.~Khaled, K.~Mishchenko, and P.~Richt{\'a}rik.
\newblock Tighter theory for local sgd on identical and heterogeneous data.
\newblock In \emph{International conference on artificial intelligence and statistics}, pages 4519--4529. PMLR, 2020.

\bibitem[Knight and Duan(2023)]{knightnips23}
P.~Knight and R.~Duan.
\newblock Multi-task learning with summary statistics.
\newblock In A.~Oh, T.~Naumann, A.~Globerson, K.~Saenko, M.~Hardt, and S.~Levine, editors, \emph{Advances in Neural Information Processing Systems 36: Annual Conference on Neural Information Processing Systems 2023, NeurIPS 2023, New Orleans, LA, USA, December 10 - 16, 2023}, 2023.
\newblock URL \url{http://papers.nips.cc/paper\_files/paper/2023/hash/a924b7178e5975dfed1de235f0b72973-Abstract-Conference.html}.

\bibitem[Koloskova et~al.(2020)Koloskova, Loizou, Boreiri, Jaggi, and Stich]{koloskova2020unified}
A.~Koloskova, N.~Loizou, S.~Boreiri, M.~Jaggi, and S.~U. Stich.
\newblock A unified theory of decentralized {SGD} with changing topology and local updates.
\newblock In \emph{Proceedings of the 37th International Conference on Machine Learning, {ICML} 2020, 13-18 July 2020, Virtual Event}, volume 119 of \emph{Proceedings of Machine Learning Research}, pages 5381--5393. {PMLR}, 2020.
\newblock URL \url{http://proceedings.mlr.press/v119/koloskova20a.html}.

\bibitem[Koml\'{o}s et~al.(1975)Koml\'{o}s, Major, and Tusn\'{a}dy]{komios1975approximation}
J.~Koml\'{o}s, P.~Major, and G.~Tusn\'{a}dy.
\newblock An approximation of partial sums of independent {${\rm RV}$}'s and the sample {${\rm DF}$}. {I}.
\newblock \emph{Z. Wahrscheinlichkeitstheorie und Verw. Gebiete}, 32:\penalty0 111--131, 1975.
\newblock \doi{10.1007/BF00533093}.
\newblock URL \url{https://doi.org/10.1007/BF00533093}.

\bibitem[Lalitha et~al.(2019)Lalitha, Kilinc, Javidi, and Koushanfar]{lalitha2019peer}
A.~Lalitha, O.~C. Kilinc, T.~Javidi, and F.~Koushanfar.
\newblock Peer-to-peer federated learning on graphs.
\newblock \emph{arXiv preprint arXiv:1901.11173}, 2019.

\bibitem[Lan(2012)]{lan2012optimal}
G.~Lan.
\newblock An optimal method for stochastic composite optimization.
\newblock \emph{Mathematical Programming}, 133\penalty0 (1):\penalty0 365--397, 2012.

\bibitem[Le~Ny and Pappas(2013)]{le2013differentially}
J.~Le~Ny and G.~J. Pappas.
\newblock Differentially private filtering.
\newblock \emph{IEEE Transactions on Automatic Control}, 59\penalty0 (2):\penalty0 341--354, 2013.

\bibitem[Li et~al.(2024)Li, Lou, Richter, and Wu]{listochastic}
J.~Li, Z.~Lou, S.~Richter, and W.-B. Wu.
\newblock The stochastic gradient descent from a nonlinear time series perspective.
\newblock \emph{preprint}, 2024.

\bibitem[Li et~al.(2020)Li, Sahu, Talwalkar, and Smith]{li2020federated}
T.~Li, A.~K. Sahu, A.~Talwalkar, and V.~Smith.
\newblock Federated optimization in heterogeneous networks.
\newblock In \emph{Proceedings of Machine Learning and Systems}, volume~2, pages 429--450, 2020.

\bibitem[Li et~al.(2019)Li, Yang, Wang, and Zhang]{li2019communication}
X.~Li, W.~Yang, S.~Wang, and Z.~Zhang.
\newblock Communication-efficient local decentralized sgd methods.
\newblock \emph{arXiv preprint arXiv:1910.09126}, 2019.

\bibitem[Li et~al.(2022)Li, Liang, Chang, and Zhang]{li2022statistical}
X.~Li, J.~Liang, X.~Chang, and Z.~Zhang.
\newblock Statistical estimation and online inference via local sgd.
\newblock In \emph{Proceedings of the 35th Conference on Learning Theory (COLT)}, volume 178, pages 1613--1661. PMLR, 2022.

\bibitem[Lian et~al.(2017)Lian, Zhang, Zhang, Hsieh, Zhang, and Liu]{lian2017can}
X.~Lian, C.~Zhang, H.~Zhang, C.~Hsieh, W.~Zhang, and J.~Liu.
\newblock Can decentralized algorithms outperform centralized algorithms? {A} case study for decentralized parallel stochastic gradient descent.
\newblock In I.~Guyon, U.~von Luxburg, S.~Bengio, H.~M. Wallach, R.~Fergus, S.~V.~N. Vishwanathan, and R.~Garnett, editors, \emph{Advances in Neural Information Processing Systems 30: Annual Conference on Neural Information Processing Systems 2017, December 4-9, 2017, Long Beach, CA, {USA}}, pages 5330--5340, 2017.
\newblock URL \url{https://proceedings.neurips.cc/paper/2017/hash/f75526659f31040afeb61cb7133e4e6d-Abstract.html}.

\bibitem[Lin and Reimherr(2024)]{linicml24}
H.~Lin and M.~Reimherr.
\newblock Smoothness adaptive hypothesis transfer learning.
\newblock In \emph{Forty-first International Conference on Machine Learning, {ICML} 2024, Vienna, Austria, July 21-27, 2024}. OpenReview.net, 2024.
\newblock URL \url{https://openreview.net/forum?id=v0VUsQI5yw}.

\bibitem[Lin et~al.(2018)Lin, Han, Mao, Wang, and Dally]{lin2018don}
Y.~Lin, S.~Han, H.~Mao, Y.~Wang, and W.~J. Dally.
\newblock Don't use large mini-batches, use local sgd.
\newblock In \emph{International Conference on Learning Representations (ICLR)}, 2018.

\bibitem[Liu and Wu(2010)]{liu-wu-2010-aos}
W.~Liu and W.~B. Wu.
\newblock Simultaneous nonparametric inference of time series.
\newblock \emph{Ann. Statist.}, 38\penalty0 (4):\penalty0 2388--2421, 2010.
\newblock ISSN 0090-5364,2168-8966.
\newblock \doi{10.1214/09-AOS789}.
\newblock URL \url{https://doi.org/10.1214/09-AOS789}.

\bibitem[Mapakshi et~al.(2025)Mapakshi, Akther, and Stamp]{Mapakshi2025TemporalAnalysis}
R.~Mapakshi, S.~Akther, and M.~Stamp.
\newblock {Temporal Analysis of Adversarial Attacks in Federated Learning}.
\newblock In \emph{Machine Learning, Deep Learning and AI for Cybersecurity}, pages 359--392. Springer, 2025.
\newblock \doi{10.1007/978-3-031-83157-7\_13}.
\newblock URL \url{https://arxiv.org/abs/2501.11054}.

\bibitem[McMahan et~al.(2017)McMahan, Moore, Ramage, Hampson, and y~Arcas]{mcmahan2017communication}
H.~B. McMahan, E.~Moore, D.~Ramage, S.~Hampson, and B.~A. y~Arcas.
\newblock Communication-efficient learning of deep networks from decentralized data.
\newblock In \emph{Artificial Intelligence and Statistics}, pages 1273--1282. PMLR, 2017.

\bibitem[Mies and Steland(2023)]{mies2023sequential}
F.~Mies and A.~Steland.
\newblock Sequential gaussian approximation for nonstationary time series in high dimensions.
\newblock \emph{Bernoulli}, 29\penalty0 (4):\penalty0 3114--3140, 2023.

\bibitem[Mou et~al.(2024)Mou, Pananjady, Wainwright, and Bartlett]{mou2024}
W.~Mou, A.~Pananjady, M.~J. Wainwright, and P.~L. Bartlett.
\newblock Optimal and instance-dependent guarantees for {M}arkovian linear stochastic approximation.
\newblock \emph{Math. Stat. Learn.}, 7\penalty0 (1-2):\penalty0 41--153, 2024.
\newblock ISSN 2520-2316,2520-2324.
\newblock \doi{10.4171/msl/44}.
\newblock URL \url{https://doi.org/10.4171/msl/44}.

\bibitem[Moulines and Bach(2011)]{moulines2011non}
E.~Moulines and F.~Bach.
\newblock Non-asymptotic analysis of stochastic approximation algorithms for machine learning.
\newblock \emph{Advances in neural information processing systems}, 24, 2011.

\bibitem[Nemirovski et~al.(2009)Nemirovski, Juditsky, Lan, and Shapiro]{nemirovski2009robust}
A.~Nemirovski, A.~Juditsky, G.~Lan, and A.~Shapiro.
\newblock Robust stochastic approximation approach to stochastic programming.
\newblock \emph{SIAM Journal on optimization}, 19\penalty0 (4):\penalty0 1574--1609, 2009.

\bibitem[Pan et~al.(2023)Pan, Wu, Liu, Zhang, Zhu, and Wang]{pan2023fedgkd}
Q.~Pan, R.~Wu, T.~Liu, T.~Zhang, Y.~Zhu, and W.~Wang.
\newblock Fedgkd: Unleashing the power of collaboration in federated graph neural networks.
\newblock \emph{arXiv preprint arXiv:2309.09517}, 2023.

\bibitem[Pinelis(1994)]{pinelis}
I.~Pinelis.
\newblock Optimum bounds for the distributions of martingales in {B}anach spaces.
\newblock \emph{Ann. Probab.}, 22\penalty0 (4):\penalty0 1679--1706, 1994.
\newblock ISSN 0091-1798,2168-894X.
\newblock URL \url{http://links.jstor.org/sici?sici=0091-1798(199410)22:4<1679:OBFTDO>2.0.CO;2-2&origin=MSN}.

\bibitem[Polyak and Juditsky(1992)]{polyak1992acceleration}
B.~T. Polyak and A.~B. Juditsky.
\newblock Acceleration of stochastic approximation by averaging.
\newblock \emph{SIAM Journal on Control and Optimization}, 30\penalty0 (4):\penalty0 838--855, 1992.

\bibitem[Qian et~al.(2024)Qian, Wang, Ren, and Zou]{qian24b_byMI}
C.~Qian, M.~Wang, H.~Ren, and C.~Zou.
\newblock {ByMI: Byzantine Machine Identification with False Discovery Rate Control}.
\newblock In R.~Salakhutdinov, Z.~Kolter, K.~Heller, A.~Weller, N.~Oliver, J.~Scarlett, and F.~Berkenkamp, editors, \emph{Proceedings of the 41st International Conference on Machine Learning (ICML 2024)}, volume 235 of \emph{Proceedings of Machine Learning Research}, pages 41357--41382. PMLR, 21--27 Jul 2024.
\newblock URL \url{https://proceedings.mlr.press/v235/qian24b.html}.

\bibitem[Qin et~al.(2021)Qin, Etesami, and Uribe]{qin2021}
T.~Qin, S.~R. Etesami, and C.~A. Uribe.
\newblock Communication-efficient decentralized local sgd over undirected networks.
\newblock In \emph{2021 60th IEEE Conference on Decision and Control (CDC)}, pages 3361--3366, 2021.
\newblock \doi{10.1109/CDC45484.2021.9683272}.

\bibitem[Ruppert(1988)]{ruppert1988efficient}
D.~Ruppert.
\newblock Efficient estimations from a slowly convergent robbins-monro process.
\newblock \emph{Technical Report}, 1988.

\bibitem[Sacks(1958)]{sacks58}
J.~Sacks.
\newblock Asymptotic distribution of stochastic approximation procedures.
\newblock \emph{Ann. Math. Statist.}, 29:\penalty0 373--405, 1958.
\newblock ISSN 0003-4851.
\newblock \doi{10.1214/aoms/1177706619}.
\newblock URL \url{https://doi.org/10.1214/aoms/1177706619}.

\bibitem[Sakhanenko(1984)]{sakhanenko1984}
A.~I. Sakhanenko.
\newblock Rate of convergence in the invariance principle for variables with exponential moments that are not identically distributed.
\newblock In \emph{Limit theorems for sums of random variables}, volume~3 of \emph{Trudy Inst. Mat.}, pages 4--49. ``Nauka'' Sibirsk. Otdel., Novosibirsk, 1984.

\bibitem[Sakhanenko(1989)]{sakhanenko1989}
A.~I. Sakhanenko.
\newblock On the accuracy of normal approximation in the invariance principle.
\newblock \emph{Trudy Inst. Mat. (Novosibirsk)}, 13:\penalty0 40--66, 197, 1989.
\newblock ISSN 0208-0060.

\bibitem[Sakhanenko(2006)]{Sakhanenko2006}
A.~I. Sakhanenko.
\newblock Estimates in the invariance principle in terms of truncated power moments.
\newblock \emph{Sibirsk. Mat. Zh.}, 47\penalty0 (6):\penalty0 1355--1371, 2006.
\newblock ISSN 0037-4474.
\newblock \doi{10.1007/s11202-006-0119-1}.
\newblock URL \url{http://dx.doi.org/10.1007/s11202-006-0119-1}.

\bibitem[Samsonov et~al.(2024)Samsonov, Moulines, Shao, Zhang, and Naumov]{samsonov2024gaussian}
S.~Samsonov, E.~Moulines, Q.-M. Shao, Z.-S. Zhang, and A.~Naumov.
\newblock Gaussian approximation and multiplier bootstrap for polyak-ruppert averaged linear stochastic approximation with applications to td learning.
\newblock In \emph{The Thirty-eighth Annual Conference on Neural Information Processing Systems}, 2024.

\bibitem[Shalev-Shwartz et~al.(2017)Shalev-Shwartz, Shammah, and Shashua]{shalev2017formal}
S.~Shalev-Shwartz, S.~Shammah, and A.~Shashua.
\newblock On a formal model of safe and scalable self-driving cars.
\newblock \emph{arXiv preprint arXiv:1708.06374}, 2017.

\bibitem[Shao and Zhang(2022)]{shao-berry-esseen}
Q.-M. Shao and Z.-S. Zhang.
\newblock Berry-{E}sseen bounds for multivariate nonlinear statistics with applications to {M}-estimators and stochastic gradient descent algorithms.
\newblock \emph{Bernoulli}, 28\penalty0 (3):\penalty0 1548--1576, 2022.
\newblock ISSN 1350-7265,1573-9759.
\newblock \doi{10.3150/21-bej1336}.
\newblock URL \url{https://doi.org/10.3150/21-bej1336}.

\bibitem[Shen et~al.(2016)Shen, Tople, and Saxena]{shen2016}
S.~Shen, S.~Tople, and P.~Saxena.
\newblock Auror: defending against poisoning attacks in collaborative deep learning systems.
\newblock In S.~Schwab, W.~K. Robertson, and D.~Balzarotti, editors, \emph{Proceedings of the 32nd Annual Conference on Computer Security Applications, {ACSAC} 2016, Los Angeles, CA, USA, December 5-9, 2016}, pages 508--519. {ACM}, 2016.
\newblock URL \url{http://dl.acm.org/citation.cfm?id=2991125}.

\bibitem[Sheshukova et~al.(2025)Sheshukova, Samsonov, Belomestny, Moulines, Shao, Zhang, and Naumov]{sheshukova2025gaussian}
M.~Sheshukova, S.~Samsonov, D.~Belomestny, E.~Moulines, Q.-M. Shao, Z.-S. Zhang, and A.~Naumov.
\newblock Gaussian approximation and multiplier bootstrap for stochastic gradient descent.
\newblock \emph{arXiv preprint arXiv:2502.06719}, 2025.

\bibitem[Stich(2019)]{stich2018local}
S.~U. Stich.
\newblock Local {SGD} converges fast and communicates little.
\newblock In \emph{7th International Conference on Learning Representations, {ICLR} 2019, New Orleans, LA, USA, May 6-9, 2019}. OpenReview.net, 2019.
\newblock URL \url{https://openreview.net/forum?id=S1g2JnRcFX}.

\bibitem[Tsiatsis et~al.(2005)Tsiatsis, Kumar, and Srivastava]{tsiatsis2005computation}
V.~Tsiatsis, R.~Kumar, and M.~B. Srivastava.
\newblock Computation hierarchy for in-network processing.
\newblock \emph{Mobile Networks and Applications}, 10:\penalty0 505--518, 2005.

\bibitem[Wang et~al.(2020{\natexlab{a}})Wang, Sreenivasan, Rajput, Vishwakarma, Agarwal, yong Sohn, Lee, and Papailiopoulos]{Wang2020AttackTails}
H.~Wang, K.~Sreenivasan, S.~Rajput, H.~Vishwakarma, S.~Agarwal, J.~yong Sohn, K.~Lee, and D.~S. Papailiopoulos.
\newblock Attack of the tails: Yes, you really can backdoor federated learning.
\newblock In \emph{Advances in Neural Information Processing Systems 33 (NeurIPS 2020)}, 2020{\natexlab{a}}.

\bibitem[Wang and Joshi(2021)]{wang2021}
J.~Wang and G.~Joshi.
\newblock Cooperative {SGD}: a unified framework for the design and analysis of local-update {SGD} algorithms.
\newblock \emph{J. Mach. Learn. Res.}, 22:\penalty0 Paper No. 213, 50, 2021.
\newblock ISSN 1532-4435,1533-7928.

\bibitem[Wang et~al.(2020{\natexlab{b}})Wang, Liu, Liang, Joshi, and Poor]{wang2020tackling}
J.~Wang, Q.~Liu, H.~Liang, G.~Joshi, and H.~V. Poor.
\newblock Tackling the objective inconsistency problem in heterogeneous federated optimization.
\newblock \emph{Advances in Neural Information Processing Systems (NeurIPS)}, 33:\penalty0 7611--7623, 2020{\natexlab{b}}.

\bibitem[Waudby-Smith et~al.(2024)Waudby-Smith, Arbour, Sinha, Kennedy, and Ramdas]{waudbysmith-24-aos}
I.~Waudby-Smith, D.~Arbour, R.~Sinha, E.~H. Kennedy, and A.~Ramdas.
\newblock Time-uniform central limit theory and asymptotic confidence sequences.
\newblock \emph{Ann. Statist.}, 52\penalty0 (6):\penalty0 2613--2640, 2024.
\newblock ISSN 0090-5364,2168-8966.
\newblock \doi{10.1214/24-aos2408}.
\newblock URL \url{https://doi.org/10.1214/24-aos2408}.

\bibitem[Wei et~al.(2023)Wei, Zhu, and Wu]{wei2023weighted}
Z.~Wei, W.~Zhu, and W.~B. Wu.
\newblock Weighted averaged stochastic gradient descent: Asymptotic normality and optimality.
\newblock \emph{arXiv preprint arXiv:2307.06915}, 2023.

\bibitem[Woodworth et~al.(2020{\natexlab{a}})Woodworth, Patel, and Srebro]{woodworth2020local}
B.~E. Woodworth, K.~K. Patel, and N.~Srebro.
\newblock Minibatch vs local {SGD} for heterogeneous distributed learning.
\newblock In H.~Larochelle, M.~Ranzato, R.~Hadsell, M.~Balcan, and H.~Lin, editors, \emph{Advances in Neural Information Processing Systems 33: Annual Conference on Neural Information Processing Systems 2020, NeurIPS 2020, December 6-12, 2020, virtual}, 2020{\natexlab{a}}.
\newblock URL \url{https://proceedings.neurips.cc/paper/2020/hash/45713f6ff2041d3fdfae927b82488db8-Abstract.html}.

\bibitem[Woodworth et~al.(2020{\natexlab{b}})Woodworth, Patel, Stich, Dai, Bullins, McMahan, Shamir, and Srebro]{woodworth2020}
B.~E. Woodworth, K.~K. Patel, S.~U. Stich, Z.~Dai, B.~Bullins, H.~B. McMahan, O.~Shamir, and N.~Srebro.
\newblock Is local {SGD} better than minibatch sgd?
\newblock In \emph{Proceedings of the 37th International Conference on Machine Learning, {ICML} 2020, 13-18 July 2020, Virtual Event}, volume 119 of \emph{Proceedings of Machine Learning Research}, pages 10334--10343. {PMLR}, 2020{\natexlab{b}}.
\newblock URL \url{http://proceedings.mlr.press/v119/woodworth20a.html}.

\bibitem[Wu et~al.(2024)Wu, Li, Wei, and Rinaldo]{wu2024statistical}
W.~Wu, G.~Li, Y.~Wei, and A.~Rinaldo.
\newblock Statistical inference for temporal difference learning with linear function approximation.
\newblock \emph{arXiv preprint arXiv:2410.16106}, 2024.

\bibitem[Wu and Zhao(2007)]{zhibia}
W.~B. Wu and Z.~Zhao.
\newblock Inference of trends in time series.
\newblock \emph{J. R. Stat. Soc. Ser. B Stat. Methodol.}, 69\penalty0 (3):\penalty0 391--410, 2007.
\newblock ISSN 1369-7412,1467-9868.
\newblock \doi{10.1111/j.1467-9868.2007.00594.x}.
\newblock URL \url{https://doi.org/10.1111/j.1467-9868.2007.00594.x}.

\bibitem[Xie et~al.(2020)Xie, Koyejo, Gupta, and Hegde]{xie2020asynchronous}
C.~Xie, O.~Koyejo, I.~Gupta, and C.~Hegde.
\newblock Asynchronous federated optimization.
\newblock In \emph{Advances in Neural Information Processing Systems}, volume~33, pages 19359--19369, 2020.

\bibitem[Yin et~al.(2018)Yin, Chen, Ramchandran, and Bartlett]{bartlett2018}
D.~Yin, Y.~Chen, K.~Ramchandran, and P.~L. Bartlett.
\newblock Byzantine-robust distributed learning: Towards optimal statistical rates.
\newblock In J.~G. Dy and A.~Krause, editors, \emph{Proceedings of the 35th International Conference on Machine Learning, {ICML} 2018, Stockholmsm{\"{a}}ssan, Stockholm, Sweden, July 10-15, 2018}, volume~80 of \emph{Proceedings of Machine Learning Research}, pages 5636--5645. {PMLR}, 2018.
\newblock URL \url{http://proceedings.mlr.press/v80/yin18a.html}.

\bibitem[Yuan et~al.(2024)Yuan, Wang, Sun, Yu, and Brinton]{yuan2024decentralized}
L.~Yuan, Z.~Wang, L.~Sun, P.~S. Yu, and C.~G. Brinton.
\newblock Decentralized federated learning: A survey and perspective.
\newblock \emph{IEEE Internet of Things Journal}, 2024.

\bibitem[Zhang et~al.(2013)Zhang, Duchi, and Wainwright]{duchijmlr2013}
Y.~Zhang, J.~C. Duchi, and M.~J. Wainwright.
\newblock Communication-efficient algorithms for statistical optimization.
\newblock \emph{J. Mach. Learn. Res.}, 14:\penalty0 3321--3363, 2013.
\newblock ISSN 1532-4435,1533-7928.

\bibitem[Zhong et~al.(2023)Zhong, Kuffner, and Lahiri]{zhong2023online}
Y.~Zhong, T.~Kuffner, and S.~Lahiri.
\newblock Online bootstrap inference with nonconvex stochastic gradient descent estimator.
\newblock \emph{arXiv preprint arXiv:2306.02205}, 2023.

\bibitem[Zhu et~al.(2024)Zhu, Ning, Qi, Wang, Zhuang, Sun, Huang, and Liao]{fluk2024}
M.~Zhu, W.~Ning, Q.~Qi, J.~Wang, Z.~Zhuang, H.~Sun, J.~Huang, and J.~Liao.
\newblock {FLUK:} protecting federated learning against malicious clients for internet of vehicles.
\newblock In J.~Carretero, S.~Shende, J.~Garc{\'{\i}}a{-}Blas, I.~Brandic, K.~Olcoz, and M.~Schreiber, editors, \emph{Euro-Par 2024: Parallel Processing - 30th European Conference on Parallel and Distributed Processing, Madrid, Spain, August 26-30, 2024, Proceedings, Part {II}}, volume 14802 of \emph{Lecture Notes in Computer Science}, pages 454--469. Springer, 2024.
\newblock \doi{10.1007/978-3-031-69766-1\_31}.
\newblock URL \url{https://doi.org/10.1007/978-3-031-69766-1\_31}.

\bibitem[Zhu et~al.(2023)Zhu, Chen, and Wu]{zhu-2023}
W.~Zhu, X.~Chen, and W.~B. Wu.
\newblock Online covariance matrix estimation in stochastic gradient descent.
\newblock \emph{J. Amer. Statist. Assoc.}, 118\penalty0 (541):\penalty0 393--404, 2023.
\newblock ISSN 0162-1459,1537-274X.
\newblock \doi{10.1080/01621459.2021.1933498}.
\newblock URL \url{https://doi.org/10.1080/01621459.2021.1933498}.

\end{thebibliography}

\newpage 
\section{Technical assumptions and Some comments on Section \ref{se:berry-esseen}} \label{se:assumption}
\subsection{Technical Assumptions}
For the validity of our theoretical results, we require some mild regularity conditions on the loss functions as well as the noise level of each client. We remark, that the following assumptions have appeared extensively in the theoretical analysis of iterative convex optimization algorithms, and here, are merely adapted to the federated learning setting. 
\begin{assumption}[Strong convexity]\label{ass:sc}
    There exists $\mu>0$ such that for each $k\in [K]$, 
    \[ \langle\nabla F_k(\theta) - \nabla F_k(\theta'), \theta-\theta'\rangle \geq \mu |\theta-\theta'|^2, \ \theta, \theta' \in \R^d. \]
\end{assumption}
Assumption of strong convexity is common in the analysis of SGD iterates, appearing in \cite{ruppert1988efficient, polyak1992acceleration, bottou-nocedal, chen2020statistical}, and as such, Assumption \ref{ass:sc} adapts this condition to the decentralized setting. 
\begin{assumption}(Stochastic Lipschitzness of noisy gradients)\label{ass:Lipschitz}
    There exists $L>0$ such that for each $k \in [K]$,
    \[ \IE_{\mathcal{P}_k}[|\nabla f_k(\theta, \xi^k)- f_k(\theta', \xi^k)|^2] \leq L|\theta - \theta'|^2, \ \theta, \theta' \in \R^d.\]
\end{assumption}
Assumption \ref{ass:Lipschitz} combines the $L$-smoothness condition  on the risk functions $F_k$, with a stochastic Lipschitz condition on the gradient noise vectors $g_k(\theta, \xi^k)=\nabla F_k(\theta) - \nabla f_k(\theta, \xi^k)$. 
\begin{assumption}(Control on noisy gradients)\label{ass:boundedloss}
The functions $f_k(\theta, \xi)$ is assumed to be continuously differentiable with respect to $\theta$ for any fixed $\xi$. Moreover, assume that $\max_{k\in [K]}\IE[|g_k(\theta, \xi^k)|^p]< \infty$ for some $p \geq 2$.
\end{assumption}
Assumption \ref{ass:boundedloss} ensures that Newton-Leibnitz's integration rule holds and consequently, $\sum_{k=1}^K w_kg_k(\theta_t^k, \xi_t^k)$ constitutes a martingale difference sequence adapted to the filtration $\sigma(\Xi_s: s\leq t)$, where $\Xi_s=(\xi_s^1, \ldots, \xi_s^K)$. Moreover, Assumptions \ref{ass:Lipschitz} and \ref{ass:boundedloss}, along with a mild Lipschitz condition on the Hessian, jointly imply that there exists a constant $L_Q$ such that for all $\theta\in \R^d$
\begin{align}\label{eq:taylor-series}
   \max_{k \in K}|\nabla F_k(\theta) - \nabla F_k(\minparam) - \nabla_2 F_k(\minparam)(\theta - \minparam) | \leq L_Q |\theta - \minparam|^2.
\end{align}
See Lemma 5 of the version 1 of \cite{sheshukova2025gaussian}. The assumptions \ref{ass:Lipschitz} and \ref{ass:boundedloss} are fairly ubiquitous in the stochastic optimization literature \cite{zhu-2023, wei2023weighted, listochastic}. In particular, assumption \ref{ass:Lipschitz} is much weaker than the corresponding Assumption $A2(p)-(ii)$ in \cite{sheshukova2025gaussian}. 
\subsection{Is strong-convexity Assumption \ref{ass:sc} necessary?}
It is important to note that Assumption \ref{ass:sc} fails to hold in certain M-estimation problems including logistic regression \cite{bachejs}. \cite{songxichen2024} addressed this issue by invoking a weaker local strong-convexity assumption, also known as the ``local concordance'' condition.
\begin{assumption}[Local strong concordance] \label{ass:lsc}
    There exists $\mu^{\star}>0$ such that $\nabla_2 F(\minparam)\succeq \mu^{\star}$. Moreover, there exists a constant $C>0$, and compact set $\Phi \subseteq \R^d$, such that for all $\theta_1, \theta_2 \in \Phi$, it holds that
    \[|\varphi^{\prime \prime \prime}(u)| \leq C\left|{\theta}_1-{\theta}_2\right| \varphi^{\prime \prime}(u), \text { where } \varphi: u \mapsto F\left({\theta}_1+u\left({\theta}_2-{\theta}_1\right)\right), u \in \R.\]
\end{assumption}
In view of Assumption \ref{ass:lsc}, for theoretical validity of our results, one requires a projected \DFL\ updates $\bT_t=M_{\Phi}((\bT_{t-1}- \bG_{t})C_t)$ instead of \eqref{eq:local-sgd}, where $M_{\Phi}$ denotes the projection operator on the set compact $\Phi$. The key difference in the treatment of Assumption \ref{ass:lsc} compared to that of Assumption \ref{ass:sc} lies in the analysis of the term $|\theta - \eta \nabla F(\theta)|^2$ for some small enough $\eta>0$. In particular, a recurring theme of our proofs is to show that 
\begin{align}\label{eq:leading-term-analysis}
    |\theta - \minparam -\eta \nabla F(\theta)|^2 \leq (1-\eta c)|\theta-\minparam|^2  \text{ for some $c>0$, $\theta \in \R^d$}. 
\end{align}
We highlight the different arguments leading up-to \eqref{eq:leading-term-analysis}, leveraging Assumptions \ref{ass:sc} and \ref{ass:lsc} respectively. 

\subsubsection{Proof of \eqref{eq:leading-term-analysis} via Assumptions \ref{ass:sc} and \ref{ass:Lipschitz}}\label{se:sc}
Note that
\begin{align}
    |\theta - \minparam - \eta \nabla F(\theta)|^2 =& |\theta|^2 - 2 \eta (\theta - \minparam)^\top \nabla F(\theta) +  \eta^2 |\nabla F(\theta)|^2 \nonumber\\ 
    \leq & (1 - 2\eta \mu + \eta^2 L^2) |\theta - \minparam|^2,\label{eq:square-form}
\end{align}
and hence, \eqref{eq:leading-term-analysis} is inferred by choosing $\eta$ to be small enough. In particular, since we work with decaying step size $\eta_t \propto t^{-\beta}$, it follows that $1 - 2\eta_t \mu + \eta_t^2 L^2 \leq 1-\eta_tc$ for some $c>0$ and all sufficiently large $t\in \N$.
\subsubsection{Proof of \eqref{eq:leading-term-analysis} via Assumptions \ref{ass:lsc}, \ref{ass:Lipschitz} and $|x|\leq R$} \label{se:lsc}
Fix $\theta \in \R^d$, and choose $\phi(u) = F(\minparam+ u(\theta- \minparam)), \ u\in[0,1].$ Note that $\phi^{\prime\prime}(0) \geq  \mu^{\star} |\theta-\minparam|^2$. From Assumption \ref{ass:lsc}, one directly has
\[ \phi^{\prime \prime}(u) \geq  \phi^{\prime \prime}(0) \exp(-C|\theta- \minparam|u), \]
and therefore, recalling $|x|\geq R$
\begin{align}
(\theta - \minparam)^\top \nabla F(\theta) = &\phi^\prime(1)- \phi^\prime(0) \nonumber\\ \geq & \mu^{\star} |\theta-\minparam|^2\int_0^1 \exp(-C|\theta- \minparam|u) \text{d}u \nonumber\\
= & \mu^{\star} |\theta-\minparam|^2 \frac{1 - \exp(-C|\theta- \minparam|)}{C|\theta- \minparam|}\nonumber\\
\geq & \mu^{\star} C\exp(-R) |\theta-\minparam|^2 ,
\end{align} 
which immediately can be applied to \eqref{eq:square-form} to deduce \eqref{eq:leading-term-analysis}.

In view of the analysis in Sections \ref{se:sc} and \ref{se:lsc} coming to the same conclusion, for the sake of simplicity, our subsequent theoretical findings are stated and proved using Assumption \ref{ass:sc} only. We remark that an accompanying result invoking Assumption \ref{ass:lsc} and the projected \DFL\ updates can easily be obtained via minor modifications of our arguments following Section \ref{se:lsc}. For a more detailed discussion on the implications of Assumption \ref{ass:lsc}, we refer the interested readers to Assumption 3.4 and the associated remark in \cite{songxichen2024}. 

\subsection{A comment on step-size}
 Our choice of the step-size is motivated from the extensive literature of asymptotics of various stochastic approximation algorithm. In particular, it is well-known that SGD with a constant step-size is asymptotically biased \cite{dieuleveut20, listochastic, glasgowdfl2022}, whereas central limit theory based on polynomially decaying schedule $\eta_t\propto t^{-\beta}$, $\beta\in(1/2,1)$ has an extensive literature for different algorithms. In practice, often a combination of the two kinds of step-size is used, where a constant-step size algorithm provides a warm start, and after discarding initial few iterates pre-specified by the fixed \textit{burn-in} period $k_0$, \DFL\ can be run with the polynomially decaying step-size to ensure appropriate convergence. This is tantamount to the step-size choice $\eta_{t}= \eta_0 (t-k_0)^{-\beta}$, $t>k_0$, which is also covered by our theory.
\section{Proof of Theorems \ref{thm: berry-esseen} and \ref{thm:berry-esseen suboptimal}}\label{se:proof-1}
In this section we rigorously derive the Berry-Esseen bounds on $\bar{Y}_n$ and $Y_n$, as stated in Theorems \ref{thm: berry-esseen} and \ref{thm:berry-esseen suboptimal} respectively. Similar to the simpler analysis for stochastic gradient descent in \cite{samsonov2024gaussian}, we aim to leverage Theorem 2.1 of \cite{shao-berry-esseen}. However, the regular synchronization step, as well as the general connection matrix $\bC$, induces some significant non-triviality in the problem, requiring, in particular, careful analysis of the difference between client-wise estimates and the aggregated estimate. Before we delve deeper into the mathematical details, we summarize the road-map to prove Theorem \ref{thm: berry-esseen} below. 
\begin{itemize}
    \item In Section \ref{se:linearization}, we decompose the \DFL\ updates into a linear component and the remainder terms.
    \item In Section \ref{se:coupling}, we echo the Lindeberg method, and define a coupling for the remainder terms. In particular, our choice of the coupling is novel, and rooted into the uniqueness of the decentralized setting. 
    \item Finally, in Sections \ref{se:bound-start}-\ref{se:bound-final}, we control the different terms arising out of the application of the abstract Theorem 2.1 of \cite{shao-berry-esseen} to the steps above. We remark that this is where our treatment diverges from the preceding works proving Berry-Esseen in a stochastic approximation framework. To accommodate an increasing number of clients $K$ as well as to control the error of the each local client-level iterates, we derive and apply the Auxiliary results \ref{lemma:shift-second-moment}-\ref{prop:shift-fourth-moment}. 
\end{itemize}
The proof for Theorem \ref{thm:berry-esseen suboptimal} will follow a similar structure.
\subsection{Proof of Theorem \ref{thm: berry-esseen}} \label{se:proof_thm_2.1}

\subsubsection{Linearization} \label{se:linearization}
Noting that $Y_t=K^{-1}\bT_t\bo$ no matter if $t\in E_\tau$ or $t\notin E_\tau$, it is easy to observe
   \begin{align} \label{eq: Y-recursion}
       Y_t = Y_{t-1} - \eta_t \sum_{k=1}^K w_k \nabla f_k(\theta_{t-1}^k, \xi_t^k), \ t\in [n], \ Y_0 =K^{-1}\sum_{k=1}^K \theta_0^k.
   \end{align}
   Write \eqref{eq: Y-recursion} as follows:
   \begin{align} \label{eq:Y-decomposition}
       Y_t - \minparam =& Y_{t-1} - \minparam - \eta_t \nabla F(Y_{t-1}) +\eta_t\sum_{k=1}^K w_k(\nabla F_k(Y_{t-1}) - \nabla F_k(\theta_{t-1}^k)) + \eta_t\sum_{k=1}^K w_k g_k(\theta_{t-1}^k, \xi_t^k) \nonumber \\
        =& (I - \eta_t A)(Y_{t-1} - \minparam) + \eta_t\big(A(Y_{t-1} - \minparam) - \nabla F(Y_{t-1})\big) \nonumber\\
    & \hspace{3cm} + \eta_t \sum_{k=1}^K w_k (\nabla F_k(Y_{t-1}) - \nabla F_k(\theta_{t-1}^k))  + \eta_t \sum_{k=1}^K w_k g_k(\theta_{t-1}^k, \xi_t^k),
   \end{align}
   where $g_k(\theta, \xi):=\nabla F_k(\theta)-\nabla f_k(\theta, \xi)$ denote the gradient noise. Denote $\mathcal{A}_s^t=\prod_{j=s+1}^t (I-\eta_j A)$, $\mathcal{A}_t^t=I$ with $A:=\nabla_2 F(\minparam)$, and define $Q_s=\eta_s \sum_{j=s}^n \mathcal{A}_s^j$. Recursively, \eqref{eq:Y-decomposition} can be simplified to 
\begin{align} \label{eq:Y_final_decomp}
   Y_t - \minparam =& \mathcal{A}_0^t(Y_0 - \minparam)+ \sum_{s=1}^t \eta_s \mathcal{A}_s^t \Big( \big(A(Y_{s-1} - \minparam) - \nabla F(Y_{s-1})\big) + \nonumber\\ & \hspace*{3cm} + \sum_{k=1}^K w_k (\nabla F_k(Y_{s-1}) - \nabla F_k(\theta_{s-1}^k))  + \sum_{k=1}^K w_k g_k(\theta_{s-1}^k, \xi_s^k)\Big),
\end{align}
which immediately yields,
\begin{align} \label{eq:Y-bar-decomp}
    \bar{Y}_n - \minparam=& n^{-1}\eta_0^{-1} Q_0 (Y_0 - \minparam)+ n^{-1} \sum_{s=1}^n Q_s \mathcal{N}_s +    n^{-1}\sum_{s=1}^n Q_s \Big( \big(A(Y_{s-1} - \minparam)  - \nabla F(Y_{s-1})\big) \nonumber\\ & \hspace*{1cm} +  \sum_{k=1}^K w_k (\nabla F_k(Y_{s-1}) - \nabla F_k(\theta_{s-1}^k))  + \sum_{k=1}^K w_k \big(g_k(\theta_{s-1}^k, \xi_s^k)-g_k(\minparam, \xi_s^k)   \big)\Big),
\end{align}
where we define that $\mathcal{N}_t=\sum_{k=1}^K w_k W_t^k$, with $W_t^k= g_k(\minparam, \xi_t^k)$. Let $H= n^{-1/2} \sum_{s=1}^n Q_s \mathcal{N}_s$, and let $\Sigma_n = \IE[HH^\top]$. Then, \eqref{eq:Y-bar-decomp} can be re-written as 
\begin{align}
    \sqrt{n}\Sigma_n^{-1/2}(\bar{Y}_n - \minparam) = W + D_1 +  D_2 + D_3+ D_4, \label{eq:W-D decomposition}
\end{align}
where
\begin{align}
    W &= \Sigma_n^{-1/2}H = \sum_{s=1}^n u_s, \ \text{where} \ u_s= \frac{1}{\sqrt{n}}\Sigma_n^{-1/2} Q_s \mathcal{N}_s, \label{eq:W-def}\\
    D_1 &= \frac{1}{\sqrt{n}\eta_0} \Sigma_n^{-1/2} Q_0(Y_0-\minparam), \label{eq:D1-def}\\
    D_2 &= \frac{1}{\sqrt{n}}\Sigma_n^{-1/2}\sum_{s=1}^n Q_s\big(A(Y_{s-1} - \minparam)  - \nabla F(Y_{s-1})\big), \label{eq:D2-def}\\
    D_3 &= \frac{1}{\sqrt{n}} \Sigma_n^{-1/2}\sum_{s=1}^n Q_s \Big( \sum_{k=1}^K w_k (\nabla F_k(Y_{s-1}) - \nabla F_k(\theta_{s-1}^k)) \Big), \label{eq:D3-def}\\
    D_4 &= \frac{1}{\sqrt{n}} \Sigma_n^{-1/2}\sum_{s=1}^n Q_s \Big( \sum_{k=1}^K w_k \big(g_k(\theta_{s-1}^k, \xi_s^k)-g_k(\minparam, \xi_s^k) \big) \Big). \label{eq:D4-def}
\end{align}
\subsubsection{Definition of the Lindeberg Coupling} \label{se:coupling}
Note that 
\begin{align}
    |D_3|_2 \leq & C\frac{b_2\sqrt{L}}{K\sqrt{n}} |\Sigma_n^{-1/2}|_F \sum_{s=1}^n \sum_{k=1}^K |Y_{s-1} - \theta_{s-1}^k|_2 \label{eq: bound-on-w-and-Lipschitz}\\
    \leq & C\frac{b_2\sqrt{L}}{\sqrt{nK}}  |\Sigma_n^{-1/2}|_F \sum_{s=1}^n \sqrt{\sum_{k=1}^K |Y_{s-1}- \theta_{s-1}^k|^2} \label{eq:cauchy-schwarz} \\
    =& C\frac{b_2\sqrt{L}}{\sqrt{nK}}  |\Sigma_n^{-1/2}|_F \sum_{s=1}^n |\bT_s(I-J)|_F := \Delta_3.\label{eq:define delta_3}
\end{align}
In the above series of inequalities, \eqref{eq: bound-on-w-and-Lipschitz} follows from $w_k \leq b_2K^{-1}$, $\max_s |Q_s|_F \leq C$, and Assumption \ref{ass:Lipschitz}; \eqref{eq:cauchy-schwarz} is a trivial consequence of Cauchy-Schwarz inequality. Additionally, define $\Delta_l=|D_l|_2$ for $l=1,2,4$. Let $\Xi_t=(\xi_t^1, \ldots, \xi_t^K)$, and for each $i \in [n]$, let us denote $$\shift{\Xi}{t}=\begin{cases}
    \Xi_t, &t\neq i \\
    \Xi_i':=(\xi_t^{1'}, \ldots, \xi_t^{K'}), & t=i,
\end{cases}$$
where $\xi_t^{k'}, \xi_t^k \overset{i.i.d.}{\sim}\mathcal{P}_k$, $k \in [K], t \in [n]$. For each $i \in [n]$, define the coupled DFL iterates as
\begin{align}\label{eq:coupled-dfl}
\shift{\bT}{t} = (\shift{\bT}{t-1}- \eta_t \shift{\bG}{t})C_t , \ \shift{\Theta}{0}=\Theta_0 ,  
\end{align}
where $\shift{\bG}{t}=K(w_1 \nabla f_1(\shift{\theta}{t-1}^1, \shift{\xi}{t}^1), \ldots, w_K \nabla f_K(\shift{\theta}{t-1}^K, \shift{\xi}{t}^K))$. Let $\shift{Y}{t} =K^{-1}\shift{\bT}{t}\mathbf{1}$. Based on \eqref{eq:coupled-dfl}, we can define coupled versions of $\Delta_l$, $l=2,3,4$ as follows:
\begin{align}
    \shift{\Delta}{2}=&\frac{1}{\sqrt{n}}\bigg|\Sigma_n^{-1/2}\sum_{s=1}^n Q_s\big(A(\shift{Y}{s-1} - \minparam)  - \nabla F(\shift{Y}{s-1})\big)\bigg|, \label{eq:D2-shift-def}\\
    \shift{\Delta}{3}=& C\frac{b_2\sqrt{L}}{\sqrt{nK}} |\Sigma_n^{-1/2}|_F\sum_{s=1}^n  \big|\shift{\Theta}{s}(I-J )\big|_F, \label{eq:D3-shift-def}\\
    \shift{\Delta}{4} =& \frac{1}{\sqrt{n}} \bigg| \Sigma_n^{-1/2}\sum_{s=1}^n Q_s \Big( \sum_{k=1}^K w_k \big(g_k(\shift{\theta}{s-1}^k, \shift{\xi}{s}^k)-g_k(\minparam, \shift{\xi}{s}^k) \big) \Big) \bigg|. \label{eq:D4-shift-def}
\end{align}
Note that $\shift{D}{1}=D_1$ for all $i \in [n]$. With these definitions, along with the fact that $\IE[WW^\top]=I$ allows us to apply \cite{shao-berry-esseen}, Theorem 2.1 on \eqref{eq:W-D decomposition} to obtain
\begin{align} \label{eq:berry-esseen_initial}
    d_C( \sqrt{n}\Sigma_n^{-1/2}(\bar{Y}_n - \minparam) , Z) \leq c_1 \sqrt{d}\Upsilon_n + \IE[|W||\Delta_n|] + \sum_{i=1}^n \IE[|u_i| \  |\Delta_n- \shift{\Delta}{n}|],
\end{align}
where $Z \sim N(0, I)$, $\Upsilon_n =\sum_{s=1}^n \IE[|u_s|^3]$, $\Delta_n=\sum_{l=1}^4 |\Delta_l|$, and $\shift{\Delta}{n}=\sum_{l=1}^4 \shift{\Delta}{l}$.
\subsubsection{Bound on $\sum_{i=1}^n \IE[|u_i| \  |\Delta_n- \shift{\Delta}{n}|$}
Recall that $\max_k |\var[W_s^k]|=O(1)$. Clearly $\mathcal{N}_s$ are i.i.d. and $\IE[\mathcal{N}_s \mathcal{N}_s^\top]=\sum_{k=1}^K w_k^2 \var[W_s^k]$, which directly implies $|\Sigma_n| = O(K^{-1})$ in view of the fact $w_k \asymp K^{-1}$ for $k \in [K]$. Therefore, from \eqref{eq:W-def} it follows $\IE[|u_s|^2]=O(1/n)$, and consequently,
\begin{align}
    &\sum_{i=1}^n \IE[|u_i| \  |\Delta- \shift{\Delta}{n}|]
    \lesssim  \frac{1}{\sqrt{n}} \sum_{l=2}^4\sum_{i=1}^n\sqrt{\IE[|\Delta_l - \shift{\Delta}{l}|^2]}. \label{eq:shift-bound-important term}
\end{align}
We will deal with the three terms in the right side of \eqref{eq:shift-bound-important term} one-by-one.

\subsubsection{Bound on $\Delta_2 - \shift{\Delta}{2}$} \label{se:bound-start}
We start with controlling $\IE[|\Delta_2 - \shift{\Delta}{2}|^2]$. It is easy to see from \eqref{eq:D2-def} and \eqref{eq:D2-shift-def} that
\begin{align}
    \IE[|\Delta_2 - \shift{\Delta}{2}|^2] \lesssim & {\frac{K}{n}} \IE\bigg[\Big| \sum_{s=i}^n \big(A(Y_{s} - \shift{Y}{s}) - \nabla F(Y_{s}) + \nabla F(\shift{Y}{s})\big)  \Big| ^2\bigg] \nonumber\\
    \lesssim & K \sum_{s=i}^n \IE[|Y_s - \shift{Y}{s}|^4]= O(Kn^{1-4\beta} - Ki^{1-4\beta}),
    \label{eq:bound-on-delta-2}
\end{align}
where \eqref{eq:bound-on-delta-2} follows from Cauchy-Schwarz inequality and Proposition \ref{prop:shift-fourth-moment}.

\subsubsection{Bound on $\Delta_3 - \shift{\Delta}{3}$}
Note that, since $\shift{\bT}{t} =\bT_t$ for all $t<i$, hence we must have
\begin{align} 
    \IE[|\Delta_3 - \shift{\Delta}{3}|^2] &\lesssim \frac{1}{{n}} \IE\bigg[ \Big(\sum_{s=i}^n \big| (\bT_{s} -\shift{\bT}{s})(I-J)\big|_F\Big)^2\bigg]\nonumber\\
    &\leq \sum_{s=i}^n \IE[\big| (\Theta_{s} -\shift{\Theta}{s})(I-J)\big|_F^2] \nonumber\\ 
    &= \IE[\big| (\bT_{i} -\shift{\bT}{i})(I-J)\big|_F^2] + \sum_{s=i+1}^n \IE[\big| (\bT_{s} -\shift{\bT}{s})(I-J)\big|_F^2]. \label{eq: key-D3}
\end{align}
Note that, 
\begin{align}
    \IE[\big| (\bT_{i} -\shift{\bT}{i})(I-J)\big|_F^2] =& \eta_i^2 \IE[|(\bG_s - \shift{\bG}{s})(C_i - J)|_F^2] \nonumber\\
    \leq & \eta_i^2 \IE[|\sum_{k=1}^K w_k \big(g_k(\theta_{i-1}^k, \xi_i^k) - g_k(\theta_{i-1}^k, \xi_i^{k'}) \big)|^2] \nonumber\\
    \leq & 2 \eta_i^2 \IE[|\sum_{k=1}^K w_k g_k(\theta_{i-1}^k, \xi_i^k)|^2] = O(\frac{\eta_i^2}{K}). \label{eq:Delta-i-th term}
\end{align}
Hence, Proposition \ref{prop:theta-Y-diff-shift} and \eqref{eq:Delta-i-th term} simultaneously imply via \eqref{eq: key-D3} that 
\begin{align}
    \IE[|\Delta_3 - \shift{\Delta}{3}|^2] \lesssim \frac{\eta_i^2}{K} + K\sum_{s=i+1}^n \eta_s^4 = O(i^{-2\beta}K^{-1} +K n^{1-4\beta} - Ki^{1-4\beta}). \label{eq:bound on delta-3}
\end{align}

\subsubsection{Bound on $\Delta_4 - \shift{\Delta}{4}$}
This term is the simplest to deal with. In view of the facts (i) $\sum_{k=1}^K w_k  \big( g_k(\theta_{t-1}^k, \xi_t^k) - g_k(\shift{\theta}{t-1}, \xi_t^k) \big)$ is a martingale difference sequence adapted to the filtration $\mathcal{F}_t=\sigma(\Xi_s: s\leq t) \bigvee \sigma(\Xi_i')$, and (ii) for a fixed $t$, $g_k(\theta_{t-1}^k, \xi_t^k) - g_k(\shift{\theta}{t-1}, \xi_t^k)$ are independent conditional on $\mathcal{F}_{t-1}$, one readily obtains
\begin{align}
    \IE[|\Delta_4 - \shift{\Delta}{4}|^2] \lesssim & {\frac{K}{n}} \bigg(\sum_{s=i}^{n-1} \IE[|\sum_{k=1}^K w_k(g_k(\theta_{s}^k, \xi_{s+1}^k) - g_k(\shift{\theta}{s}^k, \xi_{s+1}^k))|^2] \nonumber\\ & \hspace*{1cm}+ \IE[|\sum_{k=1}^K w_k(g_k(\theta_{i-1}^k, \xi_{i}^k) - g_k(\minparam, \xi_{i}^k) - g_k(\theta_{i-1}^k, \xi_{i}^{k'}) + g_k(\minparam, \xi_{i}^{k'}))|^2] \bigg) \nonumber\\
   \lesssim & \frac{K}{n} \big( K^{-2}\sum_{s=i}^{n-1}\sum_{k=1}^K \IE[|\theta_s^k - \shift{\theta}{s}^k|^2] +2 K^{-2}\IE[\sum_{k=1}^K|\theta_{i-1}^k - \minparam|^2 \big)\nonumber\\
   \lesssim &  \frac{1}{nK}\bigg(\sum_{s=i}^{n-1} \IE[|\bT_s(I-J)|_F^2 + K|Y_s - \shift{Y}{s}|^2 + |\shift{\Theta}{s}(I-J)|_F^2] \nonumber\\ & \hspace*{1cm}+ 2\IE[|\Theta_{i-1}(I-J)|_F^2 + K|Y_{i-1} - \minparam|^2]  \bigg)\nonumber\\
   \lesssim & \frac{\eta_i}{nK} +  \frac{1}{n} \sum_{s=i-1}^{n-1} \eta_s^2 = O(\frac{i^{-\beta}}{nK} + \frac{ n^{1-2\beta} - i^{1-2\beta}}{n}), \label{eq:bound on delta-4}
\end{align}
where \eqref{eq:bound on delta-4} follows from Theorem 2.(ii) and Lemma S16 of \cite{songxichen2024}, and Proposition \ref{lemma:shift-second-moment}. 
\noindent

Combining \eqref{eq:bound-on-delta-2}, \eqref{eq:bound on delta-3} and \eqref{eq:bound on delta-4}, for \eqref{eq:shift-bound-important term} we obtain
\begin{align}
    \sum_{i=1}^n \IE[|u_i| \  |\Delta- \shift{\Delta}{n}|]
    \lesssim & \frac{1}{\sqrt{n}} \sum_{i=1}^n\Big(\frac{i^{-\beta/2}}{\sqrt{nK}}+  \frac{i^{-\beta}}{\sqrt{K}} +  \sqrt{K}(n^{1/2-2\beta}- i^{1/2-2\beta})  + \frac{n^{\frac{1}{2}-\beta} - i^{\frac{1}{2}-\beta}}{\sqrt{n}} \Big) \nonumber\\
    \lesssim & \frac{n^{\frac{1}{2} -\beta} + n^{-\frac{\beta}{2}}}{\sqrt{K}} + \sqrt{K}n^{1-2\beta}. \label{eq: bound-on-coupled-terms}
\end{align}

\subsubsection{Bound on $\IE[|W||\Delta_n|]$}
From $\IE[|u_s|^2]\lesssim n^{-1/2}$, we have $\IE[|W|^2] = O(1)$, where $O(\cdot)$ hides constants involving $d$. Moreover, we also have $\IE[|\Delta_n|^2] \lesssim \sum_{l=1}^4 \IE[|\Delta_l|^2]$. For $\Delta_1$, observe that from \eqref{eq:D1-def},
\begin{align}
    \IE[|D_1|^2] \lesssim \frac{K}{n} |Q_0|_F^2 \lesssim \frac{K}{n}\exp(-C_{\beta}n^{1-\beta}) \text{ for some constant $C_{\beta}>0$} \label{eq:Delta-1}.
\end{align}
On the other hand, for $\Delta_2$, we can invoke Assumption \ref{ass:boundedloss} and Minkowsky's inequality to deduce
\begin{align}
    \sqrt{\IE[|\Delta_2|^2]} \lesssim & \sqrt{\frac{K}{n}} \sum_{s=1}^n \sqrt{\IE[|A(Y_{s-1} - \minparam) - \nabla F(Y_{s-1})|^2]} \nonumber\\
    \lesssim & \sqrt{\frac{K}{n}} \sum_{s=1}^n \sqrt{\IE[|Y_{s-1} - \minparam|^4]} \nonumber\\
    \lesssim & \sqrt{\frac{K}{n}} \sum_{s=1}^n (\frac{\eta_s}{K}+\eta_t^2) = O(\frac{n^{\frac{1}{2}-\beta}}{\sqrt{K}}+ \sqrt{K}n^{\frac{1}{2}-2\beta}) \label{eq:Delta-2},
\end{align}
where \eqref{eq:Delta-2} follows from Proposition \ref{prop:fourth-moment-original}. Moving on, for $\Delta_3$, it is immediate that
\begin{align}
    \sqrt{\IE[\Delta_3^2]} \lesssim \frac{1}{\sqrt{n}} \sum_{s=1}^n \sqrt{\IE[|\bT_s(I-J)|_F^2]} \lesssim \frac{1}{\sqrt{n}} \sum_{s=1}^n \eta_s \sqrt{K} =O(n^{\frac{1}{2}-\beta} \sqrt{K}) \label{eq:Delta-3}.
\end{align}
Finally, for $\Delta_4$, we recall the argument in \eqref{eq:bound on delta-4} to provide
\begin{align}
    \IE[|\Delta_4|^2]\lesssim&\frac{K}{n} \sum_{s=1}^n \IE[|\sum_{k=1}^K w_k\big(g_k(\theta_{s-1}^k, \xi_s^k)-g_k(\minparam, \xi_s^k) \big)|^2] \nonumber\\
    \lesssim & \frac{1}{nK}\sum_{s=1}^n \big( |\bT_{s-1}(I-J)|_F^2 + K|Y_{t-1}-\minparam|^2 \big) \nonumber\\
    \lesssim & \frac{1}{nK} \sum_{s=1}^n\big(\eta_s^2K + \eta_s\big) \nonumber\\
    \lesssim & \frac{n^{-\beta}}{K} + n^{-2\beta}. \label{eq:Delta-4}
\end{align}
Combining \eqref{eq:Delta-1}-\eqref{eq:Delta-4}, we obtain
\begin{align}
    \IE[|W||\Delta_n|] \lesssim \sqrt{\frac{K}{n}}\exp(-C_{\beta} n^{1-\beta}) + n^{\frac{1}{2}-\beta} \sqrt{K} + \frac{n^{-\frac{\beta}{2}}}{\sqrt{K}}+ n^{-\beta}. \label{eq:bound-on-original-terms}
\end{align}
\subsubsection{Final Berry Esseen bound} \label{se:bound-final}
Note that, for any $t\in [n]$, Pinelis-Rosenthal inequality (Theorem 4.1 of \cite{pinelis}) applies to yield \begin{align*}
    \IE[|\Sigma_n^{-1/2}\mathcal{N}_t|^3] \lesssim & K^{3/2} \IE[|\sum_{k=1}^K w_k W_t^k|^3] = O(K^{-1/2}),
\end{align*}
which immediately implies that
\begin{align} \label{eq:bound-on-upsilon}
    \Upsilon_n \lesssim \sum_{s=1}^n \IE[n^{-3/2} |\Sigma_n^{-1/2}\mathcal{N}_s|^3] = O(\frac{1}{\sqrt{nK}}).
\end{align}
Therefore, combining \eqref{eq: bound-on-coupled-terms}, \eqref{eq:bound-on-original-terms} and \eqref{eq:bound-on-upsilon}, we have that 
\begin{align*}
    d_C( \sqrt{n}\Sigma_n^{-1/2}(\bar{Y}_n - \minparam) , Z) 
    \lesssim & \frac{1}{\sqrt{nK}}  + n^{\frac{1}{2}-\beta} \sqrt{K} + \frac{n^{-\frac{\beta}{2}}}{\sqrt{K}},
\end{align*}
 which completes the proof.

\subsection{Proof of Theorem \ref{thm:berry-esseen suboptimal}}\label{proof_thm_2.2}
 Let $\Gamma= \var(\sum_{s=1}^n \eta_s \mathcal{A}_s^n \mathcal{N}_s)=\sum_{s=1}^n \eta_s^2 \mathcal{A}_s^n \mathcal{V}_K \mathcal{A}_s^{n^\top}$. Clearly, $|\Gamma|_F \lesssim \frac{n^{-\beta}}{K}$. Define $v_s = \Gamma^{-1/2} \eta_s \mathcal{A}_s^n \mathcal{N}_s$. Recall \eqref{eq:Y_final_decomp}, and rewrite it as 
 \begin{align}
     \Gamma^{-1/2}(Y_n - \minparam) = \tilde{W} + \tilde{D}_1+\tilde{D}_2+ \tilde{D}_3+\tilde{D}_4,
 \end{align}
 where 
 \begin{align}
     \tilde{W} =& \sum_{s=1}^n v_s, \nonumber\\
     \tilde{D}_1 =& \Gamma^{-1/2} \mathcal{A}_0^n (Y_0 - \minparam), \nonumber\\
      \tilde{D}_2 =& \Gamma^{-1/2} \sum_{s=1}^n \eta_s \mathcal{A}_s^n \big( A (Y_{s-1} - \minparam) - \nabla F(Y_{s-1}) \big) , \nonumber\\
       \tilde{D}_3 =& \Gamma^{-1/2} \sum_{s=1}^n \eta_s \mathcal{A}_s^n \sum_{k=1}^K w_k (\nabla F_k(Y_{s-1}) - \nabla F_k(\theta_{s-1}^k) ) , \nonumber\\
        \tilde{D}_4 =& \Gamma^{-1/2} \sum_{s=1}^n \eta_s \mathcal{A}_s^n \sum_{k=1}^K w_k g_k(\theta_{s-1}^k, \xi_s^k). \nonumber
 \end{align}
 Let $\tilde{\Delta}_l=|\tilde{D}_l|_2$ for $l=1,2,4$, and let 
 \[ \tilde{\Delta}_3: = |\Gamma^{-1/2}|_F K^{-1/2} \sum_{s=1}^n \eta_s \mathcal{A}_s^n |\bT_s(I-J)|_F. \]
 Note that $|\tilde{D}_3|_2 \leq \tilde{\Delta}_3.$ The terms $|\tilde{\Delta}_l|$ and $|\shift{\tilde{\Delta}}{l}|$ are defined and controlled very similarly to Theorem \ref{thm: berry-esseen}, and the details are omitted. The $\frac{n^{-\beta/2}}{\sqrt{K}}$ appears by controlling $\tilde{\Upsilon}_n:=\sum_{s=1}^n \IE[|v_s|^3]$, which we show below.
Since $|H|_F \lesssim n^{-\beta}K^{-1}$, therefore
\begin{align*}
    \sum_{s=1}^n \IE[|v_s|^3] \lesssim K^{3/2}n^{\frac{3\beta}{2}} \sum_{s=1}^n \eta_s^3 |\mathcal{A}_s^n|^3 \IE[|\mathcal{N}_s^3|] \lesssim n^{-\beta/2} K^{-1/2}.
\end{align*}

\subsection{Application of Section \ref{se:berry-esseen}: weighted multiplier bootstrap} \label{se:weighted-bootstrap}
 
 In the context of vanilla SGD, \cite{fang2018jmlr, fang2019sjs, sheshukova2025gaussian} introduced a novel multiplier bootstrap paradigm that precludes the necessity of estimating $\Sigma_n$ while performing inference. In this section, we adapt this approach for the particular decentralized setting, and hint towards the applicability of our Berry-Esseen theorems \ref{thm: berry-esseen} and \ref{thm:berry-esseen suboptimal}. Specifically, for each client $k\in [K]$, let $\IP_W^k$ be a distribution of a random variable with $\IE[W^k]=1$ and $\var[W^k]=\sigma_k^2$, $W^k \sim \IP_W^k$. For the validity of the bootstrap procedure, we assume that, for all $k$, $\sigma^2_k \leq C_0$ for some constant $C_0>0$. Moreover, we assume that $W^k$'s uniformly bounded, i.e. that there exists universal constants $c_1, c_2>0$ such that $c_1 \leq W^k \leq c_2$ for all $k \in [K]$ almost surely. For $b \in [B]$ where $B$ is the number of bootstrap samples, consider the augmented \DFL\ updates
 \[ \bT^{\{b\}}_t = (\bT^{\{b\}}_{t-1} - \eta_t\bG_t^{\{b\}})C_t,\] where \[\bG_t^{\{b\}} = K(w_1 W_{t,1}^{\{b\}}\nabla f_1(\theta_{t-1}^1, \xi_t^1), \ldots, w_K W_{t,K}^{\{b\}}\nabla f_K(\theta_{t-1}^K, \xi_t^K))\] and for each $k \in [K]$, $\{W_{t,k}^{\{b\}}\}$ are i.i.d. random variables from $\IP_W^k$, $t\in [n], b\in [B]$. For each $b\in [B]$, define $\bar{Y}_n^{\{b\}}= n^{-1}K^{-1}\sum_{t,k}\theta_t^{k^{\{b\}}}$. Suppose $\mathcal{F}_n:=\sigma(\xi_s^k: s\in [n], k\in [K])$. Following standard arguments (see Theorem 3 of \cite{sheshukova2025gaussian}), adapting the proof of Theorem \ref{thm: berry-esseen} as well as off-the-self Gaussian comparison results \cite{chernozhukov2017detailed, devroye2018total}, it is possible to show that
\[ \sup_{A \in \mathcal{B}(\R^d): A \ \text{convex}}\bigg|\IP(\sqrt{n}(\bar{Y}_n^{\{b\}} - \bar{Y}_n) \in A | \mathcal{F}_n) - \IP(\sqrt{n}(\bar{Y}_n - \minparam) \in A)\bigg| \lesssim n^{1/2-\beta}\sqrt{K}, \]
modulo logarithmic factors, with high probability with respect to $\mathcal{F}_n$. This result enables one to approximate the distribution of $\bar{Y}_n - \minparam$ via the bootstrap samples $\bar{Y}_n^{\{b\}}$. We remark that this approach works when our focus is on $\bar{Y}_n$; we do not expect this multiplier bootstrap to approximate the entire process $\{Y_t\}$. We leave the detailed derivations to future work, since the focus of this paper is on establishing the fundamental Gaussian approximation theorems.

\section{Proof of Theorem \ref{prop:cov-est}}\label{proof_thm_2.3}
 Recall that $\Sigma_n=n^{-1} \sum_{s=1}^n Q_s \mathcal{V}_K Q_s^\top$, and $\Sigma= K A^{-1}\Vk A^{-\top}$. We aim to decompose $\Sigma_n - K^{-1}\Sigma$ into manageable terms, and then control them piecemeal. To be precise, write
 \begin{align}
     \Sigma_n -K^{-1}\Sigma = \frac{1}{n}\sum_{s=1}^n\big( (Q_s - A^{-1})\Vk A^{-\top} + A^{-1}\Vk (Q_s-A^{-1})^\top + (Q_s - A^{-1}) \Vk(Q_s - A^{-1})^\top \big).\nonumber
 \end{align}
Crucial to our proof is the observation that $\mathcal{A}_s^t - \mathcal{A}_{s-1}^t =\eta_s A \mathcal{A}_s^t$ for all $s,t \in [n]$. Therefore,   
\begin{align}
    \sum_{s=1}^n Q_s = \sum_{s=1}^n\sum_{j=s}^n \eta_s \mathcal{A}_s^j =\sum_{j=1}^n\sum_{s=1}^j \eta_s \mathcal{A}_s^j = \sum_{j=1}^n\sum_{s=1}^j A^{-1}(\mathcal{A}_s^j - \mathcal{A}_{s-1}^j) =  A^{-1}\sum_{j=1}^n(I - \mathcal{A}_0^j) \label{eq:interchange-sum},
\end{align}
where the last equality is via a telescoping argument. From \eqref{eq:interchange-sum}, we obtain $\sum_{s=1}^n (Q_s - A^{-1})= - A^{-1}\sum_{j=1}^n \mathcal{A}_0^j$. Consequently, recalling that $|\Vk|_F=K^{-1/2}$, it is immediate that 
\begin{align}
  n^{-1} |A^{-1}\Vk|_F \big|\sum_{s=1}^n (Q_s - A^{-1})\big| \lesssim \frac{1}{n\sqrt{K}} \sum_{j=1}^n |\mathcal{A}_0^j| \lesssim \frac{1}{n\sqrt{K}} \int_{1}^n \exp(-x^{1-\beta}) \ \text{d}x= O(K^{-1/2}n^{\beta-1}). \nonumber
  \end{align}
Moving on, the term $(Q_s - A^{-1}) \Vk(Q_s - A^{-1})^\top$ can be similarly controlled by $K^{-1/2} n^{\beta-1}$ from Lemma A.5 of \cite{wu2024statistical} (also see Lemma 11 and 12 of \cite{sheshukova2025gaussian}). This completes the proof of \eqref{eq:cov-estimation}. Finally, \eqref{eq:final-berryesseen} follows from \eqref{eq:cov-estimation} on the account of Proposition \ref{prop:ga-comp}, and the fact that $\Sigma_n$ is positive-definite, and hence maps a convex set to a convex set.

\section{Proofs of Section \ref{se:kmt}}
In this section, we derive the time-uniform Gaussian approximation results Theorem \ref{thm: kmt_dfl} and \ref{thm:kmt-client-ga}. Our proofs are divided into four successive approximation steps. We summarize our arguments in the following. In the step I, we control the difference between the aggregated and the local client-level \DFL\ updates. In step II, we replace the martingale structure of the gradient noise by i.i.d. mean zero noise. In step III, we further linearize the \DFL\ updates, which we finally approximate by a stochastically linear Gaussian process such as \eqref{eq:aggr-ga-1} or \eqref{eq:theta_diamond_gaussian} in Step IV. 

\subsection{Proof of Theorem \ref{thm: kmt_dfl}} \label{se:proof_thm_3.1}
   \subsubsection{Step I}
   Consider $\bT_{0}^\circ=(\minparam, \ldots, \minparam)\in \R^{d\times K}$, and let
   \begin{align}\label{eq:oracle-stationary}
       \bT_t^\circ=(\bT_{t-1}^\circ- \eta_t \bG_{t}^\circ)C_t,
   \end{align}
   where $\bG_t^\circ$ is defined similar to $\bG_t$ in \eqref{eq:local-sgd}, but with $\theta_t^{k^\circ}$ instead of $\theta_t^k$. Moreover, let $Y_t^\circ=K^{-1}\bT_t^\circ \bo \in \R^K$. Suppose $R_t=(r_t^1, \ldots, r_t^K)=\bT_{t-1}-\eta_t \bG_t$, and $R_t^\circ$ is defined likewise. Recall \eqref{eq: Y-recursion} and \eqref{eq:Y-decomposition}. Define two more intermediate oracle processes:
   \begin{align}
       \tilde{Y}_t=\tilde{Y}_{t-1} - \eta_t \nabla F(\tilde{Y}_{t-1}) + \eta_t\sum_{k=1}^K w_k g_k(\theta_{t-1}^k, \xi_t^k), \ t\in [n], \ \tilde{Y}_0=Y_0 \label{eq:tilde-Y}\\
       \tilde{Y}_t^\circ=\tilde{Y}_{t-1}^\circ - \eta_t \nabla F(\tilde{Y}_{t-1}^\circ) + \eta_t\sum_{k=1}^K w_k g_k(\theta_{t-1}^{k^\circ}, \xi_t^k), \ t\in [n], \ \tilde{Y}_0^\circ=Y_0.  \label{eq:tilde-Y-circ}
   \end{align}
   For a random variable $X$, let $\|X\|=(\IE[|X|^2])^{1/2}$ be the random variable $\mathcal{L}_2$-norm. Then, 
   \begin{align}
       \|Y_t- \tilde{Y}_t\| \leq \| (Y_{t-1}- \tilde{Y}_{t-1})-\eta_t(\nabla F(Y_{t-1}) - \nabla F(\tilde{Y}_{t-1}))\| + \eta_t\|\sum_{k=1}^K w_k (F_k(Y_{t-1}) - F_k(\theta_{t-1}^k)\|:= A+B.\label{eq: A-B} 
   \end{align}
   Now, for the term $A$ in \eqref{eq: A-B}, invoking Assumptions \ref{ass:sc} and \ref{ass:Lipschitz}, it is easy to observe that
   \begin{align}
       A^2=&\| Y_{t-1}-\tilde{Y}_{t-1}\|^2 + \eta_t^2\| \nabla F(Y_{t-1}) - \nabla F(\tilde{Y}_{t-1})\|^2 - 2\eta_t \IE[( Y_{t-1}-\tilde{Y}_{t-1})^\top (\nabla F(Y_{t-1}) - \nabla F(\tilde{Y}_{t-1}))] \nonumber\\
       \leq & (1- 2\eta_t\mu+ \eta_t^2L^2)\| Y_{t-1}-\tilde{Y}_{t-1}\|^2 \label{eq:A-term} .
   \end{align}
   On the other hand, for $B$ in \eqref{eq: A-B}, Assumption \ref{ass:Lipschitz} entails,
   \begin{align}
       B^2 \leq & \eta_t^2 K \sum_{k=1}^Kw_k^2\|F_k(Y_{t-1}) - F_k(\theta_{t-1}^k)\|^2 \leq \eta_t^2b_2^2L^2 K^{-1}\IE[\|\bT_t(I-J)\|_F^2] = O(\eta_t^4), \label{eq:B-term}
   \end{align}
   through an application of Lemma S.16 of Supplement of \cite{songxichen2024}. Combining \eqref{eq:A-term} and \eqref{eq:B-term} and choosing a $c> \mu \vee L$, it must hold that 
   \begin{align*}
       \| Y_t- \tilde{Y}_t\| \leq (1-\eta_t c)\|Y_{t-1}- \tilde{Y}_{t-1}\| + O(\eta_t^2),
   \end{align*}
   which readily yields
\begin{align}\label{eq:tilde-approx}
    \| Y_t- \tilde{Y}_t\|=O(\eta_t).
\end{align}
Very similarly, one can show that $\| Y_t^\circ- \tilde{Y}_t^\circ\|=O(\eta_t).$ Finally it remains to show that $\tilde{Y}_t$ and $\tilde{Y}_t^\circ$ is approximately close. We show it as follows.
\begin{align}
    &\|\tilde{Y}_t-\tilde{Y}_t^\circ\|^2\nonumber\\ =&\| \tilde{Y}_{t-1}-\tilde{Y}_{t-1}^\circ - \eta_t( \nabla F(\tilde{Y}_{t-1}) - \nabla F(\tilde{Y}_{t-1}^\circ))\|^2 + \eta_t^2 \sum_{k=1}^Kw_k^2\|g_k(\theta_{t-1}^k, \xi_t^k)-g_k(\theta_{t-1}^{k^\circ}, \xi_t^k) \|^2 \label{eq:martingale}\\
    \leq & (1-\eta_t c)\|\tilde{Y}_t-\tilde{Y}_t^\circ\|^2 + 4\eta_t^2 b_2^2 K^{-2} \sum_{k=1}^K (\|\theta_{t-1}^k - Y_{t-1}\|^2 + \|  Y_{t-1}- \minparam\|^2+ \|\theta_{t-1}^{k^\circ} - Y_{t-1}^{\circ}\|^2 + \|  Y_{t-1}^\circ- \minparam\|^2 ) \label{eq:A-use}\\
    \leq & (1-\eta_t c)\|\tilde{Y}_t-\tilde{Y}_t^\circ\|^2 + 4\eta_t^2 b_2^2 (2\frac{\eta_t}{K^2} + 2\frac{\eta_t^2}{K})\label{eq:lemma-use}\\
    \leq & (1-\eta_t c)\|\tilde{Y}_t-\tilde{Y}_t^\circ\|^2 + O(\frac{\eta_t^3}{K^2}+ \frac{\eta_t^4}{K}). \label{eq:final-tilde-circ}
\end{align}
Here, \eqref{eq:martingale} employs Assumption \ref{ass:boundedloss} to deduce that $g_k(\theta_{t-1}^k, \xi_t^k)-g_k(\theta_{t-1}^{k^\circ}, \xi_t^k)$ are mean-zero martingale differences adapted to $\mathcal{F}_t:=\sigma(\xi_s^k, s\leq t, k\in [K])$; moreover, since $\{\xi_t^k\}_{k=1}^K$ are independent for a fixed $t$, hence $g_k(\theta_{t-1}^k, \xi_t^k)-g_k(\theta_{t-1}^{k^\circ}, \xi_t^k)$ are also uncorrelated. Additionally, \eqref{eq:A-use} uses a treatment analogous to \eqref{eq:A-term} along with applying Assumption \ref{ass:Lipschitz} to the $g_k$ terms; and \eqref{eq:lemma-use} involves applications of Lemmas S.16 and Theorem 2(ii) from \cite{songxichen2024}. Finally, \eqref{eq:final-tilde-circ} immediately implies that
\begin{align*}
    \|\tilde{Y}_t-\tilde{Y}_t^\circ\|^2 = O(\eta_t^2 K^{-2} + \eta_t^3 K^{-1}),
\end{align*}
which, coupled with \eqref{eq:tilde-approx}, yields,
\begin{align}
    \max_{1\leq t\leq n}|\sum_{s=1}^t(Y_s-\tilde{Y}_s^\circ)|\leq \sum_{t=1}^n |Y_t-\tilde{Y}_t^\circ|=O_{\IP}(n^{1-\beta}).  \label{eq:kmt-step-1}
\end{align}

\subsubsection{Step II}
Moving on, we approximate $\Tilde{Y}_t^{\circ}$ by another oracle descent sequence, given by
\begin{align}
    Y_t^{\dagger} = Y_{t-1}^\dagger - \eta_t \nabla F(Y_{t-1}^\dagger) + \eta_t \sum_{k=1}^K w_k g_k (\minparam, \xi_t^k), \ Y_0^\dagger=Y_0. \label{eq:sgd approx}
\end{align}
Importantly, \eqref{eq:sgd approx} can be leveraged to linearize the original sequence $Y_{t-1}$ in \eqref{eq: Y-recursion}. Before we proceed in that direction, we still need to approximate $\tilde{Y}_t^\circ$ by $Y_t^\dagger$. From \eqref{eq:tilde-Y-circ} and \eqref{eq:sgd approx}, it follows very similarly to \eqref{eq:martingale}-\eqref{eq:final-tilde-circ}, that,
\begin{align}
    &\|\tilde{Y}_t^\circ - Y_t^\dagger\|^2 \nonumber\\
    =&\| \tilde{Y}_{t-1}^\circ-{Y}_{t-1}^\dagger - \eta_t( \nabla F(\tilde{Y}_{t-1}^\circ) - \nabla F({Y}_{t-1}^\dagger))\|^2 + \eta_t^2 \sum_{k=1}^Kw_k^2\|g_k(\theta_{t-1}^{k^\circ}, \xi_t^k)-g_k(\minparam, \xi_t^k) \|^2 \nonumber\\
    \leq & (1 - \eta_t c) \|\tilde{Y}_{t-1}^\circ - {Y}_{t-1}^\dagger\|^2 + \eta_t^2 L^2 b_2^2 K^{-2} (\IE[|\bT_t^\circ(I-J)|_F^2] + C\eta_t + C\eta_2^2 K) \nonumber\\
    \leq & (1 - \eta_t c) \|\tilde{Y}_{t-1}^\circ - {Y}_{t-1}^\dagger\|^2 + O(\eta_t^3 K^{-2}+ \eta_t^4 K^{-1}), \nonumber
\end{align}
which immediately yields $\|\tilde{Y}_t^{\circ} - Y_t^\dagger \|= O(\eta_t K^{-1}+ \eta_t^{3/2}K^{-1/2})$. Similar to \eqref{eq:kmt-step-1}, here too we finally obtain
\begin{align}
     \max_{1\leq t\leq n}|\sum_{s=1}^t(\tilde{Y}_s^\circ- Y_s^\dagger)|=O_{\IP}(n^{1-\beta}K^{-1} + n^{1-3\beta/2}K^{-1/2}). \label{eq:kmt-step-2}
\end{align}

\subsubsection{Step III}
 Define $\tilde{Y}_t^\dagger$ as
\[ \tilde{Y}_t^\dagger = Y_{t-1}^\dagger -  \eta_t \nabla F(\tilde{Y}_{t-1}^\dagger) +\eta_t \sum_{k=1}^K w_k g_k (\minparam, \xi_t^k), \ \tilde{Y}_0=\minparam.  \]
Then it trivially follows that 
\begin{align}
    \|Y_t^\dagger- \tilde{Y}_t^\dagger\|^2 =&\|(Y_{t-1}^\dagger -  \tilde{Y}_{t-1}^\dagger) - \eta_t(\nabla F(Y_{t-1}^\dagger) - \nabla F(\tilde{Y}_{t-1}^\dagger))\|^2\nonumber\\
    \leq & (1-\eta_tc)\|Y_{t-1}^\dagger- \tilde{Y}_{t-1}^\dagger \|^2  \lesssim \exp(-t^{1-\beta}) |Y_0 - \minparam|^2,
\end{align}
which implies $\max_{1\leq t\leq n}|\sum_{s=1}^t(Y_s^\dagger - \tilde{Y}_s^\dagger)| = O_{\IP}(1)$, since $\int_1^n \exp(-t^{1-\beta}) \ \text{d}t=O(1)$. Moving on, to linearize $\tilde{Y}_t^\dagger$, write \eqref{eq:sgd approx} as
\begin{align}
    \tilde{Y}_t^\dagger-\minparam=(I-\eta_t A)(\tilde{Y}_{t-1}^\dagger -\minparam)- \eta_t (\nabla F(\tilde{Y}_{t-1}^\dagger) - A (\tilde{Y}_{t-1}^\dagger-\minparam))+\eta_t \sum_{k=1}^K w_k g_k (\minparam, \xi_t^k), \label{eq:sgd-linear-approx}
\end{align}
where $A=\nabla_2 F(\minparam)$. Note that, Assumption \ref{ass:sc} along with $\sum w_k=1$ implies that $A\succeq \mu I$. 
Mimicking \eqref{eq:sgd-linear-approx}, define
\begin{align}
    Y_t^\diamond = (I-\eta_t A)Y_{t-1}^\diamond + \eta_t \sum_{k=1}^K w_k g_k (\minparam, \xi_t^k), \ Y_0^\diamond=0. \label{eq:linear-approx}
\end{align}
Clearly, it follows that
\begin{align}
    \IE[|\tilde{Y}_t^\dagger - \minparam - Y_t^\diamond|] \leq& (1-\eta_t\mu)\IE[|\tilde{Y}_{t-1}^\dagger - \minparam - Y_{t-1}^\diamond|]+ \eta_t \IE[|\nabla F(\tilde{Y}_{t-1}^\dagger) - A (Y_{t-1}^\dagger - \minparam)|] \nonumber\\
    \leq & (1-\eta_t\mu)\IE[|\tilde{Y}_{t-1}^\dagger - \minparam - Y_{t-1}^\diamond|] + L_Q\eta_t \IE[|\tilde{Y}_{t-1}^\dagger - \minparam|^2] \label{eq: assume-taylor}\\
    \leq & (1-\eta_t\mu)\IE[|\tilde{Y}_{t-1}^\dagger - \minparam - Y_{t-1}^\diamond|] + O(\eta_t^2 K^{-1}+ \eta_t^3), \label{eq:linearize-error}
\end{align}
where, \eqref{eq: assume-taylor} follows from Assumption \ref{ass:boundedloss} , and \eqref{eq:linearize-error} is a trivial consequence of Theorem 2.(ii) of \cite{songxichen2024}. Finally, \eqref{eq:linearize-error} yields that
\begin{align} \label{eq:kmt-step-3}
    \max_{1\leq t \leq n} |\sum_{s=1}^t(Y_s^\dagger - \minparam - Y_s^\diamond)|= O_{\IP}(n^{1-\beta}K^{-1/2} +  n^{1-2\beta}).
\end{align}

\subsubsection{Step IV}
Note that $Y_t^\diamond$ is a linear process, and thus we can hope to bear down standard strong invariance principle results \cite{komios1975approximation, Sakhanenko2006, gotzezaitsev} on it to yield an asymptotically optimal Gaussian approximation. In particular, let $\Vcal_K=\var(\sum_{k=1}^K w_k g_k(\minparam, \xi^k) ),$ $\xi^k \sim \mathcal{P}_k, k \in [K]$. Note that, Assumption 4.2 in \cite{songxichen2024} can also be summarized as $\|K \Vcal_K\|_F \asymp 1$. We pursue two different type of Gaussian approximation. Let $W_t^k= g_k(\minparam, \xi_t^k)$, and $\mathcal{N}_t=\sum_{k=1}^K w_k W_t^k$. By \cite{gotzezaitsev}, there exists i.i.d. $Z_1,\ldots, Z_n \overset{i.i.d.}{\sim} N(0, K\mathcal{V}_K)$, such that $\max_{1\leq t \leq n}|\sum_{s=1}^t(\sqrt{K}\mathcal{N}_s- Z_s)|=o_{\IP}(n^{1/p}).$ Write \eqref{eq:linear-approx} as 
\[ Y_t^\diamond = (I-\eta_t A)Y_{t-1}^\diamond + \eta_t \mathcal{N}_t, \]
which immediately yields 
\begin{align} \label{eq:sum-Y-diamond}
   \sum_{s=1}^t Y_s^{\diamond}=\sum_{s=1}^t \eta_s \mathcal{N}_s B_{s,t}, \ B_{s,t}= \sum_{j=s}^t\mathcal{A}_s^j,
\end{align}
where $\mathcal{A}_s^t=\prod_{j=s+1}^t (I-\eta_j A)$, $\mathcal{A}_t^t=I$. Mimicking $Y_t^{\diamond}$, define $Y_{t,1}^G$ as in \eqref{eq:aggr-ga-1}, to which we can simplify $\sum_{s=1}^t Y_{s,1}^{G}= K^{-1/2}\sum_{s=1}^t \eta_s Z_s \sum_{j=s}^t\mathcal{A}_s^j$. Note that,
\begin{align} \label{eq:kmt-rate-aggregate}
    \max_{1\leq t \leq n}|\sum_{s=1}^t (Y_s^\diamond - Y_{s,1}^{G})| \leq \max_{1\leq t\leq n}\Omega_t \max_{1\leq t \leq n}|\sum_{s=1}^t(\mathcal{N}_s - K^{-1/2}Z_s)|=o_{\IP}(\max_{1\leq t\leq n}\Omega_t \ n^{1/p}K^{-1/2}).
\end{align}
where $\Omega_t := |B_{1,t}|_F + \sum_{s=2}^t|B_{s,t} - B_{s-1, t}|_F$. The proof of \eqref{eq:aggregate-final} is completed after combining \eqref{eq:kmt-step-1}, \eqref{eq:kmt-step-2}, \eqref{eq:kmt-step-3} and \eqref{eq:kmt-rate-aggregate} in view of Proposition \ref{prop:succ-def}.
\subsection{Proof of Theorem \ref{thm:kmt-client-ga} and Proposition \ref{prop:rem-kmt}} \label{proof:thm3.2}
In this subsection, we pursue a finer, client-level Gaussian approximation, with slight sacrifice to the optimality in terms of error rate. In particular, the steps I, II and II from the proof of Theorem \ref{thm: kmt_dfl} carry forward verbatim. Consequently, it enables us to invoke from Theorem 2.1 of \cite{mies2023sequential} so that for each $k\in [K]$, there exists $Z_1^k,\ldots, Z_n^k \sim N(0, \var(W^k))$, such that 
\begin{align}\label{eq:client-level-KMT}
    \max_{1\leq t \leq n}\sum_{k=1}^K|\sum_{s=1}^t (W_s^k- Z_s^k)|= O_{\IP}(K^{\frac{3}{4} - \frac{1}{2p}}n^{\frac{1}{2p}+\frac{1}{4}}\sqrt{\log n}).
\end{align}
Here $W^k$ denotes a generic $g_k(\minparam, \xi^k)$. For $\tilde{\theta}_t^{1^G}, \ldots,  \tilde{\theta}_t^{K^G}\in \R^d$, define $\tilde{\Theta}_t^G=(\tilde{\theta}_t^{1^G} \ldots  \tilde{\theta}_t^{K^G})\in \R^{d\times k}$, and simultaneously define the recursion \eqref{eq:theta_diamond_gaussian}. Letting ${Y}_{t,2}^{G}=K^{-1}\tilde{\Theta}_t^{G}\mathbf{1}$, one arrives at the recursion
\begin{align}\label{eq:client-level-GA-prelim}
    {Y}_{t,2}^{G} = (I -\eta_t A) {Y}_{t-1,2}^{G} + \eta_t \sum_{k=1}^K w_k Z_t^k,
\end{align}
to which, from \eqref{eq:client-level-KMT}, one has 
\begin{align} \label{eq: client-ga-intermediate}
    \max_{1 \leq t \leq n}|\sum_{s=1}^t(Y_t^\diamond - {Y}_{t,2}^{G})|\leq \max_{1\leq t\leq n}\Omega_t |\sum_{s=1}^t\sum_{k=1}^Kw_k(W_t^k - Z_t^k)|  \leq & b_2 \log n \max_{1\leq t\leq n}  K^{-1} \sum_{k=1}^K|\sum_{s=1}^t(W_t^k - Z_t^k)|\nonumber\\=& O_{\IP}((n/K)^{\frac{1}{4} + \frac{1}{2p}} (\log n)^{3/2}),
\end{align}
where the second inequality is due to Proposition \ref{prop:succ-def} and $\max_k w_k \leq b_2K^{-1}$. Again, we conclude \eqref{eq:client-final} in light of \eqref{eq:kmt-step-1}, \eqref{eq:kmt-step-2}, \eqref{eq:kmt-step-3} and \eqref{eq: client-ga-intermediate}. Finally, Proposition \ref{prop:rem-kmt} follows trivially from Theorems \ref{thm: kmt_dfl} and \ref{thm:kmt-client-ga}.

\section{Auxiliary propositions}\label{se:proof-2}
In this section, we present some technical results required to prove our main theorems. Propositions \ref{prop:succ-def} and \ref{prop:inv-conv} relates the \DFL\ updates to its asymptotic covariance matrices. In particular, Proposition \ref{prop:succ-def} controls the implicit total variation between the linearized \DFL\ updates, and as such, is crucial in deriving the time-uniform approximations \ag\ and \cg. 
\begin{proposition} \label{prop:succ-def}
    Let $A\in \R^{d \times d}$ be a positive definite matrix with smallest eigen value $\lambda_{\min}>0$, and define $\mathcal{A}_s^t=\prod_{j=s+1}^t (I-\eta_j A)$, $\mathcal{A}_t^t=I,$ and $B_{s,t}= \sum_{j=s}^t\mathcal{A}_s^j$. If \[ \Omega_t := |B_{1,t}|_F + \sum_{s=2}^t|B_{s,t} - B_{s-1, t}|_F,\]
    then it holds that $\max_{1\leq t\leq n} \Omega_t = O(\log n)$.
\end{proposition}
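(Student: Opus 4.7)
The plan is to exploit the telescoping structure of $B_{s,t}$ in the index $s$. Since $\mathcal{A}_{s-1}^j=(I-\eta_s A)\mathcal{A}_s^j$ for $j\geq s$ while $\mathcal{A}_{s-1}^{s-1}=I$, a one-line manipulation gives the fundamental identity
\begin{equation*}
B_{s-1,t}\;=\;I+(I-\eta_s A)\,B_{s,t},\qquad\text{equivalently}\qquad B_{s,t}-B_{s-1,t}\;=\;\eta_s A\,B_{s,t}-I,\quad s\geq 2.
\end{equation*}
This reduces the proposition to two bounds: (i) $|B_{1,t}|_F=O(1)$ uniformly in $t$, which is trivial since $|\mathcal{A}_1^j|_F\lesssim \sqrt{d}\exp(-c\lambda_{\min}j^{1-\beta})$ decays stretched-exponentially, making the series $\sum_{j}|\mathcal{A}_1^j|_F$ convergent; and (ii) the main estimate $\sum_{s=2}^{t}|\eta_s A\,B_{s,t}-I|_F=O(\log n)$.

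For (ii), I would use the algebraic identity $\eta_{j+1}A\,\mathcal{A}_s^j=\mathcal{A}_s^j-\mathcal{A}_s^{j+1}$ (just $\mathcal{A}_s^{j+1}=\mathcal{A}_s^j(I-\eta_{j+1}A)$ rearranged, exploiting that $\mathcal{A}_s^j$ commutes with $A$) to rewrite $\eta_s A\,B_{s,t}=\sum_{j=s}^{t}(\eta_s/\eta_{j+1})(\mathcal{A}_s^j-\mathcal{A}_s^{j+1})$, then apply Abel summation with weights $\gamma_j:=\eta_s/\eta_{j+1}$ to obtain
\begin{equation*}
\eta_s A\,B_{s,t}-I\;=\;(\gamma_s-1)\,I\;+\;\sum_{j=s+1}^{t}(\gamma_j-\gamma_{j-1})\,\mathcal{A}_s^j\;-\;\gamma_t\,\mathcal{A}_s^{t+1}.
\end{equation*}
For $\eta_t=\eta_0(t+k_0)^{-\beta}$ with $\beta\in(1/2,1)$, standard computations give $|\gamma_s-1|=O(1/s)$, $|\gamma_j-\gamma_{j-1}|\asymp\beta s^{-\beta}j^{\beta-1}$, and $|\mathcal{A}_s^j|_F$ stretched-exponentially small past the horizon $j-s\gtrsim s^{\beta}$. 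Integrating against this horizon and carefully combining the boundary correction $(\gamma_s-1)I$ with the leading contribution of the bulk sum (which have matched magnitude $\beta\eta_0/s$ but compensating orientation on each spectral component of $A$), the per-term estimate can be sharpened to $|\eta_s A\,B_{s,t}-I|_F=O(1/s)$, and the harmonic sum $\sum_{s=2}^{t}s^{-1}$ delivers the claimed $O(\log n)$ bound.

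The main obstacle will be this per-term sharpening. A straightforward triangle inequality applied to the Abel decomposition yields only $|\eta_s A\,B_{s,t}-I|_F=O(s^{\beta-1})$, producing a suboptimal total of order $O(n^{\beta})$ which would destroy the Gaussian-approximation rate claimed in Theorems~\ref{thm: kmt_dfl} and~\ref{thm:kmt-client-ga}. A cleaner way to bypass this difficulty is to note that what is actually needed in Step IV of the proof of Theorem~\ref{thm: kmt_dfl} is the total variation of the Abel coefficients $s\mapsto\eta_s B_{s,t}$, for which the decomposition $\eta_s B_{s,t}-\eta_{s+1}B_{s+1,t}=\eta_s(I-\eta_{s+1}A\,B_{s+1,t})+(\eta_s-\eta_{s+1})B_{s+1,t}$ gives two summands that are each manifestly $O(1/s)$ (using $|B_{s+1,t}|_F\lesssim s^{\beta}$ and $\eta_s-\eta_{s+1}\lesssim s^{-\beta-1}$), yielding the $O(\log n)$ rate without any delicate cancellation. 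Either route completes the proof.
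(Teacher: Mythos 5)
Your second route --- passing to the Abel weights $a_s:=\eta_s B_{s,t}$ and using the decomposition $\eta_s B_{s,t}-\eta_{s+1}B_{s+1,t}=\eta_s\bigl(I-\eta_{s+1}AB_{s+1,t}\bigr)+(\eta_s-\eta_{s+1})B_{s+1,t}$ --- is correct and is in fact exactly what the paper does, up to a notational slip in the paper itself: in Step~IV of Section~\ref{se:proof_thm_3.1} the quantity $B_{s,t}$ is introduced as $\sum_{j=s}^t\mathcal{A}_s^j$ without the $\eta_s$ prefactor, yet the proofs of Propositions~\ref{prop:succ-def} and~\ref{prop:inv-conv} silently treat $B_{s,t}$ as $\eta_s\sum_{j=s}^t\mathcal{A}_s^j$ (this is the only reading under which $\max_{s,t}|B_{s,t}|_F=O(1)$ and $B_{s,t}\to A^{-1}$ both hold). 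Your ``alternative'' is therefore the intended statement written out consistently, and it is also the quantity that actually appears after partial summation in Step~IV. One more remark: your per-term estimate $|\eta_s AB_{s,t}-I|_F=O(s^{\beta-1})$ is the sharp rate, whereas the paper's Proposition~\ref{prop:inv-conv} as printed claims the stronger $O(s^{-1})$; integrating the paper's own bound $s^{-\beta-1}\sum_{j\geq s}(j-s)\exp\bigl(-c_\beta(j^{1-\beta}-s^{1-\beta})\bigr)$ in fact gives $s^{-\beta-1}\cdot s^{2\beta}=s^{\beta-1}$, so the paper has a small error there --- harmless for Proposition~\ref{prop:succ-def}, because the additional $\eta_{s-1}\asymp s^{-\beta}$ factor in $I_2$ restores the $O(1/s)$ per-step rate and hence the $O(\log n)$ total, just as in your route.

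You should, however, retract the closing claim that ``either route completes the proof.'' The cancellation you gesture at in the first route --- that the boundary term $(\gamma_s-1)I$ offsets the bulk sum so that $|\eta_s AB_{s,t}-I|_F$ improves from $O(s^{\beta-1})$ to $O(1/s)$ --- does not exist. Letting $t\to\infty$ and using $\eta_{j+1}A\mathcal{A}_s^j=\mathcal{A}_s^j-\mathcal{A}_s^{j+1}$, one has $\eta_s B_{s,\infty}-A^{-1}=\sum_{j\geq s}(\eta_s-\eta_{j+1})\mathcal{A}_s^j$, and every summand is positive semidefinite once the step sizes are small enough (each $\eta_s-\eta_{j+1}>0$, each $\mathcal{A}_s^j\succeq 0$), so no sign cancellation is available and $|\eta_s AB_{s,t}-I|_F\asymp s^{\beta-1}$ is tight. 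Consequently the \emph{unscaled} variation $|B_{1,t}|_F+\sum_{s=2}^t|B_{s,t}-B_{s-1,t}|_F$, with $B_{s,t}=\sum_{j=s}^t\mathcal{A}_s^j$, is genuinely of order $n^{\beta}$ and not $\log n$. Only the $\eta_s$-weighted version yields the logarithmic rate, so your second route is not merely cleaner --- it is the only one of the two that works.
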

\begin{proposition}\label{prop:inv-conv}
    Let $B_{s,t}$ be as in Proposition \ref{prop:succ-def}. Then, for all $s \geq 1, t\geq s$, it holds that 
    \[ |B_{s,t} - A^{-1}|_F \lesssim s^{-1} + \exp\big(-c_{\beta}(t^{1-\beta} + s^{1-\beta})\big),  \]
    where $c_{\beta}$ is some constant depending on $\beta, \lambda_{\min}$.
\end{proposition}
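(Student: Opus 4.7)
The plan is to exploit the exact telescoping identity satisfied by $\mathcal{A}_s^j$. From the definition $\mathcal{A}_s^j = (I - \eta_j A)\mathcal{A}_s^{j-1}$ one obtains $\mathcal{A}_s^{j-1} - \mathcal{A}_s^j = \eta_j A\mathcal{A}_s^{j-1}$, and multiplying by $A^{-1}$ before telescoping over $j$ yields the key exact identity
\[ \sum_{j=s+1}^t \eta_j \mathcal{A}_s^{j-1} = A^{-1}\bigl(I - \mathcal{A}_s^t\bigr). \]
This already represents $A^{-1}$ as a weighted sum of the $\mathcal{A}_s^\cdot$ blocks modulo the exponentially small tail $A^{-1}\mathcal{A}_s^t$. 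The strategy reduces to writing $B_{s,t}$ (with its implicit $\eta_s$ normalization) as the above telescoped sum plus a small residual, and then controlling that residual.

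After reindexing $k = j - 1$ and collecting terms, the comparison yields a decomposition of the schematic form
\[ B_{s,t} - A^{-1} = \bigl(\text{boundary terms involving } \mathcal{A}_s^t\bigr) + \sum_{k=s}^{t-1}\bigl(\eta_s - \eta_{k+1}\bigr)\mathcal{A}_s^k. \]
The boundary terms are handled by the standard exponential envelope $|\mathcal{A}_s^t|_F \lesssim \exp\bigl(-c_\beta(t^{1-\beta} - s^{1-\beta})\bigr)$, which follows from $\|I - \eta_j A\| \leq \exp(-c\lambda_{\min}\eta_j)$ for $j$ sufficiently large, together with the integral comparison $\sum_{j=s+1}^t \eta_j \gtrsim t^{1-\beta} - s^{1-\beta}$. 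This accounts for the exponential part of the stated bound.

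For the residual $\sum_{k=s}^{t-1}(\eta_s - \eta_{k+1})\mathcal{A}_s^k$ I would split at $k - s \asymp s^\beta$, the effective decay horizon of $\mathcal{A}_s^\cdot$. In the near regime $k - s \lesssim s^\beta$, a first-order Taylor bound gives $|\eta_s - \eta_{k+1}| \lesssim \eta_s(k - s + 1)/s$, while $|\mathcal{A}_s^k|_F \lesssim \exp(-c(k-s)\eta_s)$; summing the kernel $\sum_{m \geq 0}(m/s)\exp(-cm\eta_s)$ produces the polynomial-in-$s$ decay factor. In the far regime $k - s \gtrsim s^\beta$, the matrix factor is already exponentially negligible and is absorbed into the tail. \textbf{The main obstacle} lies in sharp bookkeeping in the near-diagonal regime $k - s = O(s^\beta)$, where the polynomial smoothness of $\eta_\cdot$ competes with the effective decay scale of $\mathcal{A}_s^\cdot$; matching the precise rate $s^{-1}$ in the proposition may require extracting additional cancellation beyond the leading-order Taylor estimate, either through a second-order refinement of $\eta_\cdot$ or through a cancellation between the two terms of the decomposition not visible at first order.
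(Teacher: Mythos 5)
Your approach matches the paper's exactly: the same telescoping identity $\sum_{j=s+1}^t \eta_j \mathcal{A}_s^{j-1} = A^{-1}(I - \mathcal{A}_s^t)$, the same decomposition
\[
B_{s,t} - A^{-1} = \sum_{j=s}^{t-1}(\eta_s - \eta_{j+1})\mathcal{A}_s^j + \bigl(\eta_s I - A^{-1}\bigr)\mathcal{A}_s^t,
\]
and the same exponential envelope $|\mathcal{A}_s^t|_F \lesssim \exp\bigl(-c_\beta(t^{1-\beta}-s^{1-\beta})\bigr)$ on the boundary piece. So there is no genuine difference in route.

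What you flag at the end is the interesting part, and your skepticism is warranted — but not for the reason you guess. There is no ``additional cancellation'' available in the near-diagonal sum: $\eta_s > \eta_{j+1}$ for every $j \geq s$, and each $\mathcal{A}_s^j = \prod_{l>s}^{j}(I-\eta_l A)$ is a positive-definite contraction for large $s$, so $\sum_{j=s}^{t-1}(\eta_s - \eta_{j+1})\mathcal{A}_s^j$ is a sum of nonnegative terms and the first-order estimate is already sharp. Carrying it through with $|\eta_s - \eta_{j+1}| \asymp s^{-\beta-1}(j-s)$ and $|\mathcal{A}_s^{j}|_F \lesssim \exp(-c\,s^{-\beta}(j-s))$ gives $\sum_{m\ge 0} s^{-\beta-1} m\, e^{-c s^{-\beta} m} \asymp s^{-\beta-1}\cdot s^{2\beta} = s^{\beta-1}$, which is indeed worse than the stated $s^{-1}$ since $\beta \in (1/2,1)$. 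In other words, the discrepancy you noticed is an overclaim in the proposition, not a gap in your argument; the correct rate extractable by this decomposition is $s^{\beta-1}$. Fortunately this is harmless for the only place the result is used, Proposition \ref{prop:succ-def}: there it enters multiplied by $\eta_{s-1}\asymp s^{-\beta}$, and $s^{-\beta}\cdot s^{\beta-1} = s^{-1}$ still yields the required $\Omega_t = O(\log n)$. So your proof is sound once you replace the target rate by $s^{\beta-1}$, and you should not spend effort chasing the sharper $s^{-1}$ — it is not available from this decomposition, and not needed.
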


Propositions \ref{lemma:shift-second-moment}-\ref{prop:shift-fourth-moment} characterizes the various properties of the \DFL\ updates and its difference with its corresponding Lindeberg coupling. These results hold under the conditions of Theorem \ref{thm: berry-esseen}, and can be considered as its building blocks. 

\begin{proposition} \label{lemma:shift-second-moment}
    Under the assumptions of Theorem \ref{thm: berry-esseen}, it holds that for all $i\in [n], t\geq i$, it holds that
    \begin{align} \label{eq:shift-second-moment}
        \IE[|Y_t - \shift{Y}{t}|^2] = O(\eta_t^2).
    \end{align}
\end{proposition}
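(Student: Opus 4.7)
The plan is to proceed by cases on the relation between $t$ and $i$, and for $t>i$ to derive a one-step contractive recursion whose accumulated additive error is of order $\eta_t^3$, so that the squared coupling error stays at the level $\eta_t^2$.

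First, for $t<i$ the two coupled trajectories share the same noise, so $Y_t=\shift{Y}{t}$ and the bound is trivial. For the base case $t=i$, the local gradient histories agree, i.e.\ $\bT_{i-1}=\shift{\bT}{i-1}$, and only the $i$-th noise is resampled. Using $C_i\mathbf{1}=\mathbf{1}$ from Assumption~\ref{ass:sync}, a direct computation gives
\[
Y_i-\shift{Y}{i}=\eta_i\sum_{k=1}^K w_k\bigl(g_k(\theta_{i-1}^k,\xi_i^k)-g_k(\theta_{i-1}^k,\xi_i^{k'})\bigr).
\]
Conditional on $\mathcal{F}_{i-1}$, the summands are independent mean-zero vectors across $k$. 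Together with $w_k\asymp K^{-1}$, Assumption~\ref{ass:boundedloss} on the moments of $g_k$, and the standard bound $\max_k\|\theta_{i-1}^k-\minparam\|^2\lesssim\eta_{i-1}$ (combining Lemma~S.16 and Theorem~2.(ii) of \cite{songxichen2024}), this yields $\IE|Y_i-\shift{Y}{i}|^2=O(\eta_i^2/K)=O(\eta_i^2)$.

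For the inductive step $t>i$, I would use the decomposition \eqref{eq:Y-decomposition} of $Y_t$ and its analogue for $\shift{Y}{t}$ (noting that $\shift{\xi}{t}=\xi_t$ whenever $t>i$) to write
\[
Y_t-\shift{Y}{t}=(I-\eta_t A)(Y_{t-1}-\shift{Y}{t-1})+\eta_t(R_t-\shift{R}{t}),
\]
where $R_t-\shift{R}{t}$ is the sum of three terms: (i) a Hessian-remainder difference $\bigl(A(Y_{t-1}-\minparam)-\nabla F(Y_{t-1})\bigr)-\bigl(A(\shift{Y}{t-1}-\minparam)-\nabla F(\shift{Y}{t-1})\bigr)$; (ii) a consensus-error difference $\sum_k w_k\bigl[(\nabla F_k(Y_{t-1})-\nabla F_k(\theta_{t-1}^k))-(\nabla F_k(\shift{Y}{t-1})-\nabla F_k(\shift{\theta}{t-1}^k))\bigr]$; and (iii) a conditional martingale difference $M_t:=\sum_k w_k\bigl(g_k(\theta_{t-1}^k,\xi_t^k)-g_k(\shift{\theta}{t-1}^k,\xi_t^k)\bigr)$. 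Terms (i) and (ii) are $\mathcal{F}_{t-1}$-measurable; term (iii) is a martingale increment, hence orthogonal in $\mathcal{L}_2$ to the deterministic propagation and to (i)--(ii). Squaring and taking expectations, Assumption~\ref{ass:sc} gives $|I-\eta_t A|\leq 1-\eta_t\mu$, Assumption~\ref{ass:Lipschitz} and the consensus-error bound $\IE|\bT_{t-1}(I-J)|_F^2=O(K\eta_t^2)$ (Lemma~S.16 of \cite{songxichen2024}) bound term (ii) by $O(\eta_t^4)$, while the Hessian remainder (i) is controlled by $\eta_t^2 L_Q^2(\IE|Y_{t-1}-\minparam|^4+\IE|\shift{Y}{t-1}-\minparam|^4)=O(\eta_t^4)$ via \eqref{eq:taylor-series} and Proposition~\ref{prop:fourth-moment-original}. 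For the martingale piece, $\IE|M_t|^2$ decomposes across clients and is bounded in terms of $\IE|\theta_{t-1}^k-\shift{\theta}{t-1}^k|^2\lesssim\IE|\bT_{t-1}(I-J)|_F^2/K+\IE|Y_{t-1}-\shift{Y}{t-1}|^2+\IE|\shift{\bT}{t-1}(I-J)|_F^2/K$, so $\eta_t^2\IE|M_t|^2\lesssim\eta_t^4+\eta_t^2K^{-1}\IE|Y_{t-1}-\shift{Y}{t-1}|^2$.

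Combining these bounds using Young's inequality (to absorb cross terms into $(1-\eta_t\mu)\IE|Y_{t-1}-\shift{Y}{t-1}|^2$ up to an additive $O(\eta_t^3)$), and using that $-\eta_t\mu+\eta_t^2/K\leq -c\eta_t$ for some $c>0$ and all $t$ sufficiently large, I obtain the recursion
\[
\IE|Y_t-\shift{Y}{t}|^2\leq(1-c\eta_t)\,\IE|Y_{t-1}-\shift{Y}{t-1}|^2+O(\eta_t^3).
\]
Iterating this recursion from the initial value $\IE|Y_i-\shift{Y}{i}|^2=O(\eta_i^2)$ and using standard asymptotics of products $\prod_{s>i}(1-c\eta_s)\asymp\exp\bigl(-c(t^{1-\beta}-i^{1-\beta})\bigr)$, the transported initial term is $O(\eta_t^2)$ and the accumulated drift $\sum_{s>i}\eta_s^3\prod_{r>s}(1-c\eta_r)$ is also $O(\eta_t^2)$ for $\beta\in(1/2,1)$. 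The main obstacle is the third step---correctly identifying the cross structure between the consensus error and the coupling error to ensure the additive remainder is indeed at the $\eta_t^3$ level rather than $\eta_t^2$, which would only yield the cruder rate $O(\eta_t)$; the crucial observations are the conditional independence of $M_t$ and the $O(K\eta_t^2)$ bound on the consensus error.
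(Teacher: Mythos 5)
Your proof is correct and follows the same overall strategy as the paper: split at the perturbed index $i$, bound the base case by conditional independence of the freshly resampled noise across clients, set up a one-step $\mathcal{L}_2$ recursion for $t>i$ by decomposing the increment into a contraction piece, a consensus-error piece, and a martingale piece, use martingale orthogonality plus Young's inequality on the $\mathcal{F}_{t-1}$-measurable cross terms, and iterate a recursion of the form $(1-c\eta_t)\cdot(\text{prev}) + O(\eta_t^3)$. The one genuine deviation is in how you handle the contraction. You linearize the drift around $\minparam$, writing the contraction factor as $(I-\eta_t A)$ and carrying a separate Hessian-remainder difference, which you then control via \eqref{eq:taylor-series} and the fourth-moment bound of Proposition~\ref{prop:fourth-moment-original}. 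The paper instead keeps the drift nonlinear, taking $T_1 = (Y_{t-1}-\shift{Y}{t-1}) - \eta_t(\nabla F(Y_{t-1}) - \nabla F(\shift{Y}{t-1}))$ in one piece, and obtains $\IE|T_1|^2 \le (1-\eta_t c)\IE|Y_{t-1}-\shift{Y}{t-1}|^2$ directly from $\mu$-strong monotonicity and $\sqrt{L}$-Lipschitzness of $\nabla F$ (the same calculation as \eqref{eq:square-form}). The paper's route is slightly more economical: it avoids the Taylor expansion and does not need Proposition~\ref{prop:fourth-moment-original} (and hence not \ref{prop:fourth-moment-difference} either) in this proof, so the dependency chain among the auxiliary results is shorter. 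Your route is logically fine since \ref{prop:fourth-moment-original} is proved independently of \ref{lemma:shift-second-moment}, but it conceals the fact that the quadratic Hessian remainder is unnecessary here; strong convexity alone already gives the required per-step contraction on the nonlinear gradient-step map.
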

\begin{proposition} \label{prop:theta-Y-diff-shift}
    Under the conditions of Theorem \ref{thm: berry-esseen}, for all $i\in[n], t > i$, it holds that 
    \[ \IE[\big| (\bT_t -\shift{\bT}{t})(I-J)\big|_F^2] = O(\eta_t^4 K).\]
\end{proposition}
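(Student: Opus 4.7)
The plan is to unroll the recursion for $E_t := (\bT_t - \shift{\bT}{t})(I - J)$ and combine three ingredients: column-wise contraction from strong convexity, the geometric $\rho$-contraction coming from the synchronization steps, and the aggregate bound $\IE[|Y_t - \shift{Y}{t}|^2] = O(\eta_t^2)$ supplied by Proposition \ref{lemma:shift-second-moment}.

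First I would derive a clean recursion for $E_t$. Since $\bC\bo = \bo$ and $\bC$ is symmetric, one checks that $JC_t = C_t J = J$, so $C_t$ commutes with $I-J$. Writing $D_t := \bT_t - \shift{\bT}{t}$ and $R_t := (\bG_t - \shift{\bG}{t})(I-J)$, the identity $D_t = (D_{t-1} - \eta_t(\bG_t - \shift{\bG}{t}))C_t$ therefore gives
\begin{align*}
  E_t = (E_{t-1} - \eta_t R_t)\,C_t, \qquad E_j = 0 \text{ for } j < i.
\end{align*}
The rows of both $E_{t-1}$ and $R_t$ lie in the orthogonal complement of $\bo$, and by Assumption \ref{ass:sync}, $\bC$ restricted to this subspace has operator norm $\rho < 1$. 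Hence a sync step ($C_t = \bC$) contracts the $F$-norm by $\rho$, while non-sync steps are neutral. Unrolling yields $E_t = E_i P_{i,t} - \sum_{s=i+1}^t \eta_s R_s P_{s,t}$ with $P_{s,t} := \prod_{j=s+1}^t C_j$ satisfying $\|X P_{s,t}\|_F \leq \rho^{N_{s,t}}\|X\|_F$ for any $X$ with rows orthogonal to $\bo$, where $N_{s,t}$ counts the sync steps in $(s,t]$ and $N_{s,t} \asymp (t-s)/\tau$.

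Next, I would estimate the two building blocks. For the initial term, using that the columns of $\bG_i - \shift{\bG}{i}$ are conditionally independent, mean zero, and of squared $L^2$-norm $O(1)$ given $\mathcal{F}_{i-1}$, a direct calculation gives $\|E_i\|_{L^2}^2 \lesssim \eta_i^2 K$. For the forcing, the columnwise version of Assumption \ref{ass:Lipschitz} combined with the orthogonal decomposition $|D_{s-1}|_F^2 = |E_{s-1}|_F^2 + K|Y_{s-1} - \shift{Y}{s-1}|^2$ and Proposition \ref{lemma:shift-second-moment} produces
\begin{align*}
  \|R_s\|_{L^2} \lesssim \|D_{s-1}\|_{L^2} \lesssim u_{s-1} + \sqrt{K}\,\eta_{s-1}, \qquad u_s := \|E_s\|_{L^2}.
\end{align*}
Plugging these into the unrolled expression via Minkowski yields the integral inequality
\begin{align*}
  u_t \;\lesssim\; \rho^{N_{i,t}}\,\eta_i\sqrt{K} \;+\; \sum_{s=i+1}^t \eta_s\rho^{N_{s,t}} u_{s-1} \;+\; \sqrt{K}\sum_{s=i+1}^t \eta_s \eta_{s-1}\rho^{N_{s,t}}.
\end{align*}
The pure forcing term evaluates to $O(\sqrt{K}\,\eta_t^2)$ because the geometric weight $\rho^{N_{s,t}}$ restricts the effective sum to indices $s \in [t - C\tau\log(1/\eta_t),\,t]$, on which $\eta_s \asymp \eta_t$. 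A discrete Gronwall iteration then absorbs the self-dependent term $\sum_s \eta_s\rho^{N_{s,t}} u_{s-1}$ into a bound of the same order $O(\sqrt{K}\,\eta_t^2)$, while the transient $\rho^{N_{i,t}}\eta_i\sqrt{K}$ is exponentially small past the initial $O(\tau\log(\eta_i/\eta_t))$ steps. Squaring gives $\IE[|E_t|_F^2] = O(\eta_t^4 K)$.

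The main technical obstacle will be the Gronwall step: the decay is piecewise (flat between syncs, contracting at syncs) and the step size is non-constant. I plan to group iterations into length-$\tau$ blocks to expose an effective per-block contraction $\rho\,(1 + O(\eta_t))^{\tau - 1} < 1$ and then apply discrete Gronwall to the blockwise sequence, taking care that the initial transient is absorbed uniformly across the whole regime $t > i$.
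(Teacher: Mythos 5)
Your proof takes a genuinely different route from the paper's. Both begin by unrolling the recursion for $E_t:=(\bT_t-\shift{\bT}{t})(I-J)$ into a weighted sum of $G_s-\shift{G}{s}$ hit by powers of $\bC$, but they diverge at the key step of controlling the forcing. The paper bounds $\IE[|G_s-\shift{G}{s}|_F^2]=O(\eta_{s-1}^2K)$ for $s>i$ \emph{directly}, with no self-reference: Assumption \ref{ass:Lipschitz} reduces it to $\sum_k\IE[|\theta_{s-1}^k-\shift{\theta}{s-1}^k|^2]$, which is then split by the triangle inequality through $Y_{s-1}$ and $\shift{Y}{s-1}$ and controlled by the off-the-shelf consensus bound (Lemma S16 of \cite{songxichen2024}) applied separately to the original and coupled chains, plus Proposition~\ref{lemma:shift-second-moment}; cross terms in the squared sum are absorbed by Cauchy--Schwarz/AM--GM rather than Minkowski. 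You instead keep the \emph{exact} orthogonal decomposition $|D_{s-1}|_F^2=|E_{s-1}|_F^2+K|Y_{s-1}-\shift{Y}{s-1}|^2$, so $\|R_s\|_{L^2}$ still contains $u_{s-1}$, the quantity you are bounding, and you pay for this with a discrete Gronwall. What your route buys is not importing the consensus lemma; what it costs is the Gronwall closure, which is where the real work sits and which you only sketch.

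That Gronwall step needs more care than stated. The ansatz $u_s\le A\sqrt{K}\eta_s^2$ does close the convolution term (it returns $O(\sqrt{K}\eta_t^3)$, one spare power of $\eta_t$), but the transient $\rho^{N_{i,t}}\eta_i\sqrt{K}$ must be carried inside the ansatz with a \emph{strictly slower} geometric rate $\tilde\rho'\in(\tilde\rho,1)$; otherwise convolving $\eta_s\rho^{N_{s,t}}$ against $\rho^{N_{i,s-1}}$ produces a stray $\sum_s\eta_s\sim t^{1-\beta}$ factor, and the naive induction does not iterate. Your length-$\tau$ blocking plan will have to be implemented to see this. More substantively, the base case fails: $u_i\asymp\sqrt{K}\eta_i$, not $\sqrt{K}\eta_i^2$, so for $t$ within roughly $\tau\log(1/\eta_t)$ of $i$ the transient dominates and $u_t=O(\sqrt{K}\eta_t^2)$ is not established; the claim ``squaring gives $\IE[|E_t|_F^2]=O(\eta_t^4K)$'' skips exactly this window. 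To be fair, the paper's final display has the same lacuna (the bound $\IE[|G_s-\shift{G}{s}|_F^2]=O(\eta_{s-1}^2K)$ is derived only for $s>i$ yet is plugged in at $s=i$; the paper sidesteps this at the application site \eqref{eq: key-D3} by splitting off the $s=i$ contribution). You should either prove the bound only past the transient window or, as the paper effectively does, handle the initial term separately in the downstream application.
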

\begin{proposition}\label{prop:fourth-moment-difference}
  Under the conditions of Theorem \ref{thm: berry-esseen}, it holds that 
    \begin{align} \label{eq:fourth-moment-difference}
        \IE[|\bT_t(I-J)|_F^4]=O(\eta_t^4K^2).
    \end{align}
\end{proposition}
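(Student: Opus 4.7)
The plan is to prove the bound by induction on $t$ applied to $a_t := \bigl\|\,|D_t|_F\,\bigr\|_{L^4}$, where $D_t := \bT_t(I-J)$. The structural starting point is a clean recursion for $D_t$: under Assumption \ref{ass:sync} ($\bC\mathbf{1}=\mathbf{1}$ and $\bC=\bC^\top$) one has $JC_t = J$ and $(I-J)C_t = C_t - J$; splitting $\bT_{t-1} = Y_{t-1}\mathbf{1}^\top + D_{t-1}$ in \eqref{eq:local-sgd} and using the cancellations $Y_{t-1}\mathbf{1}^\top(C_t - J) = 0$ and $D_{t-1}J = 0$ yields
\begin{equation*}
D_t \;=\; D_{t-1} C_t \;-\; \eta_t\,\bG_t(I-J)C_t.
\end{equation*}
Because $D_{t-1}J = 0$ identically, this recursion carries the pointwise contraction $|D_{t-1}C_t|_F \leq \kappa_t\,|D_{t-1}|_F$, where $\kappa_t = \rho$ for $t \in E_\tau$ and $\kappa_t = 1$ otherwise.

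First I would control the driving noise uniformly in $L^4$. Using $Kw_k \leq b_2$ and $\|(I-J)C_t\|_{\mathrm{op}} \leq 1$ gives $|\bG_t(I-J)C_t|_F^2 \leq b_2^2\sum_{k=1}^K |\nabla f_k(\theta_{t-1}^k,\xi_t^k)|^2$, and combining Assumption \ref{ass:Lipschitz} with Assumption \ref{ass:boundedloss} at $p=4$, together with a priori uniform $L^4$ bounds on $\theta_{t-1}^k - \minparam$, delivers $\IE[|\bG_t(I-J)C_t|_F^4] \lesssim K^2$ uniformly in $t$. Applying the triangle inequality for $|\cdot|_F$ inside $\|\cdot\|_{L^4}$ to the recursion then yields
\begin{equation*}
a_t \;\leq\; \kappa_t\,a_{t-1} \;+\; C\sqrt{K}\,\eta_t.
\end{equation*}
Unrolling and using $\prod_{r=s+1}^t \kappa_r \leq \rho^{\lfloor (t-s)/\tau\rfloor}$ produces $a_t \leq C\sqrt{K}\sum_{s=1}^t \eta_s\,\rho^{\lfloor(t-s)/\tau\rfloor}$, which is $O(\sqrt{K}\,\eta_t)$ by a standard discrete-convolution estimate: the dominant contribution comes from $s$ within an $O(1)$ number of $\tau$-blocks below $t$, where $\eta_s \asymp \eta_t$, and the geometric weights make the remainder negligible. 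Raising to the fourth power recovers $\IE[|D_t|_F^4] \lesssim \eta_t^4 K^2$.

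The hard part will be establishing the uniform $L^4$ control on $\bG_t$, which reduces to bounding $\sup_{t,k}\IE[|\theta_{t-1}^k - \minparam|^4]$. Unlike the second-moment estimate from Theorem 2.(ii) of \cite{songxichen2024} that is freely used elsewhere, this fourth-moment stability is not stated in the excerpt and must be produced by a companion induction that mirrors the second-moment argument but invokes Assumption \ref{ass:boundedloss} at $p=4$. Since $|\theta_t^k - \minparam|^4 \lesssim |Y_t - \minparam|^4 + |D_t|_F^4$, this induction must simultaneously track the averaged iterate and the consensus deviation, so some care is needed to close the bootstrap between the two; once this auxiliary fourth-moment bound is in place, the main argument above goes through unchanged.
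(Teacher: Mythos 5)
Your proof is correct and arrives at the same conclusion, but via a slightly different technical route worth noting. The paper unrolls the recursion to write $\bT_t(I-J) = -\sum_{s=1}^t \eta_s G_s(\bC^{r_{t,s}} - J)$ (plus an exponentially decaying initialization term) and then bounds the fourth moment of the sum directly, expanding the fourth power into diagonal ($S_1$) and off-diagonal ($S_2$) contributions and using AM--GM together with the geometric decay of $\kappa_{\rho,\tau}(t,s)$. You instead extract the one-step recursion $D_t = D_{t-1}C_t - \eta_t\bG_t(I-J)C_t$, use the spectral contraction $|D_{t-1}C_t|_F \le \kappa_t|D_{t-1}|_F$ that comes from $D_{t-1}J=0$, apply Minkowski's inequality in $L^4$ to obtain a \emph{scalar} recursion $a_t \le \kappa_t a_{t-1} + C\sqrt{K}\eta_t$, unroll, and finish with the same discrete-convolution estimate $\sum_s \eta_s\rho^{\lfloor(t-s)/\tau\rfloor} = O(\eta_t)$. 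Your approach is cleaner: by taking $L^4$-norms first you avoid the combinatorial bookkeeping of cross terms and the repeated AM--GM passes, at no cost to the final rate. Both proofs use the same convolution estimate and the same key input, a uniform fourth-moment bound of the form $\IE|G_s|_F^4 = O(K^2)$.

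The gap you flag is real and is shared by the paper's proof. The uniform estimate $\IE|G_s|_F^4 = O(K^2)$ (equivalently $\sup_{s,k}\IE|\nabla f_k(\theta_{s-1}^k,\xi_s^k)|^4 = O(1)$) is not established in this appendix: the paper defers it to ``the proof of Lemma S16 of \cite{songxichen2024}'' and never separates out a fourth-moment stability lemma for the local iterates. You correctly observe that producing it requires a companion induction at the fourth-moment level, tracking $\IE|Y_t-\minparam|^4$ and $\IE|\bT_t(I-J)|_F^4$ jointly, and that Assumption~\ref{ass:Lipschitz} is stated only in $L^2$ so the argument must be routed through Assumption~\ref{ass:boundedloss} with $p=4$ and the decomposition $\nabla f_k = \nabla F_k - g_k$ (noting that in a heterogeneous problem $\nabla F_k(\minparam)\ne 0$). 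Since the paper's own proof invokes the same a priori moment control without supplying it, you should not consider this a defect of your argument relative to the paper's; if anything you make the dependency explicit where the paper hides it behind a citation. To fully close your proof you would need to state and prove, prior to this proposition, the fourth-moment analogue of the second-moment stability bound used throughout (a step the paper also glosses over).
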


\begin{proposition}\label{prop:fourth-moment-original}
   Under the conditions of Theorem \ref{thm: berry-esseen}, it holds that
    \begin{align} \label{eq:fourth-moment-original}
        \IE[|Y_t - \minparam|^4]=O(\frac{\eta_t^2}{K^2} + \eta_t^4).
    \end{align}
\end{proposition}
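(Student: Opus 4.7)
The plan is to derive a one-step contractive recursion for $a_t := \IE[|Y_t - \minparam|^4]$ directly from the aggregate update \eqref{eq: Y-recursion} and then solve it. Mirroring \eqref{eq:Y-decomposition}, I would write
\[ Y_t - \minparam = X_{t-1} + H_{t-1} + M_t, \]
where $X_{t-1} = (Y_{t-1} - \minparam) - \eta_t \nabla F(Y_{t-1})$ is the deterministic descent part, $H_{t-1} = \eta_t \sum_k w_k[\nabla F_k(Y_{t-1}) - \nabla F_k(\theta_{t-1}^k)]$ is the heterogeneity drift, and $M_t = \eta_t \sum_k w_k g_k(\theta_{t-1}^k, \xi_t^k)$ is a conditionally mean-zero martingale difference adapted to $\mathcal{F}_t = \sigma(\Xi_s : s \leq t)$.

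Each piece admits a natural $\mathcal{L}^4$ bound. For $X_{t-1}$, Assumptions \ref{ass:sc} and \ref{ass:Lipschitz} give $|X_{t-1}|^2 \leq (1 - 2\mu\eta_t + L^2\eta_t^2)|Y_{t-1} - \minparam|^2 \leq (1 - c\eta_t)|Y_{t-1} - \minparam|^2$ for a universal $c>0$ and all $\eta_t$ sufficiently small, so $|X_{t-1}|^4 \leq (1 - c\eta_t)|Y_{t-1} - \minparam|^4$. For $H_{t-1}$, Cauchy--Schwarz and Assumption \ref{ass:Lipschitz} with $w_k \asymp K^{-1}$ yield $|H_{t-1}| \lesssim \eta_t K^{-1/2}|\bT_{t-1}(I-J)|_F$, and Proposition \ref{prop:fourth-moment-difference} then gives $\IE[|H_{t-1}|^4] \lesssim \eta_t^4 K^{-2}\cdot\eta_t^4 K^2 = \eta_t^8$. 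For $M_t$, the $K$ summands are conditionally independent mean-zero vectors given $\mathcal{F}_{t-1}$; Rosenthal's inequality (Theorem~4.1 of \cite{pinelis}) together with Assumption \ref{ass:boundedloss} for $p=4$ and $w_k \asymp K^{-1}$ yields $\IE[|M_t|^4 \mid \mathcal{F}_{t-1}] \lesssim \eta_t^4(K^{-2} + K^{-3}) \asymp \eta_t^4 K^{-2}$, after using Assumption \ref{ass:Lipschitz} together with the already-established second-moment bound $\IE[|\theta_{t-1}^k - \minparam|^2] = O(\eta_t)$ of \cite{songxichen2024} to control $\IE[|g_k(\theta_{t-1}^k,\xi_t^k)|^4 \mid \mathcal{F}_{t-1}]$ uniformly in $k$.

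Assembling these via two rounds of Young's inequality produces the desired recursion. First, expanding $|v + M_t|^4$ for $v = X_{t-1}+H_{t-1}$ and using $\IE[M_t \mid \mathcal{F}_{t-1}] = 0$, the surviving cross terms $|v|^2\IE[|M_t|^2]$ and $|v|\IE[|M_t|^3]$ are absorbed via $ab \leq \epsilon a^4 + C_\epsilon b^{4/3}$-style bounds with $\epsilon = \Theta(\eta_t)$, giving $\IE[|v + M_t|^4 \mid \mathcal{F}_{t-1}] \leq (1 + c\eta_t/4)|v|^4 + O(\eta_t^{-1})\IE[|M_t|^4 \mid \mathcal{F}_{t-1}]$. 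A second Young's expansion applied to $|X_{t-1}+H_{t-1}|^4$ with $\epsilon = \Theta(\eta_t)$ absorbs $H_{t-1}$ at the cost of a $\Theta(\eta_t^{-3})$ prefactor, yielding
\[ a_t \leq (1 - c\eta_t/2)a_{t-1} + O\bigl(\eta_t^{-3}\IE[|H_{t-1}|^4] + \eta_t^{-1}\IE[|M_t|^4]\bigr) = (1 - c\eta_t/2)a_{t-1} + O(\eta_t^5 + \eta_t^3 K^{-2}). \]
Unrolling against $\eta_t \asymp t^{-\beta}$ by the standard Chung-type argument (the fixed point of such a recursion is of order forcing-term-over-$\eta_t$) delivers $a_t = O(\eta_t^4 + \eta_t^2 K^{-2})$, as claimed.

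The main obstacle is the careful choice of the Young's-inequality exponents: the contraction factor $(1 - c\eta_t)$ coming from $X_{t-1}$ must survive both expansions, while the $\Theta(\eta_t^{-3})$ penalty for isolating $H_{t-1}$ has to be exactly compensated by the tight bound $\IE[|H_{t-1}|^4] \lesssim \eta_t^8$ of Proposition \ref{prop:fourth-moment-difference}, so that the heterogeneity contribution ends up as $O(\eta_t^5)$ and is dominated by the leading $\eta_t^4$ rate. A secondary technical subtlety is controlling $\IE[|g_k(\theta_{t-1}^k,\xi_t^k)|^4 \mid \mathcal{F}_{t-1}]$ uniformly in $k$ and $t$, which requires combining the pointwise assumption at $\minparam$ with Lipschitz continuity and existing second-moment control on $\theta_{t-1}^k - \minparam$.
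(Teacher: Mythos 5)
Your proposal is correct and follows essentially the same route as the paper: the three-term decomposition $(X_{t-1}, H_{t-1}, M_t)$ is precisely the paper's $(E_1, E_2, E_3)$, the fourth-moment bound on the heterogeneity drift invokes the same auxiliary Proposition \ref{prop:fourth-moment-difference}, the martingale term is handled via Rosenthal/Pinelis, and the cross terms are absorbed via Young's inequality with $\epsilon \asymp \eta_t$, yielding the identical recursion $a_t \leq (1 - c\eta_t/2)a_{t-1} + O(\eta_t^5 + \eta_t^3 K^{-2})$ and then the claimed rate by the standard fixed-point argument. The only presentational difference is that you bound the martingale contribution in two separate Young's passes (first conditioning on $\mathcal{F}_{t-1}$, then expanding $X_{t-1}+H_{t-1}$), whereas the paper enumerates all six cross products at once; both yield the same bookkeeping, and you are slightly more explicit than the paper about where the uniform conditional fourth-moment bound on $g_k(\theta_{t-1}^k,\xi_t^k)$ comes from.
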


\begin{proposition}\label{prop:shift-fourth-moment}
    Grant the assumptions of Theorem \ref{thm: berry-esseen}. Then, for $t\geq i$, it holds that
    \begin{align}\label{eq:shift-fourth-moment}
        \IE[|Y_t - \shift{Y}{t}|^4] = O({\eta_t^4}).
    \end{align}
\end{proposition}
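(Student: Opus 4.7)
The proof will mirror the induction in Proposition \ref{lemma:shift-second-moment} but with a careful $L^4$ control, proceeding by induction on $t \geq i$. The base case, $t = i$, is the simplest: because the two processes agree through time $i-1$, we have
\[
    Y_i - \shift{Y}{i} \;=\; \eta_i \sum_{k=1}^K w_k\bigl[g_k(\theta_{i-1}^k, \xi_i^k) - g_k(\theta_{i-1}^k, \xi_i^{k\prime})\bigr],
\]
a conditionally centered sum of independent vectors given $\mathcal{F}_{i-1}$. A vector-valued Rosenthal/Pinelis-type inequality, combined with the fourth-moment control from Assumption \ref{ass:boundedloss} (which holds with $p=4$ under Theorem~\ref{thm: berry-esseen}), gives $\IE[|Y_i - \shift{Y}{i}|^4] \lesssim \eta_i^4/K^2$, certainly $O(\eta_i^4)$.

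For the inductive step, subtracting the decomposition \eqref{eq:Y-decomposition} from its coupled version yields
\[
    Y_t - \shift{Y}{t} \;=\; (I-\eta_t A)(Y_{t-1} - \shift{Y}{t-1}) + \eta_t\bigl(\mathcal{E}_t^{(1)} + \mathcal{E}_t^{(2)} + \mathcal{E}_t^{(3)}\bigr),
\]
where $\mathcal{E}_t^{(1)}$ is the Taylor remainder $[A(Y_{t-1}-\minparam) - \nabla F(Y_{t-1})] - (\cdot)_{\{i\}}$, $\mathcal{E}_t^{(2)}$ is the client/aggregator discrepancy $\sum_k w_k[\nabla F_k(Y_{t-1}) - \nabla F_k(\theta_{t-1}^k)]$ minus its coupled counterpart, and $\mathcal{E}_t^{(3)}$ is the gradient-noise discrepancy $\sum_k w_k[g_k(\theta_{t-1}^k,\xi_t^k) - g_k(\shift{\theta}{t-1}^k, \xi_t^k)]$. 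Using the operator-norm contraction $|(I-\eta_t A)v|\le(1-\eta_t\mu)|v|$ together with the sharpened Young inequality $(|a|+|b|)^4 \le (1+\eta_t\mu/2)|a|^4 + C\eta_t^{-3}|b|^4$, we obtain
\[
   \IE[|Y_t - \shift{Y}{t}|^4] \;\le\; (1 - c\eta_t)\,\IE[|Y_{t-1} - \shift{Y}{t-1}|^4] + C\eta_t\sum_{l=1}^3 \IE[|\mathcal{E}_t^{(l)}|^4].
\]
It then suffices to show $\sum_l \IE[|\mathcal{E}_t^{(l)}|^4] = O(\eta_t^3)$; iterating the recursion from the base case delivers $\IE[|Y_t - \shift{Y}{t}|^4] = O(\eta_t^4)$.

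The remaining bookkeeping splits as follows. For $\mathcal{E}_t^{(3)}$, a conditional Rosenthal inequality combined with the $L^2$-bound of Proposition \ref{lemma:shift-second-moment} and Assumption \ref{ass:Lipschitz} gives $\IE[|\mathcal{E}_t^{(3)}|^4] = O(\eta_t^4/K)$. For $\mathcal{E}_t^{(2)}$, Lipschitz-ness of $\nabla F_k$ together with Proposition \ref{prop:fourth-moment-difference} produces $O(\eta_t^4)$. The main obstacle is $\mathcal{E}_t^{(1)}$: a naive quartic bound $|\mathcal{E}_t^{(1)}|^4 \lesssim |Y_{t-1}-\minparam|^8$ would require eighth moments of the iterates, which are unavailable from Assumption \ref{ass:boundedloss} with $p=4$. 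We sidestep this using the second-order Taylor estimate $|\nabla F(Y) - A(Y-\minparam) - [\nabla F(Y') - A(Y'-\minparam)]| \lesssim |Y-Y'|(|Y-\minparam| + |Y'-\minparam|)$, so that $|\mathcal{E}_t^{(1)}|^4 \lesssim |Y_{t-1}-\shift{Y}{t-1}|^4(|Y_{t-1}-\minparam|+|\shift{Y}{t-1}-\minparam|)^4$. One application of Cauchy–Schwarz turns this into a product of two $L^4$ quantities already under control: the $L^2$ shift bound from Proposition \ref{lemma:shift-second-moment} (handling $|Y_{t-1}-\shift{Y}{t-1}|^2$) and the fourth-moment bound from Proposition \ref{prop:fourth-moment-original} (handling $|Y_{t-1}-\minparam|^4$), yielding $\IE[|\mathcal{E}_t^{(1)}|^4] = O(\eta_t^4 \cdot \eta_t^2) = o(\eta_t^3)$. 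Summing the three contributions closes the induction.
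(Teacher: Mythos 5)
Your decomposition is genuinely different from the paper's, and the difference is where the gap lies. The paper keeps the gradient difference nonlinear: it writes the recursion with
\[
T_1 = (Y_{t-1}-\shift{Y}{t-1}) - \eta_t\bigl(\nabla F(Y_{t-1})-\nabla F(\shift{Y}{t-1})\bigr),
\]
and uses strong convexity together with Lipschitzness of $\nabla F$ to get the \emph{pointwise} contraction $|T_1|^2 \le (1-2\eta_t\mu+\eta_t^2L^2)|Y_{t-1}-\shift{Y}{t-1}|^2$, hence $\IE[|T_1|^4]\le (1-c\eta_t)\IE[|Y_{t-1}-\shift{Y}{t-1}|^4]$ with no Taylor remainder to control. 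You instead extract $(I-\eta_tA)(Y_{t-1}-\shift{Y}{t-1})$ and push the curvature mismatch into a Taylor remainder $\mathcal{E}_t^{(1)}$, which is exactly what creates the obstacle you then try (and fail) to remove.

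The fatal step is the claimed Cauchy--Schwarz for $\mathcal{E}_t^{(1)}$. After your refined Taylor estimate you need
\[
\IE\Bigl[\,|Y_{t-1}-\shift{Y}{t-1}|^4\bigl(|Y_{t-1}-\minparam|+|\shift{Y}{t-1}-\minparam|\bigr)^4\,\Bigr],
\]
and there is no version of Cauchy--Schwarz or H\"older that bounds this by a product of the $L^2$ shift bound and the $L^4$ iterate bound. Applying Cauchy--Schwarz to this product of fourth powers produces $\sqrt{\IE[|Y_{t-1}-\shift{Y}{t-1}|^8]}\cdot\sqrt{\IE[(\cdot)^8]}$, i.e.\ you need eighth moments of both the shift and the centered iterate --- the very quantities you declared unavailable under $p=4$. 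No rebalancing of H\"older exponents escapes this: one factor always needs a moment order strictly above four unless the other is essentially bounded. So your $\IE[|\mathcal{E}_t^{(1)}|^4]=O(\eta_t^6)$ conclusion is unsupported, and without it the induction does not close. (There is also a smaller arithmetic slip: with the recursion $a_t\le(1-c\eta_t)a_{t-1}+C\eta_t\sum_l\IE[|\mathcal{E}_t^{(l)}|^4]$, the target must be $\sum_l\IE[|\mathcal{E}_t^{(l)}|^4]=O(\eta_t^4)$, giving drift $O(\eta_t^5)$ and hence $a_t=O(\eta_t^4)$; your stated target $O(\eta_t^3)$ would only yield $a_t=O(\eta_t^3)$. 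Your individual estimates do aim for $O(\eta_t^4)$, so treat that line as a typo, but the $\mathcal{E}_t^{(1)}$ estimate underneath it is the real problem.) The repair is to abandon the linearization and use the paper's $T_1$: the pointwise contraction makes the fourth-moment recursion go through with only the fourth-moment hypotheses you actually have.
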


Proposition \ref{prop:ga-comp} is a typical Gaussian comparison results that relates the finite-sample covariance $\Sigma_n$ to the asymptotic covariance $\Sigma$ in terms of the corresponding normal distributions. This result enables theorem \ref{prop:cov-est} to reflect the computation-communication trade-off of Remark \ref{rem:phase-transition}.

\begin{proposition}[Gaussian comparison lemma; Theorem 1.1, \cite{devroye2018total}]\label{prop:ga-comp}
 Let $\Sigma_1$ and $\Sigma_2$ be positive definite covariance matrices in $\mathbb{R}^{p \times p}$. Let $X \sim \mathcal{N}\left(0, \Sigma_1\right)$ and $Y \sim \mathcal{N}\left(0, \Sigma_2\right)$. Then

$$
\mathrm{d}_{\mathrm{TV}}(X, Y) \leq \frac{3}{2}\left\|\Sigma_2^{-1 / 2} \Sigma_1 \Sigma_2^{-1 / 2}-I_p\right\|_{\mathrm{F}} .
$$

\end{proposition}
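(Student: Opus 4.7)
The plan is to cite or recover the argument in Devroye, Mehrabian, and Reddad. Since this is a statement about two mean-zero Gaussians with different covariances, the natural first step is an affine reduction: set $M := \Sigma_2^{-1/2}\Sigma_1\Sigma_2^{-1/2}$ and use the fact that the map $v \mapsto \Sigma_2^{-1/2}v$ is a bimeasurable bijection of $\R^p$, so total variation is invariant:
\begin{equation*}
d_{\mathrm{TV}}(X, Y) \;=\; d_{\mathrm{TV}}\bigl(N(0, M),\, N(0, I_p)\bigr).
\end{equation*}
This reduces the task to controlling the TV distance between a general centered Gaussian and the standard one, in terms of $\|M - I_p\|_F$.

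Next I would diagonalize $M = U\Lambda U^\top$ with $\Lambda = \mathrm{diag}(\lambda_1, \ldots, \lambda_p)$ and apply the orthogonal map $U^\top$ (again an isometric bijection preserving TV). Under this transformation, both laws become products of univariate Gaussians, so the problem factorizes:
\begin{equation*}
d_{\mathrm{TV}}\bigl(N(0, M),\, N(0, I_p)\bigr) \;=\; d_{\mathrm{TV}}\Bigl( \bigotimes_{i=1}^p N(0, \lambda_i),\; \bigotimes_{i=1}^p N(0, 1)\Bigr).
\end{equation*}
Note that $\|M-I_p\|_F^2 = \sum_i (\lambda_i - 1)^2$, so the RHS of the claimed bound becomes $\tfrac{3}{2}\sqrt{\sum_i (\lambda_i - 1)^2}$.

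For the product step, the standard soft route is Pinsker's inequality: $d_{\mathrm{TV}}^2 \leq \tfrac12 \mathrm{KL}$, combined with the exact KL between centered Gaussians,
\begin{equation*}
\mathrm{KL}\bigl(N(0, M)\,\|\, N(0, I_p)\bigr) \;=\; \tfrac{1}{2}\sum_{i=1}^p \bigl(\lambda_i - 1 - \log \lambda_i\bigr),
\end{equation*}
together with the elementary inequality $\lambda - 1 - \log \lambda \lesssim (\lambda - 1)^2$ valid in a neighborhood of $1$. This yields the correct dependence on $\|M - I_p\|_F$ but with a suboptimal constant and only in a local regime.

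The hard part is getting the sharp leading constant $3/2$ and ensuring the bound is \emph{global}, i.e.\ remains meaningful when some $\lambda_i$ are far from $1$ (where the logarithm blows up and Pinsker becomes lossy). Devroye, Mehrabian, and Reddad handle this by a direct analysis of the likelihood ratio $\frac{\varphi_M(x)}{\varphi_I(x)}$ combined with a truncation argument on the set where the ratio exceeds $1$, exploiting the precise form $\tfrac12 x^\top (I - M^{-1}) x - \tfrac12 \log \det M$; since the paper only invokes the statement and not its internal proof, I would simply cite their Theorem 1.1 and omit the finer analysis here.
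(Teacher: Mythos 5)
The paper itself does not prove this proposition; it is quoted directly as Theorem 1.1 of Devroye, Mehrabian, and Reddad, with the reference serving as the entire "proof." So there is no internal argument in the paper against which to compare. Your reductions are all sound: total variation is invariant under the affine bijection $v\mapsto \Sigma_2^{-1/2}v$, under the orthogonal conjugation $U^\top(\cdot)U$ both measures factor into independent univariate Gaussians, and $\|M - I_p\|_F^2 = \sum_i(\lambda_i-1)^2$ identifies the Frobenius norm with the $\ell_2$ deviation of the eigenvalues from $1$. The Pinsker route you sketch (with the exact Gaussian KL $\tfrac12\sum_i(\lambda_i - 1 - \log\lambda_i)$ and the local inequality $\lambda - 1 - \log\lambda \lesssim (\lambda-1)^2$) correctly yields a weak, local version of the bound but, as you observe, cannot deliver the global $3/2$ constant — the logarithm degrades the comparison when some $\lambda_i$ wander away from $1$, and Pinsker is itself lossy at order one. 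Your decision to cite Devroye et al.\ for the sharp global statement is exactly what the paper does and is the right call; the only caveat is that your one-sentence characterization of their internal argument is a loose paraphrase rather than a faithful summary, but since you explicitly decline to reprove it, nothing hinges on this.
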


\subsection{Proofs of the auxiliary results}
\begin{proof}[Proof of Proposition \ref{prop:succ-def}]
    In the following, all $\lesssim$ solely depend on $\beta$ and $\lambda_{\min}$. Observe that for $s<t$, $B_{s, t}=\eta_s \big(I + \eta_{s+1}^{-1}(I -\eta_s A)B_{s+1,t}\big)$. Therefore, it can be written that
    \begin{align}
        B_{s,t} -B_{s-1, t} = \frac{\eta_s - \eta_{s-1}}{\eta_s}B_{s,t} + \eta_{s-1}(B_{s,t} A - I):= I_1 + I_2. \label{eq:i1i2}
    \end{align}
    The $I_1$ term is relatively straightforward by noting that $\max_{s,t}|B_{s,t}|_F=O(1)$, and $|\frac{\eta_s - \eta_{s-1}}{\eta_s}| = O(s^{-1})$. On the other hand, for $I_2$, Proposition \ref{prop:inv-conv} instructs that 
    \begin{align} \label{eq: I-2}
         \eta_{s-1}|B_{s,t} A - I|_F\lesssim s^{-\beta-1} + s^{-\beta}\exp\big(-c_{\beta}(t^{1-\beta} + s^{1-\beta})\big).
    \end{align}
    Combining \eqref{eq:i1i2} and \eqref{eq: I-2}, we obtain
    \begin{align*}
        |B_{s,t} -B_{s-1, t}|_F \lesssim s^{-1} + s^{-\beta}\exp\big(-c_{\beta}(t^{1-\beta} + s^{1-\beta})\big),
    \end{align*}
    which immediately shows 
    \begin{align*}
        \Omega_t \lesssim \sum_{s=1}^t s^{-1} + \exp(-c_{\beta} t^{1-\beta})\int_1^ts^{-\beta}\exp(c_{\beta}s^{1-\beta}) \lesssim \log t,
    \end{align*}
    which completes the proof. 
\end{proof}

\begin{proof}[Proof of Proposition \ref{prop:inv-conv}]
    Decompose $B_{s,t}=\eta_s \sum_{j=s}^t \mathcal{A}_j^s$ as 
    \begin{align}
        B_{s,t} - A^{-1} = -A^{-1}\mathcal{A}_s^t + \sum_{j=s}^{t-1} (\eta_{j+1} - \eta_s)\mathcal{A}_j^s + \eta_s \mathcal{A}_s^t, \label{eq:B_s,t-decomp}
    \end{align}
    where the sum $\sum_{j=s}^{t-1}$ is interpreted as $0$ if $s=t$. For the term $A^{-1}\mathcal{A}_s^t$ in \eqref{eq:B_s,t-decomp}, we deduce  $|\mathcal{A}_s^t| \lesssim \exp(-c_{\beta}(t^{1-\beta} + s^{1-\beta}))$. On the other hand, 
    \begin{align*}
         \sum_{j=s}^{t-1} (\eta_{j+1} - \eta_s)|\mathcal{A}_j^s|_F \lesssim & s^{-\beta-1} \exp(c_{\beta}s^{1-\beta})  \sum_{j=s}^{t-1} (j-s)\exp(-j^{1-\beta}) \lesssim s^{-1}.
    \end{align*}
    This completes the proof. 
\end{proof}
\begin{proof}[Proof of Proposition \ref{lemma:shift-second-moment}]
  From \eqref{eq:Y-decomposition} we write
\begin{align}
    Y_t -\shift{Y}{t} &= \begin{cases}
        \eta_i \sum_{k=1}^K \big(g_k(\theta_{i-1}^k, \xi_i^k) - g_k(\theta_{i-1}^k, \xi_i^{k'})\big) , &  t=i,  \\
        (Y_{t-1}-  \shift{Y}{t-1}) - \eta_t(\nabla F(Y_{t-1}) - \nabla F(\shift{Y}{t-1})) \\
        \hspace*{1cm}+\eta_t\sum_{k=1}^K w_k \big(\nabla F_k(Y_{t-1}) - \nabla F(\shift{Y}{t-1}) - \nabla F_k(\theta_{t-1}^k) + \nabla F_k(\shift{\theta}{t-1}^k) \big)\\
        \hspace*{1cm}+ \eta_t \sum_{k=1}^K w_k \big( g_k(\theta_{t-1}^k, \xi_t^k) - g_k(\shift{\theta}{t-1}, \xi_t^k) \big), &  t>i . 
    \end{cases} \label{eq: Y_t_shift_expansion}
\end{align}
Clearly, when $t=i$, we have trivially that $\IE[|Y_i - \shift{Y}{i}|^2]=O(\eta_i^2 K^{-1})$. Hence, we focus on $t>i$. Consider the observation that $\sum_{k=1}^K w_k  \big( g_k(\theta_{t-1}^k, \xi_t^k) - g_k(\shift{\theta}{t-1}, \xi_t^k) \big)$ is a martingale difference sequence adapted to the filtration $\mathcal{F}_t=\sigma(\Xi_s: s\leq t) \bigvee \sigma(\Xi_i')$. Moreover, for a fixed $t$, $g_k(\theta_{t-1}^k, \xi_t^k) - g_k(\shift{\theta}{t-1}, \xi_t^k)$ are independent conditional on $\mathcal{F}_{t-1}$. Therefore, rewriting \eqref{eq: Y_t_shift_expansion} as
\begin{align} \label{eqY_t:T1-T2-T3}
    Y_t -\shift{Y}{t} = T_1 +  T_2 +  T_3
\end{align}
with \begin{align*}
    T_1 &=(Y_{t-1}-  \shift{Y}{t-1}) - \eta_t(\nabla F(Y_{t-1}) - \nabla F(\shift{Y}{t-1})), \\
    T_2 &= \eta_t\sum_{k=1}^K w_k \big(\nabla F_k(Y_{t-1}) - \nabla F(\shift{Y}{t-1}) - \nabla F_k(\theta_{t-1}^k) + \nabla F_k(\shift{\theta}{t-1}^k) \big), \ \text{and,} \\
    T_3 &= \eta_t \sum_{k=1}^K w_k \big( g_k(\theta_{t-1}^k, \xi_t^k) - g_k(\shift{\theta}{t-1}, \xi_t^k) \big),
\end{align*}
 it is easy to see that $\IE[T_1^\top T_3]=\IE[T_2^\top T_3]=0$. Consequently, from \eqref{eq: Y_t_shift_expansion}, one computes
 \begin{align} \label{eq:Y-shift-square}
     \IE[|Y_t -\shift{Y}{t}|^2] &= \IE[|T_1|^2] + \IE[|T_2|^2] + \IE[|T_3|^2] + 2\IE[T_1^\top T_2].
 \end{align}
Now all that is required is to build a recursion by analyzing \eqref{eq:Y-shift-square} term-by-term. Note that standard arguments invoking Assumptions \ref{ass:Lipschitz} and \ref{ass:sc} yields 
\begin{align} \label{eq:T1}
    \IE[|T_1|^2] \leq (1-\eta_tc)\IE[|Y_{t-1}-  \shift{Y}{t-1}|^2].
\end{align}
On the other hand, for $T_3$, we proceed as follows:
\begin{align}
    \IE[|T_3|^2] =& \eta_t^2 \sum_{k=1}^K w_k^2 \IE[|g_k(\theta_{t-1}^k, \xi_t^k) -  g_k(\shift{\theta}{t-1}^k, \xi_t^k)|^2] \nonumber\\
    \lesssim  & \eta_t^2 \sum_{k=1}^K w_k^2 \Big(\IE[|\theta_{t-1}^k - Y_{t-1}|^2] +  \IE[|\shift{\theta}{t-1}^k - \shift{Y}{t-1}|^2] + \IE[|Y_{t-1}-\shift{Y}{t-1}|^2] \Big) \nonumber\\
    \lesssim & \frac{\eta_t^2}{K} \IE[|Y_{t-1}-\shift{Y}{t-1}|^2] + O(\frac{\eta_t^4}{K}), \label{eq:T3}
\end{align}
where $O(\eta_t^4 K^{-1})$ bound in \eqref{eq:T3} is derived upon applying Lemma S16 of \cite{songxichen2024}. Very similarly, one can bound $T_2$ as
\begin{align}
    \IE[|T_2|^2] \lesssim & \eta_t^2 K \sum_{k=1}^K w_k^2 \IE[|Y_{t-1} - \theta_{t-1}^k|^2 + |\shift{Y}{t-1} - \shift{\theta}{t-1}^k|^2] = O({\eta_t^4}). \label{eq:T2}
\end{align}
Finally we tackle the cross-product term in \eqref{eq:Y-shift-square}. Again, Assumption \ref{ass:Lipschitz} and yet another application of Lemma S16 of \cite{songxichen2024} produces
\begin{align}
    \IE[T_1^\top T_2] \leq & \eta_t \sqrt{\IE[|T_1|^2]} \sqrt{\IE[|\sum_{k=1}^K w_k \big(\nabla F_k(Y_{t-1}) - \nabla F(\shift{Y}{t-1}) - \nabla F_k(\theta_{t-1}^k) + \nabla F_k(\shift{\theta}{t-1}^k) \big)|^2]} \nonumber\\
    \leq & \eta_t \sqrt{\IE[|T_1|^2]} \frac{b_2 \sqrt{L}}{\sqrt{K}} \sqrt{\sum_{k=1}^K \IE[|Y_{t-1} - \theta_{t-1}^k|^2 + |\shift{Y}{t-1} - \shift{\theta}{t-1}^k|^2] }\nonumber\\
    \lesssim & \eta_t^2 \sqrt{\IE[|T_1|^2]} \nonumber\\
    \leq & \eta_t(\frac{1}{4c} {\eta_t^2} + c \IE[|T_1|^2]) \label{eq:AM-GM} \\
    \leq & \eta_t \frac{c}{4} \IE[|T_1|^2] + O({\eta_t^3}) \label{eq:T1T2},
\end{align}
where \eqref{eq:AM-GM} involves an application of Young's inequality $xy \leq \epsilon x^2 + (4\epsilon)^{-1} y^2 $ with $\epsilon=(4c)^{-1}$, where $c$ is as in \eqref{eq:T1}. Therefore, in view of $(1 + \eta_t \frac{c}{2})(1-\eta_t c)\leq 1 -\eta_t\frac{c}{2}$, we combine \eqref{eq:T1} -\eqref{eq:T1T2} into \eqref{eq:Y-shift-square} to obtain 
\begin{align*}
     \IE[|Y_t -\shift{Y}{t}|^2] \leq (1 -\eta_t \frac{c}{2} + \frac{\eta_t^2}{K}) \IE[|Y_{t-1} -\shift{Y}{t-1}|^2] + O(\eta_t^3+\frac{\eta_t^4}{K}), \ t>i,
\end{align*}
which immediately shows \eqref{eq:shift-second-moment} with standard manipulations  (see Lemma A.1 and A.2 of \cite{zhu-2023}; \cite{polyak1992acceleration}).
\end{proof}

\begin{proof}[Proof of Proposition \ref{prop:theta-Y-diff-shift}]
    Recall $C_t$ from \eqref{eq:local-sgd}. Let $r_{t,s}$ be the number of synchronization steps between $s-1$ and $t$, satisfying $\left\lfloor\frac{t-s}{\tau}\right\rfloor+1 \geq r_{t, s} \geq\left\lfloor\frac{t-s}{\tau}\right\rfloor$. Further note that $\bC^{r_{t,s}}=\prod_{j=s}^t C_j$.  From \eqref{eq:local-sgd} and \eqref{eq:coupled-dfl}, it is easy to see that
    \begin{align} \label{eq:theta-shift-expansion}
        (\bT_t -\shift{\bT}{t})(I-J) = -\sum_{s=i}^t \eta_s (\bG_s - \shift{\bG}{s})(\bC^{r_{t,s}} - J),
    \end{align}
    where we have repeatedly used the fact that $\bC\bo=\bo$. Moreover, it also holds that 
    \begin{align} \label{eq:eigen-value-bound}
        \left\|\bC^{r_{t, s}}-J\right\|_2 \leq\left(\rho^{\frac{1}{\tau}}\right)^{\max \{t-s-(\tau-2), 0\}}=1_{\{t-s<\tau-1\}}+1_{\{t-s \geq \tau-1\}} \tilde{\rho}^{t-s-(\tau-1)} := \kappa_{\rho, \tau}(t,s),
    \end{align}
    where $\tilde{\rho}=\rho^{1/\tau}$. Equation \ref{eq:eigen-value-bound} also appears as (S7) in \cite{songxichen2024}. In view of \eqref{eq:eigen-value-bound}, one can expand \eqref{eq:theta-shift-expansion} as follows:
    \allowdisplaybreaks
    \begin{align}
        &\IE[|\sum_{s=i}^t \eta_s (\bG_s - \shift{\bG}{s})(\bC^{r_{t,s}} - J)|_F^2] \nonumber\\
        \leq & \sum_{s=i}^t  \kappa_{\rho, \tau}^2(t,s)\eta_s^2 \IE[|\bG_s - \shift{\bG}{s}|_F^2] \nonumber\\ &\hspace*{2cm}+ \sum_{s=i}^t \sum_{l=i, l\neq s}^t \kappa_{\rho, \tau}(t,s)\kappa_{\rho, \tau}(t,l)\eta_s\eta_l \IE\Big[\Tr\big[(\bG_s - \shift{\bG}{s})^\top (\bG_l - \shift{\bG}{l})\big]\Big]\label{eq: useful for fourth moemnt}\\
        \leq &\sum_{s=i}^t  \kappa_{\rho, \tau}^2(t,s)\eta_s^2 \IE[|\bG_s - \shift{\bG}{s}|_F^2] \nonumber\\ &\hspace*{2cm}+ \sum_{s=i}^t \sum_{l=i, l\neq s}^t \kappa_{\rho, \tau}(t,s)\kappa_{\rho, \tau}(t,l)2^{-1}\IE\Big(\eta_s^2|\bG_s - \shift{\bG}{s}|_F^2 +  \eta_l^2|\bG_l - \shift{\bG}{l}|_F^2 \Big)\nonumber\\
        \leq & \sum_{s=i}^t  \kappa_{\rho, \tau}^2(t,s)\eta_s^2 \IE[|\bG_s - \shift{\bG}{s}|_F^2] + \sum_{s=i}^t \kappa_{\rho, \tau}(t,s) \eta_s^2\IE[|\bG_s - \shift{\bG}{s}|_F^2 ]\Big(\sum_{l=i, l\neq s}^t \kappa_{\rho, \tau}(t,l)\Big)\label{eq: intermediate-theta-shift-expansion}.
    \end{align}
    Now we are required to tackle $\IE[|\bG_s - \shift{\bG}{s}|_F^2]$. To that end, observe that for $s > i$
    \begin{align}
        \IE[|\bG_s - \shift{\bG}{s}|_F^2] = & K^2 \sum_{k=1}^K w_k^2 \IE[|\nabla f_k(\theta_{s-1}^k, \xi_s^k) - \nabla f_k(\shift{\theta}{s-1}^k, \xi_s^k)|^2] \nonumber\\
        \leq & 2b_2^2L \sum_{k=1}^K \IE[|\theta_{s-1}^k - \shift{\theta}{s-1}^k|^2] \nonumber\\
        \lesssim & \sum_{k=1}^K \IE[|\theta_{s-1}^k - Y_{s-1}|^2] + \sum_{k=1}^K \IE[|\shift{\theta}{s-1}^k - \shift{Y}{s-1}|^2] + \sum_{k=1}^K \IE[|Y_{s-1}-\shift{Y}{s-1}|^2] \nonumber \\
        = & O(\eta_s^2K) \label{eq:G-G-bound},
    \end{align}
    where \eqref{eq:G-G-bound} follows from Lemma S16 of \cite{songxichen2024} and Proposition \ref{lemma:shift-second-moment} respectively. Putting \eqref{eq:G-G-bound} back into \eqref{eq: intermediate-theta-shift-expansion}, we obtain 
    \begin{align*}
        \IE[|(\bT_t -\shift{\bT}{t})(I-J)|_F^2] \lesssim  \sum_{s=i}^t \eta_s^4 K \kappa_{\rho, \tau}^2 (t,s)
        \leq \sum_{s=i}^t \eta_s^4 \tilde{\rho}^{t-s} = O(\eta_t^4K ),
    \end{align*}
    where the last assertion uses $\int_1^n x^{-a} e^{yx} \text{d}x \lesssim n^{-a}e^{ny}$ for $a,y>0$, where $\lesssim$ is independent of $n$. This completes the proof. 
\end{proof}

\begin{proof}[Proof of Proposition \ref{prop:fourth-moment-difference}]
We can re-purpose significant portions of the proof of Lemma S16 of \cite{songxichen2024} to prove \eqref{eq:fourth-moment-difference}. Indeed, writing $\Theta_t=\sum_{s=1}^t \eta_s \bG_s C_s$, we have from the referenced proof that
\begin{align}
    \IE[|\Theta_t(I-J)|_F^4] =& \IE[|\sum_{s=1}^t\eta_s \bG_s( \bC^{r_{t,s}} - J)|_F^4] \nonumber\\
    \leq &  2\IE\bigg[\Big(\sum_{s=1}^t \eta_s^2 \kappa_{\rho, \tau}^2(t,s) |\bG_s|^2 \Big)^2\bigg] + 2\IE\bigg[\Big(\sum_{s=1}^t \kappa_{\rho, \tau}(t,s) \sum_{l=1, l\neq s}^t \kappa_{\rho, \tau}(t,l) \eta_s \eta_l |\bG_s^\top \bG_l|  \Big)^2 \bigg]\nonumber\\
    := & S_1 + S_2 .\label{eq:s1s2}
\end{align}
For $S_1$ in \eqref{eq:s1s2}, it is straightforward to obtain
\begin{align}
    S_1 \lesssim & \sum_{s=1}^t \eta_s^4 \kappa_{\rho, \tau}^4(t,s) \IE[|\bG_s|^4] + \sum_{s=1}^t \sum_{l=1, l \neq s}^t \eta_s^2\eta_l^2 \kappa_{\rho, \tau}^2(t,s)\kappa_{\rho, \tau}^2(t,l)\IE[|\bG_s|^2|G_l|^2] \nonumber\\
    \lesssim &\sum_{s=1}^t \eta_s^4 \kappa_{\rho, \tau}^4(t,s) \IE[|\bG_s|^4] + \sum_{s=1}^t \kappa_{\rho, \tau}^2(t,s) \eta_s^4 \IE[|\bG_s|^4] \max_s\sum_{l=1, l\neq s}^t \kappa_{\rho, \tau}^2(t,l) \label{eq:AM-GM-on-S1}\\
    \lesssim & \sum_{s=1}^t K^2\eta_s^4 (\kappa_{\rho, \tau}^4(t,s) + \kappa_{\rho, \tau}^2(t,s)) = O(\eta_t^4K^2) \label{eq:bound-on-S1},
\end{align}
where, in \eqref{eq:AM-GM-on-S1} we apply AM-GM inequality to derive 
\[ \eta_s^2\eta_l^2 \IE[|\bG_s|^2|\bG_l|^2] \leq \frac{\eta_s^4 \IE[|\bG_s|^4] + \eta_l^4 \IE[|\bG_l|^4]}{2} .\]
A very similar treatment yields the same bound on $S_2$, completing the proof of \eqref{eq:fourth-moment-difference}.
\end{proof}

\begin{proof}[Proof of Proposition \ref{prop:fourth-moment-original}]
    Write  
\begin{align}
    R_t:=Y_t - \minparam &= E_1 + E_2 + E_3, \text{ where},\nonumber\\
    E_1 & = R_{t-1} - \eta_t\nabla F(Y_{t-1}) , \nonumber\\
    E_2 &= \eta_t \sum_{k=1}^K w_k (\nabla F_k(Y_{t-1}) - \nabla F_k(\theta_{t-1}^k)),  \text{ and} \nonumber\\
    E_3 &= \eta_t \sum_{k=1}^K w_k g_k(\theta_{t-1}^k, \xi_t^k). \label{eq:foruth-moment-decomp}
\end{align}
Note that trivially, Assumptions \ref{ass:sc} and \ref{ass:Lipschitz} imply that 
\begin{align}
    \IE[|E_1|^4] \leq (1- \eta_t c)\IE[|R_{t-1}|^4] \label{eq:E1}.
\end{align}
Moving on, for $E_2$ we proceed as follows:
\begin{align}
    \IE[|E_2|^4] = & \eta_t^4 \IE[|\sum_{k=1}^K w_k(\nabla F_k(Y_{t-1}) - \nabla F_k(\theta_{t-1}^k))|^4] \nonumber\\
    \leq & C_2\frac{\eta_t^4}{K^2} \IE\bigg[ \Big( \sum_{k=1}^K |Y_{t-1} - \theta_{t-1}^k|^2 \Big)^2 \bigg] \nonumber\\
    \leq & C_2\frac{\eta_t^4}{K^2} \IE[|\Theta_{t-1}(I-J)|_F^4] \leq C_2'{\eta_t^8} \label{eq:E2},
\end{align}
for some constants $C_2, C_2'>0$, where the final assertion is drawn from Proposition \ref{prop:fourth-moment-difference}. Finally, for $E_3$, we obtain,
\begin{align}
    \IE[|E_3|^4] = & \eta_t^4 \IE[|\sum_{k=1}^K w_k g_k(\theta_{t-1}^k, \xi_t^k)|^4] \leq  C_3 \frac{\eta_t^4}{K^2} \label{eq:E3},
\end{align}
 for some constant $C_3>0$, where we have used the fact that $g_k(\theta_{t-1}^k, \xi_t^k)$ are mean-zero and independent random vectors conditional on $\mathcal{F}_t$. Now, we will leverage \eqref{eq:E1}-\eqref{eq:E3} to develop bounds on the cross-product terms. In particular,
\begin{align}
    \IE[|E_1|^2 |E_2|^2] \leq & \sqrt{\IE[|E_1|^4]} \sqrt{\IE[|E_l|^4]} \nonumber\\
    \leq  & C_3 \eta_t^4 \sqrt{\IE[|E_1|^4]} \nonumber\\
    \leq & C_3 \min\{\eta_t \varepsilon \IE[|E_1|^4] + (4\varepsilon)^{-1} \eta_t^7 , \  \eta_t^2 \varepsilon \IE[|E_1|^4] + (4\varepsilon)^{-1}  \eta_t^6\}, \label{eq:E1-cross-prods-1}
\end{align}
and similarly
\begin{align}
    \IE[|E_1|^2 |E_3|^2] \leq & C_3 \min\{\eta_t \varepsilon \IE[|E_1|^4] + (4\varepsilon)^{-1} \frac{\eta_t^3}{K^2} , \  \eta_t^2 \varepsilon \IE[|E_1|^4] + (4\varepsilon)^{-1} \frac{\eta_t^2}{K^2} \}, \label{eq:E1-cross-prods-2}
\end{align}
where the final assertions follow from Young's inequality. Here, $\varepsilon$ is chosen to be small enough, however it remains a constant; the explicit choice of $\varepsilon$ will be indicated towards the end of the proof, when we collect terms to establish the recursion. Note that, quite trivially, from \eqref{eq:E2} and \eqref{eq:E3}, one has
\begin{align}
    \IE[|E_2|^2 |E_3|^2] \leq C_4\frac{\eta_t^6}{K} \text{ for some constant $C_4>0$}. \label{eq:E2-E3}
\end{align}
Rest of the cross-products are strictly dominated by some combinations of the terms analyzed till now. For example, for $l,r, q \in \{1,2,3\}$, Cauchy-Schwarz and AM-GM inequalities implies that
\begin{align}
  \IE[(E_l^\top E_q)^2] \leq & \IE[|E_l|^2 |E_q|^2], \nonumber\\
    \IE[|E_l|^2 (E_r^\top E_q) ] \leq & \sqrt{\IE[|E_l|^4]} \sqrt{\IE[(E_r^\top E_q)^2]}, \nonumber\\
    \IE[|(E_l^\top E_r) (E_l^\top E_q)|] \leq & 2^{-1}(\IE[(E_l^\top E_r)^2] + \IE[(E_l^\top E_q)^2]) \label{eq:rest-of-cross-prod}.
\end{align}
A careful collection of terms from \eqref{eq:E1}-\eqref{eq:rest-of-cross-prod} yields
\begin{align}
    \IE[|R|_t^4]\leq (1-\eta_tc) (1+ \eta_t C_0 \varepsilon)\IE[|R_{t-1}|^4] + O(\frac{\eta_t^3}{K^2} + \eta_t^5),\label{eq:final R_t}
\end{align}
where $C_0$ is a large constant depending upon $C_2, C_3$ and $C_4$. Now, choose $\varepsilon>0$ so that $C_0\varepsilon < c/2$, upon which we immediately obtain $(1-\eta_tc) (1+ \eta_t C_0 \varepsilon)< 1- \eta_t c/2$. Therefore, \eqref{eq:final R_t} immediately yields \eqref{eq:fourth-moment-original}. 
\end{proof}

\begin{proof}[Proof of Proposition \ref{prop:shift-fourth-moment}]
    Recall \eqref{eq: Y_t_shift_expansion}. Clearly, for $t=i$, the result is trivial. For $t>i$, we leverage \eqref{eqY_t:T1-T2-T3}. A proof very similar to \eqref{eq:fourth-moment-original}, which uses a similar decomposition \eqref{eq:foruth-moment-decomp}, can then be followed. The crucial term is $\IE[|T_1|^2|T_3|^2]$, which is computed below. Note that 
    \begin{align}
        \IE[|T_3|^4] \leq & \frac{\eta_t^4}{K}\IE\bigg[\sum_{k=1}^K|g_k(\theta_{t-1}^k , \xi_t^k) - g_k(\shift{\theta}{t-1}^k, \xi_t^k)\big|^4 \bigg] \nonumber\\
        \leq & L'\frac{\eta_t^4}{K}\IE\big[\sum_{k=1}^K |\theta_{t-1}^k- \shift{\theta}{t-1}^k|^4 \big]\nonumber\\
        \leq & 27L' \frac{\eta_t^4}{K}\IE\Big[ \sum_{k=1}^K |\theta_{t-1}^k- Y_{t-1}|^4 + K |Y_{t-1} -\shift{Y}{t-1}|^4+ \sum_{k=1}^K |\shift{\theta}{t-1}- \shift{Y}{t-1}|^4\Big]\nonumber\\
        \lesssim &  \frac{\eta_t^4}{K} \IE[K^{-1}|\bT_{t-1}(I-J)|_F^4 + K^{-1}|\shift{\bT}{t-1}(I-J)|_F^4 + K |Y_{t-1}-\shift{Y}{t-1}|^4] \nonumber\\
        \lesssim &  \eta_t^4\IE[|Y_{t-1}- \shift{Y}{t-1}|^4] + O(\eta_t^8). \label{eq:T3-4th moment}
    \end{align}
    Therefore, from \eqref{eq:T3-4th moment}, it follows
    \begin{align}
        \IE[|T_1|^2|T_3|^2] \leq & \sqrt{\IE[|T_1|^4]}\sqrt{\IE[|T_3|^4]}\nonumber\\
        \leq & \sqrt{\IE[|T_1|^4]} \big(\sqrt{L}{\eta_t^2}\sqrt{\IE[|Y_{t-1} - \shift{Y}{t-1}|^4]} + O({\eta_t^4})\big) \nonumber\\
        \lesssim & {\eta_t^2} \IE[|Y_{t-1}- \shift{Y}{t-1}|^4] + {\eta_t^4}\sqrt{\IE[|T_1|^4]} \nonumber\\
        \lesssim & {\eta_t^2} \IE[|Y_{t-1}- \shift{Y}{t-1}|^4] + \eta_t^3 \IE[|T_1|^4] + O({\eta_t^5}).
    \end{align}
    Rest of the terms are computed similar to Proposition \ref{prop:fourth-moment-original}, and the details are omitted. The final recursion can be derived to be 
    \[ \IE[|Y_t - \shift{Y}{t}|^4] \leq (1- \eta_tc)\IE[|Y_{t-1} - \shift{Y}{t-1}|^4] + O({\eta_t^5}) \text{ for some small constant $c>0$,} \]
which immediately yields \eqref{eq:shift-fourth-moment}.
\end{proof}

\section{Additional Simulations} \label{se:app-simu}

\subsection{Effect of $n$ and $K$ on the Berry-Esseen rate}  \label{se:simu-main-1}
In this subsection, we empirically investigate the behavior of the Berry-Esseen error $d_{\mathrm{C}}(\sqrt{n}(\bar{Y}_n - \minparam), Z)$ for $Z \sim N(0, \Sigma_n)$ with varying choices of the number of iterations $N$ and the number of clients $K$. If the bound \eqref{eq:berry-esseen} is sharp, we expect the Berry-Esseen error to decay with increasing $N$, and increase with an increasing number of clients. Since the distance $d_{\mathrm{C}}$ involves taking a supremum over all convex sets, which is computationally infeasible, we restrict ourselves to the following measure of the approximation error:
\[ \tilde{d}_c=\sup_{x \in [0,c]}\big|\IP(|\sqrt{n}\Sigma_n^{-1/2}(\bar{Y}_n - \minparam)|\leq x) - \IP(|Z|\leq x)\big|, \ Z\sim N(0, I). \]
For large enough $c>0$, we expect $\tilde{d}_c$ to be a reasonable proxy for $d_{\mathrm{C}}$. For our numerical exercises to quantify $\tilde{d}_c$, we analyze the output $\bar{Y}_n$ of the \DFL\ algorithm under a federated random effects model, hereafter denoted as \model. We describe the set-up below.
\subsubsection{\model\ formulation}\label{se:randeffmodel}
Consider a positive definite matrix $\Gamma \in \R^{d \times d}$ and $\beta_0\in \R^d$, and let $\mathcal{D}^K :=\{\beta_1, \ldots, \beta_K\} \overset{\text{i.i.d.}}{\sim}N_d(\beta_0, \Gamma)$. Moreover, consider $\Sigma^K:=\{\sigma_1^2, \ldots, \sigma_K^2\} \subset \R^K_{>0}$. For $k\in [K]$ and at $t\in [n]$-th iteration, suppose that the $k$-th client has access to data $(y_{tk}, x_{tk})\in \R \times \R^d$ generated from the linear model
$y_{tk} \sim N(x_{tk}^\top \beta_k, \sigma_k^2)$. If the weights are chosen such that $w_1= \ldots= w_K=K^{-1}$, then clearly $\minparam=\sum_{k=1}^K w_k \beta_k \to \beta_0$ as $K\to \infty$. Therefore, \DFL\ can be employed, and we expect $\bar{Y}_n$ to consistently estimate $\beta_0$ as $n$ and $K$ grow. This model highlights the need for information-sharing across client, since unless $\Gamma=0$, the output of local vanilla SGD for any particular client is inconsistent for $\beta_0$.

\noindent For the purpose of the numerical exercises in this section, we choose $d=2$ and $\beta_0=(2,-3)^\top$, and let $\Gamma=\gamma I$ with $\gamma\geq 0$. In particular, $\gamma=0$ corresponds to a fixed effect $\beta_0$ from which each client generates their observations. For each $K$, we generate $\Sigma^K$ uniformly from the set $\{1,\ldots,5\}$, $\mathcal{D}^K$ from the specification above, and keep them fixed throughout the corresponding experiments as $n$ varies. The underlying connection matrix $\mathrm{C}$ is taken as $\mathrm{C}_{ij}=\frac{1}{3}I\{|i-j|\leq 1\}$, $i,j\in [K]$. In other words, every client is connected to only its two immediate neighbors. 
\subsubsection{$\tilde{d}_c$ versus $n$ and $K$}
In this set, we analyze the behavior of $\tilde{d}_c$ versus $n$ for different choices of $K, \tau,$ and $\gamma$. In particular, we aim to verify the Berry-Esseen error rate of $n^{1/2-\beta}\sqrt{K}$ from Theorems \ref{thm: berry-esseen} and \ref{thm:berry-esseen suboptimal}. Let $\gamma=1$. Consider the following two separate settings corresponding to $n, K,$ and $\tau$. 
\begin{itemize}[noitemsep,topsep=0pt,parsep=0pt,partopsep=0pt]
    \item \textbf{Setting 1.} Let $K=10$, and vary $n\in \{100, 200, 300, 400, 500\}$, and $\tau \in \{10, 15, 20\}$. For each pair of $(n, \tau)$, we plot $\tilde{d}_c$ against $n$.
    \item \textbf{Setting 2.} fix $n=300$, and vary $K \in \{20,40,60,80,100\}$, and $\tau\in\{10,15,20\}$. For each pair of $(K, \tau)$, we plot $\tilde{d}_c$ against $K$.
\end{itemize}
 We provide the practical details behind empirically estimating $\tilde{d}_c$. For each of the experimental settings described above, we generate $(y_{tk}, x_{tk})\in \R \times \R^d, k\in [K], t \in [n]$ from the \model\ specification described above, and run the \DFL\ algorithm with step size $\eta_t=0.3t^{-0.75}$ for $t\in [n]$. For the large choice of $c=100$, $\tilde{d}_c$ is empirically estimated by $n_{\text{sim}}=1000$ many independent Monte-Carlo repetitions of our experiments.

Figure \ref{fig:berry-esseen-0} allows us to draw important practical insights from the rates of Theorems \ref{thm: berry-esseen} and \ref{prop:cov-est}. Firstly, from the Settings 1 and 3, the synchronization parameter $\tau$ does not seem to have a significant effect on the behavior of $\tilde{d}_c$. Moreover, Figure \ref{fig:berry-esseen-0}(left) seems to corroborate well with the conclusion of Theorem \ref{thm: berry-esseen}, with $\tilde{d}_c$ decaying with $n$ for a fixed $K$. On the other hand, for Setting 2, Figure \ref{fig:berry-esseen-0}(right) seems to point towards a trade-off in terms of $K$ for fixed $n$. This particular behavior becomes clearer as we recall \eqref{eq:berry-esseen}. For fixed $n=300$, the initial decay of $\tilde{d}_c$ (and by extension, $d_{\mathrm{C}}$) with increasing $K$, is caused by the $n^{-\beta/2}K^{-1/2}$ term. However, as $K$ increases, the term $n^{1-\beta/2}\sqrt{K}$ starts to dominate, leading the error $\tilde{d}_c$ to increase with increasing $K$. This numerical exercise establish the sharpness of our upper-bound \eqref{eq:berry-esseen}, complementing the discussions in Remark \ref{rem:2.1}.  

To investigate the behavior of $\tilde{d}_c$ further, we also consider the case $\gamma=5$. In Figure \ref{fig:berry-esseen-5}, due to the increased heterogeneity across $\beta_k$, the effect of synchronization becomes more pronounced; for the same values of $n,K, \tau$, the $\tilde{d}_c$ values are much lesser compared to that in Figure \ref{fig:berry-esseen-0}. In particular, in Figure \ref{fig:berry-esseen-5}(right), the inflection point in $K$ beyond which $n^{1/2-\beta}\sqrt{K}$ starts to dominate, has shifted to the right. This is understandable, since increased variability among $\beta_k$ means a greater reward for sharing information, and thus the effect of increasing the clients leads to lowering the error $\tilde{d}_c$ for a longer regime, before the asymptotics of $n^{1/2-\beta}\sqrt{K}$ eventually kicks in.

\subsection{Computation-communication trade-off}\label{se:main-simu-2}
In this section, we numerically investigate the computation-communication trade-off hinted at in  Remark \ref{rem:phase-transition}. There, we noted that if $K \asymp n^c$ for $c>1/2$, then, based on our upper bounds, we argued that for no $\beta \in (1/2, 1)$ does $d_{\mathrm{C}}$ converge to $0$. In particular, this observation is trivial for $\beta \in (1/2, 1/2+c/2]$ since the central limit theory itself fail to help in view of violation of $K \gtrsim n^{2\beta-1}$. Of particular interest is the range $\beta \in (1/2+c/2, 1)$, where, as per Theorem 3 of \cite{songxichen2024}, central limit theory continues to hold, but \eqref{eq:final-berryesseen} suggests that the upper bound to $d_{C}$ is no longer $o(1)$. 

\noindent
To explore this phenomena through numerical examples, we invoke \model for $\gamma=0$, and let $n\in \{100,200,300,400, 500\}$, and $K=\lfloor n^r \rfloor$ for $r\in\{0.2, 0.6\}$. In conjunction with Theorem \ref{prop:cov-est}, we consider the following error:
\[ {d}_c^\dagger=\sup_{x \in [0,c]}\big|\IP(|\sqrt{n}\Sigma^{-1/2}(\bar{Y}_n - \minparam)|\leq x) - \IP(|Z|\leq x)\big|, \]
where $\Sigma=A^{-1}S A^{-\top}$.
Moreover, we consider the \DFL\ algorithm with $\tau=5$, and $\eta_t=0.5 t^{-\beta}$.  In light of $1/2+r/2 \in \{0.6,0.8 \}$, we ensure the validity of central limit theory by letting $\beta \in \{0.85, 0.9, 0.95\}$. Finally, for each value of $r$, we plot $\Tilde{d}_c$ against $n$ for the different choices of $\beta$. For $r=0.2$ and $r=0.6$, the evident decreasing and increasing trends of $\tilde{d}_c$ in Figure \ref{fig:phase-transition} respectively, vindicate not only the sharpness of our Berry-Esseen bounds Theorems \ref{thm: berry-esseen}-\ref{prop:cov-est}, but also clearly highlights trade-off at the region $\sqrt{n} \ll K \ll n$. 

\subsection{Effect of heterogeneity}
In this section, we characterize the effect of data hetero-geneity on the Berry-Esseen errors of Theorem \ref{thm: berry-esseen}. The effect of heterogeneity can be verified empirically, by noticing that $\gamma$ plays the role of heterogeneity in the \texttt{Frand-eff} formulation in Section F.1.1 in the supplement. The corresponding simulation results can be seen in the following.

\begin{table}[H]
\centering
\begin{tabular}{|l|l|l|l|}
\hline
$\gamma$ & $n=100$ & $n=200$ & $n=300$ \\
\hline
1 & 0.222 & 0.224 & 0.154 \\
2 & 0.258 & 0.188 & 0.112 \\
3 & 0.322 & 0.322 & 0.302 \\
4 & 0.374 & 0.272 & 0.232 \\
5 & 0.542 & 0.292 & 0.302 \\
\hline
\end{tabular}
\caption{Berry-Esseen error across different levels of heterogeneity under step-size= $0.3t^{-0.75}$, number of clients $K=10$, $\tau=2$, $\mathbf{C}_{ij}=1/3 I\{|j-i|\leq 1\}$, dimension $d=2$. All the results are based on $500$ Monte-Carlo simulations.}
\end{table}

\subsection{Effect of Synchronization}

Note that, from the proof of Theorem \ref{thm: berry-esseen}, one can glean the following result that condenses the effect of synchronization in an interpretable manner.
\begin{corol}
     As long as $\tau \leq \theta n$, for some $\theta \in (0,1)$, under the assumptions of Theorem \ref{thm: berry-esseen} it follows that: 
\begin{align}
    &d_{\mathrm{C}}( \sqrt{n}(\bar{Y}_n - \theta_K^\star) , Z) \lesssim  \big(\sqrt{\tau} + \frac{1}{\sqrt{1- \rho^{1/\tau}}}\big) \bigg(\frac{1}{\sqrt{nK}}  + n^{\frac{1}{2}-\beta} \sqrt{K} + \frac{n^{-\frac{\beta}{2}}}{\sqrt{K}}\bigg).
\end{align} 
for some constant $c \geq 0$. 
\end{corol}
In particular, we allow the synchronization parameter $\tau$ to grow linearly with $n$. Clearly, as the number of local updates $\tau$ increases, the errors increase with a $\sqrt{\tau}$ rate. This have {also} been verified {empirically} in the following table. 
\begin{table}[h]
    \centering
    \begin{tabular}{|l|l|l|l|}
        \hline
       $\tau$ & $n=100$ & $n=200$ & $n=300$ \\
\hline
10  & 0.087 & 0.100 & 0.110 \\
20  & 0.118 & 0.146 & 0.155 \\
30  & 0.139 & 0.168 & 0.171 \\
40  & 0.155 & 0.183 & 0.185 \\
50  & 0.164 & 0.196 & 0.211 \\
60  & 0.176 & 0.225 & 0.220 \\
70  & 0.175 & 0.218 & 0.231 \\
80  & 0.176 & 0.230 & 0.252 \\
90  & 0.176 & 0.234 & 0.253 \\
100 & 0.175 & 0.241 & 0.261 \\
\hline 
    \end{tabular}
    \caption{Comparison of Berry-Esseen error across different values of $\tau$ under step-size= $0.3t^{-0.75}$, number of clients $K=10$, , dimension $d=2$, Connection matrix $\mathbf{C}_{ij}=1/3 I\{|j-i|\leq 1\}$.}
\end{table}

\subsection{Effect of connection matrix}
 In the following, we provide experimental results on how the Berry-Esseen errors depend on the network topology ($\rho$), where $\rho$ is the second largest eigen value of the connection graph $\mathbf{C}$. As the network topology becomes less connected ($\rho \uparrow1)$, the GA error scales as $(1- \rho^{1/\tau})^{-1/2}$. On the other hand, when $\rho=0$, the network is densest as $\mathbf{C} = K^{-1} \mathbf{1}_K \mathbf{1}_K^\top$, and the algorithm essentially becomes a centralized one. 
\begin{table}[h]
\centering
\begin{tabular}{|l|l|l|l|}
\hline
$\rho$ & $n=100$ & $n=200$ & $n=300$ \\
\hline
0.1 & 0.094 & 0.084 & 0.089 \\
0.2 & 0.106 & 0.107 & 0.104 \\
0.3 & 0.121 & 0.126 & 0.124 \\
0.4 & 0.136 & 0.143 & 0.138 \\
0.5 & 0.149 & 0.164 & 0.154 \\
0.6 & 0.165 & 0.192 & 0.175 \\
0.7 & 0.184 & 0.217 & 0.203 \\
0.8 & 0.203 & 0.245 & 0.250 \\
0.9 & 0.212 & 0.275 & 0.294 \\
\hline
\end{tabular}
\caption{Comparison of Berry-Esseen error across different values of $\rho$ under step-size= $0.3t^{-0.75}$, number of clients $K=10$, , dimension $d=2$, synchronization parameter $\tau=10$. For each $\rho$, $\mathbf{C}= \rho I_K + (1-\rho)K^{-1}\mathbf{1}\mathbf{1}_K$.}
\end{table}

\subsection{Performance of the time-uniform Gaussian approximations}\label{se:main-simu-3}
This section devotes itself to numerical studies to validate the efficacy of the Gaussian approximations \ag and \cg, discussed in Section \ref{se:kmt}. Consider the quantities
\[ U_n=\max_{1\leq t\leq n}|\sum_{s=1}^t(Y_s - \minparam)|, \ U_n^{\ag} = \max_{1\leq t\leq n}|\sum_{s=1}^t Y_{s,1}^G|, \text{ and } U_n^{\cg} = \max_{1\leq t\leq n}|\sum_{s=1}^t Y_{s,2}^G|, \]
where $Y_{s,1}^G$ and $Y_{s,2}^G$ are defined as in Theorem \ref{thm: kmt_dfl}.
Moreover, we also consider the Brownian motion approximation by functional central limit theorem as another competitor, and as such, consider
\[ U_n^{\texttt{f-CLT}} = \max_{1\leq t\leq n}|\sum_{s=1}^t Z_s|, \ Z_1, \ldots, Z_n \overset{i.i.d.}{\sim} N(0, \Sigma), \]
where $\Sigma$ is as in Section \ref{se:sigma}. In order to compare the distributions of $U_n^{\texttt{Aggr-GA}}$, $U_n^{\texttt{Client-GA}}$ and $U_n^{\texttt{f-CLT}}$ to that of $U_n$, we resort to Q-Q plots. Fix $N=500$, $\tau=20$, and let $K\in\{10,25,50\}$. For each triplet of of $(N, K, \gamma)$, we simulate $n_{\text{sim}}=500$ parallel independent \DFL\ chains with step-sizes $\eta_t=0.7t^{-0.85}$, and observations from the \model\ model in order to empirically simulate $U_n$. Concurrently, we also simulate $n_{\text{sim}}$ independent observations from the distributions of $U_n^{\texttt{Aggr-GA}}$, $U_n^{\texttt{Client-GA}}$ and $U_n^{\texttt{f-CLT}}$ by running the corresponding chains in parallel. The QQ-plots are shown in Figure \ref{fig:qq-0}.

The sub-optimality of the functional CLT as a time-uniform Gaussian approximation to $\{Y_t\}$ is empirically evident from the QQ-plots. Both our proposals \ag\ and \cg\ uniformly dominate the approximation via Brownian motion across different settings. Moreover, as $K$ increases from left panel to the right, \ag\ out-performs \cg. This in line with Proposition \ref{prop:rem-kmt} (i) and (ii) underpinning the sharper approximation rate for \ag. However, we must recall that \cg\ requires local covariance estimation for each client, thus protecting the privacy of the federated setting.

\begin{figure}[H]
  \centering
  \begin{minipage}[t]{0.45\textwidth}
    \centering
    \includegraphics[width=\textwidth]{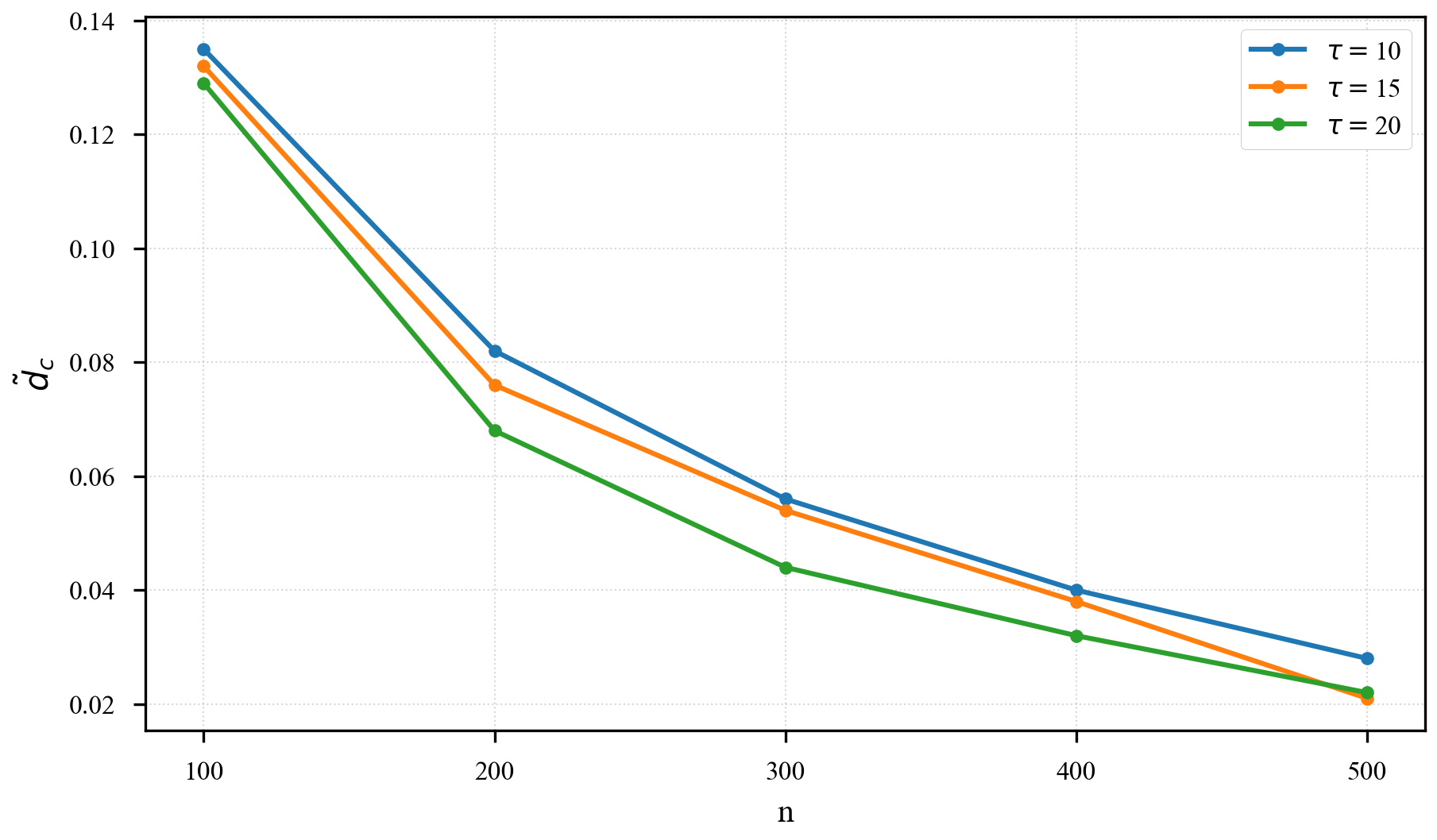}
  \end{minipage}
  \hfill
  \begin{minipage}[t]{0.45\textwidth}
    \centering
    \includegraphics[width=\textwidth]{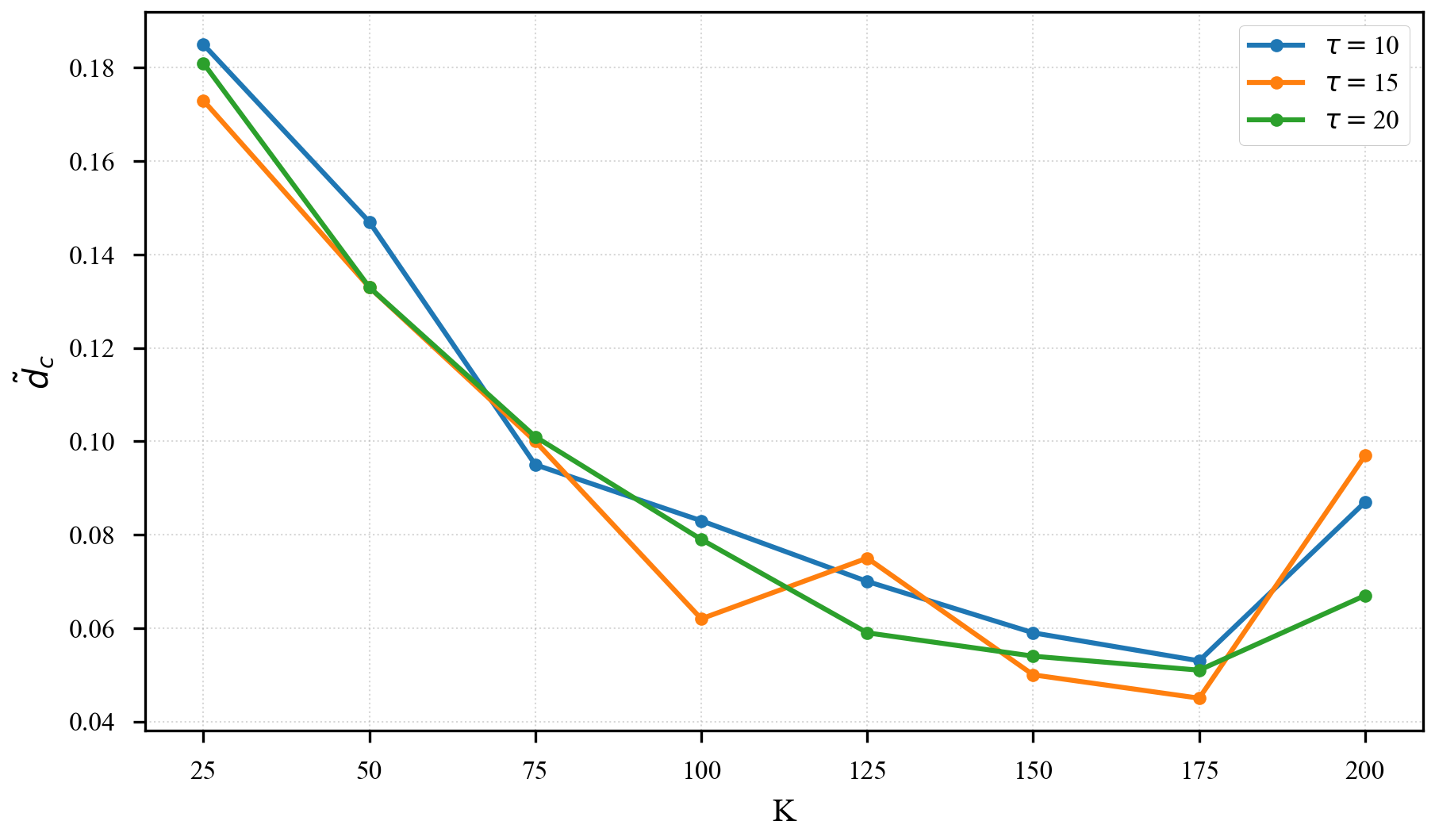} 
  \end{minipage}
  \caption{Plot of $\tilde{d}_c$ against $n$ and $K$ for $\gamma=5$, and Settings 1(left), and 2(right).}
  \label{fig:berry-esseen-5}
\end{figure}

\subsection{Ablation Studies}
In the following, we further investigate the affect of heterogeneity and synchronization for the time-uniform Gaussian approximations. In particular, we carry out two ablation studies for the parameters $\tau$ and $\gamma$ in the \texttt{FRand-eff} formulation.
In the first experiment, we fix $N=500, K=15, d=2, \mathbf{C}_{ij}=1/3 I\{|j-i|\leq 1\}$ and  $\gamma=1$, and vary $\tau=5, 50, 100$. For each particular setting, we report the following quantities:
\[ Q_{\texttt{f-CLT}} = \max_{\alpha \in (0,1)} \frac{|q_{1-\alpha}(U_n) - q_{1-\alpha}(U_n^{\texttt{f-CLT}})|}{q_{1-\alpha}(U_n)} , \ Q_{\texttt{Aggr-GA}} = \max_{\alpha \in (0,1)} \frac{|q_{1-\alpha}(U_n) - q_{1-\alpha}(U_n^{\texttt{Aggr-GA}})|}{q_{1-\alpha}(U_n)}, \ \text{and,} \]
$Q_{\texttt{Client-GA}} = \max_{\alpha \in (0,1)} \frac{|q_{1-\alpha}(U_n) - q_{1-\alpha}(U_n^{\texttt{Client-GA}})|}{q_{1-\alpha}(U_n)}$. The following table summarizes the results.  
\begin{table}[H]
\centering
\begin{tabular}{|l|l|l|l|}
\hline
$\tau$ & $Q_{\texttt{f-CLT}}$ & $Q_{\texttt{Aggr-GA}}$ & $Q_{\texttt{Client-GA}}$ \\\hline

5     & 1.495                & 0.214                                    & 0.327   \\\hline                
10     & 2.009                & 0.44                                     & 0.53                     \\\hline
15      & 2.476                & 0.663                                    & 0.883                    \\\hline
\end{tabular}
\end{table}
Clearly, as the number of local steps $\tau$ increase, the efficacy of each Gaussian approximation worsens; however, the two Gaussian approximations proposed, \texttt{Aggr-Ga} and \texttt{Client-GA} consistently outperforms a functional-CLT based approach, mirroring our results from Section \ref{se:mini-simu-3}. Moreover, \texttt{Aggr-GA} consistently provides the sharpest approximation, vindicating the theory outlined in Section 3. 

Moreover, we also fix $N=500, K=15, \tau=20$, and vary the heterogeneity parameter $\gamma=1, 5, 10$. The results are as follows. 
\begin{table}[H]
\centering
\begin{tabular}{|l|l|l|l|}
\hline
$\gamma$ & $Q_{\texttt{f-CLT}}$ & $Q_{\texttt{Aggr-GA}}$ & $Q_{\texttt{Client-GA}}$ \\ \hline
1        & 1.047                & 0.237                  & 0.275                    \\ \hline
5        & 2.047                & 0.605                  & 0.646                    \\ \hline
10       & 2.805                & 0.547                  & 0.782                    \\ \hline
\end{tabular}
\end{table}
We again see the worsening performance of the Gaussian approximations with increasing heterogeneity.

\section{Experiments on attack instance detection via time-uniform approximations} \label{se:attack-algo}

\subsection{Attack instance detection} \label{se:attack-detection}
To round off our discussion in Section \ref{se:kmt-motivation}, one can exploit the time-uniform approximation guarantees of Theorems \ref{thm: kmt_dfl} and \ref{thm:kmt-client-ga} to propose valid, Gaussian bootstrap-based algorithms for attack instance detection. For convenience, we only state an algorithm based on Theorem \ref{thm: kmt_dfl}; a corresponding algorithm based on Theorem \ref{thm:kmt-client-ga} can be likewise constructed.

 \begin{algorithm}[H]\caption{\texttt{Time-uniform Gaussian bootstrap}} \label{algo:attack}
    \textbf{Input: } Initializations $\mathbf{\Theta}_0=(\theta_0^1,\ldots, \theta_0^K)\in \mathbb{R}^{d \times K}$; Connection matrix $\mathbf{C}$; Synchronization parameter $\tau$; Loss functions $f_k(\cdot, \xi^k), \xi^k\sim \mathcal{P}_k, k \in [K]$, weights $\{w_k\}_{k=1}^K$, number of iterations $n$, step-size schedules $\{\eta_{t}\}_{t=1}^n$, Hessian $A$; number of bootstrap samples $B$; covariance matrix $\mathcal{V}_K$. 
\begin{itemize}[noitemsep,topsep=0pt]
  \item Let 
    \(E_{\tau}=\{\tau,2\tau,\dots,L\tau\}\), where 
    \(L=\big\lfloor\tfrac{n}{\tau}\big\rfloor\). Initialize $t=1$. Stopping time $\hat{T}_0 =1$, estimated attack instance $\hat{s}_0=+\inf$.
\item 
While $t\leq n$:
    \begin{enumerate}
   \item Store the local SGD iterates $Y_t$, and calculate $R_t = \max_{1\leq s \leq t}s|\bar{Y}_s -\bar{Y}_t|$ and $s_t = \arg\max_{1\leq s \leq t}s|\bar{Y}_s -\bar{Y}_t|$.
     \item For $B=1,\ldots, B$:
     
     Draw $Z_t^{(b)}\sim N(0, \mathcal{V}_K)$, and do $ Y_{t,1}^{G, (b)} = (I- \eta_t A)Y_{t-1, 1}^{G, (b)}+ \eta_t Z_t^{(b)}K^{-1/2},  Y_{0,1}^{G, (b)}= \mathbf{0}$. Calculate $R_t^{G, (b)}= \max_{1\leq s \leq t}s |\bar{Y}_{s,1}^{G, (b)} - \bar{Y}_{t,1}^{G, (b)}|$
     
    \end{enumerate}
    \item $\hat{q}_{1-\alpha} \leftarrow $ sample quantile$(\{R_t^{G,(b)}\})$.  
    \item \textbf{Thresholding}: If $R_t > \hat{q}_{1-\alpha} + c \sqrt{n}$: 
     
     \hspace{1cm} $\hat{T}_0 \leftarrow t$, $\hat{s}_0 \leftarrow s_{\hat{T}_0} $. Stop.\\
     Else $t+=1$.
  \end{itemize}
   \textbf{Output:} $\hat{T}_0I\{\hat{T}_0 < n\}$, $s_{\hat{T}_0}$.
\end{algorithm}
We remark that Algorithm \ref{algo:attack} is directly motivated from \eqref{eq:algo}; it not only detects the attack instance, but detects it as soon as possible in a sequential manner. We provide some numerical experiments validating this algorithm in Section \ref{se:attack-simu}.

\subsection{Numerical experiments on attack instance detection} \label{se:attack-simu}

For a corresponding numerical validation, we consider the \texttt{Frand-eff} model in Section \ref{se:app-simu}, and consider an attack at time point $t_0 = T/2$ for $K_0=K/2$ many clients, where their corresponding parameters $\beta_k$ change to $\beta_k' = \beta_k + \mu$. We take $T=500$, $K=10$, $\tau=20$ and for each setting, the above algorithm is run for $B=500$ bootstrap samples. The empirical power of the described algorithm is reported below, based on $500$ Monte-carlo simulations.

\begin{table}[H]
\centering
\resizebox{\textwidth}{!}{%
\begin{tabular}{|l|l|l|l|}
\hline
$\mu$ & Probability of detection & Attack instance (mean, $95\%$ CI) & Stopping time $\hat{T}_0$ (mean, $95\%$ CI) \\ \hline
0 (No attack) & 0.046 (False positive) & -- & -- \\ \hline
0.5 & 0.172 & 201.233, (58.75, 290.875) & 400.67, (141.25, 496.875) \\ \hline
1 & 0.966 & 265.203, (144.05, 309) & 412.49, (345.05, 482) \\ \hline
1.5 & 1 & 255.772, (155.9, 310.525) & 389.61, (292.475, 470.525) \\ \hline
2 & 1 & 249.672, (118.85, 286.575) & 356.32, (311.475, 397.525) \\ \hline
2.5 & 1 & 247.098, (113.8, 281) & 343.39, (294.95, 379.525) \\ \hline
3 & 1 & 249.572, (94.275, 276) & 334.57, (282.95, 367) \\ \hline
\end{tabular}%
}
\caption{Simulation results of attack detection.}
\end{table}
Clearly, the higher the severity of the attack ($\mu$ being large), the more probable it is to be detected, and the quicker it gets detected. Moreover, the estimated attack time also stabilizes around the correct attack instance. Finally, we note that the algorithm can be modified to perform the sequential test only at the synchronization steps, instead of testing for all $t$. 

\subsubsection{Experiments based on MNIST dataset}
 As a further application of Algorithm \ref{algo:attack}, we work on a federated learning (FL) setup with $K = 5$ clients collaboratively training a linear classifier on \textbf{{MNIST}} data. Let each image be $x_i \in \mathbb{R}^{28\times 28}$. To get rid of high-dimensionality, a PCA transform $P: \mathbb{R}^{28\times 28} \rightarrow \mathbb{R}^d$ with $d = 3$ is fitted on the full training set $z_i = P(x_i) \in \mathbb{R}^3$. At time point $t$, each client $k \in [K]$ sequentially receives $\xi_t^k=(y_t^k, z_t^k)$'s where $y_t^k$ are  corresponding digit labels. Following the notation of the paper, the loss function of client $k$ is the \textit{cross-entropy loss}:
\[f_k(W, b; \xi_t^k) = - y_t^k \log(\hat{y}_t^k) - (1 - y_t^k) \log(1 - \hat{y}_t^k), \ \hat{y}_t^k = \sigma(W z_t^k + b),\]
where $\sigma$ is the sigmoid/logit function, and $W \in \mathbb{R}^{10 \times 3}$ and $b \in \mathbb{R}^{10}$, so the parameter vector $\theta = \operatorname{Vec}( [W:b] ) \in \mathbb{R}^{d}$ with $d=40$. The weight parameters for the aggregated objective function are $w_k=K^{-1}, k \in [K]$, and the step-size is $\eta_t=0.3 t^{-0.75}$ in conjunction with our empirical exercises. The synchronization parameter is $\tau=5$, and we use the connection matrix from the simulation: $\mathbf{C}_{ij}=1/3 I\{|j-i|\leq 1\}$. Finally, we run the local SGD iterates for $n=200$ iterations. 

For a randomly selected set of $K_0=3$ clients, a \textit{label-flipping attack} (the label of image of digits $1, 2, 4$ are switched to $7,5,8$, and vice-versa) is injected at $t = 50$, and Algorithm \ref{algo:attack} is employed to detect this attack. The matrices $A$ and $\mathcal{V}_K$ are inputs to this algorithm, and are therefore estimated by a pre-trainining/warm-start transient phase. The results are summarized in the following table.

\begin{table}[H]
\centering
\resizebox{\textwidth}{!}{%
\begin{tabular}{|l|l|l|l|}
\hline
 & Probability of detection & Attack instance $\hat{s}_0$ (mean, $95\%$ CI) & Stopping time $\hat{T}_0$ (mean, $95\%$ CI) \\ \hline
No attack & 0.06 & -- & -- \\ \hline
Label flipping attack & 0.90 & 58.49, (31.7, 96.25) & 95.67, (89.1, 98.9) \\ \hline
\end{tabular}%
}
\caption{Simulation results for label-flipping attack detection.}
\end{table}

We note that the empirically validity under level $0.05$ is approximately maintained, and the algorithm also achieves high detection power under the label-flipping attack. We remark that the detection can be made earlier by tuning the constant $c$ in the thresholding step appropriately by eg. cross-validation, but overall this experiment shows the applicability of such Gaussian-bootstrap based algorithm beyond theoretical rates, in practical scenarios.

\end{document}